\documentclass{article}



    \usepackage[preprint]{neurips_2025}



\usepackage[utf8]{inputenc} 
\usepackage[T1]{fontenc}    
\usepackage{hyperref}       
\usepackage{url}            
\usepackage{booktabs}       
\usepackage{amsfonts}       
\usepackage{nicefrac}       
\usepackage{microtype}      
\usepackage{xcolor}         

\usepackage[utf8]{inputenc} 
\usepackage[T1]{fontenc}    
\usepackage{hyperref}       
\usepackage{url}            
\usepackage{booktabs}       
\usepackage{amsfonts}       
\usepackage{nicefrac}       
\usepackage{microtype}      
\usepackage{xcolor}         

\usepackage{graphicx} 
\usepackage{amsthm}
\usepackage{amsmath,amsfonts,amssymb,mathtools, bbm}
\usepackage[linesnumbered,ruled,vlined]{algorithm2e}
\usepackage{xcolor}
\usepackage{multirow}

\newtheorem{theorem}{Theorem}
\newtheorem{lemma}[theorem]{Lemma}

\newtheorem{remark}[theorem]{Remark}
\newtheorem{corollary}[theorem]{Corollary}

\newtheorem{defn}{Theorem}
\newtheorem{definition}[defn]{Definition}

\newcommand{\nP}{b}
\newcommand{\nU}{a}

\newcommand{\C}{\mathcal{C}}
\newcommand{\X}{\mathcal{X}}
\newcommand{\Y}{\mathcal{Y}}
\newcommand{\D}{\mathcal{D}}
\newcommand{\Nat}{\mathbb{N}}

\newcommand{\cP}{\mathcal{P}}
\newcommand{\cQ}{\mathcal{Q}}
\newcommand{\E}[1]{\mathbb{E}\left[#1\right]}
\newcommand{\cE}[2]{\mathbb{E}_{#1}\left[#2\right]}

\newcommand{\argmin}{\operatorname{argmin}}
\newcommand{\VCD}{\operatorname{VCD}}

\newcommand{\err}{\operatorname{err}}

\newcommand{\rev}{e}

\newcommand{\clabel}{l}

\newcommand{\prior}{\alpha}
\newcommand{\claw}{\mathfrak{h}}

\usepackage{thmtools}
\usepackage{thm-restate}

\bibliographystyle{apalike}

\title{Learning from positive and unlabeled examples -Finite size sample bounds}

%

\author{%
 Farnam Mansouri \\
 University of Waterloo and Vector Institute \\
  \texttt{f5mansou@uwaterloo.ca} \\
  \And
  Shai Ben-David \\
 University of Waterloo and Vector Institute\\
  \texttt{shai@uwaterloo.ca} \\
}

\begin{document}

\maketitle

\begin{abstract}
PU (Positive Unlabeled) learning is a variant of supervised classification learning in which the only labels revealed to the learner are of positively labeled instances.
PU learning arises in many real-world applications. Most existing work relies on the simplifying assumptions that the positively labeled training data is drawn from the restriction of the data generating distribution to positively labeled instances and/or that the proportion of positively labeled points (a.k.a. the class prior) is known apriori to the learner. This paper provides a theoretical analysis of the statistical complexity of PU learning under a wider range of setups. Unlike most prior work, our study does not assume that the class prior is known to the learner. 
We prove upper and lower bounds on the required sample sizes (of both the positively labeled and the unlabeled samples).





\end{abstract}

\section{Introduction}
Learning from positive and unlabeled data (PU learning) is a variant of binary classification prediction semi-supervised learning, where the training data consist only of positively labeled and unlabeled examples. PU learning arises in many applications, such as personal advertisement (where a person is labeled according to whether a given add is relevant to them. When a person responds to the add, we know they belong to the set of positive instances. However, we cannot tell the label of unresponsive customers), land cover classification \cite{li2010positive} (say, we wish to classify forest land cover from aerial images, where training data consist of unlabeled land images and forest aerial images), prediction of protein similarity \cite{elkan2008learning} and many other applications like knowledge base completion \cite{bekker2020learning}, disease-gene identification \cite{yang2012positive} and more.

Standard machine learning paradigms, such as empirical risk minimization (namely, training a classifier to minimize the miss-classification loss over the training data) or regularized risk minimization may fail badly in such settings, since their success guarantees rely on having access to labels from both classes (positive and negative labels). We are interested in finite sample size generalization guarantees. Having a weaker supervision than standard fully supervised learning, achieving generalization bounds for PU learning requires stronger assumptions. In this work, we show how some of the common assumptions used in this domain can be relaxed, while also showing some negative, impossibility results.

\textbf{Various setups for PU learning.} We consider the case in which both training samples (the positively labeled and the unlabeled examples) are generated by random processes unknown to the learner.
The learner's goal is to obtain a classifier that minimizes misclassification with respect to \emph{a target evaluation distribution} $\D$ over $\X \times \{0,1\}$ (where $\X$ is the domain set).
Training data generating setup can be viewed along two basic axes; The first is whether the positively labeled data is generated independently of the unlabeled sample (as opposed to the case where the positively labeled examples are sampled from an already sampled unlabeled set of instances). 
The other axis is the labeling mechanism through which positive labels are assigned to training examples. 


Our paper focuses on scenarios where the unlabeled sample and the positive sample are independent of each other (called \emph{control-set scenarios} by \cite{bekker2020learning}).
As an example of a control-set scenario, consider the task of predicting whether a given profile will become a user of a mobile application. For this example, the positive sample can be collected from individuals who are already users of the application, while the unlabeled sample can be drawn from a broader pool of random individuals.

Let $\D_+$ and $\D_-$ denote the conditioning of $\D$ on the label being positive or negative respectively. We consider four setups for how positive training data is generated (See Section~\ref{sec:PU-real-beyond-scar} for formal definitions):
\begin{itemize}
    \item \emph{Selected completely at random (SCAR)}  \cite{elkan2008learning}: Positive training data is drawn i.i.d. from $\D_+$.
    \item \emph{Selected at random (SAR)} \cite{bekker2019beyond}:  Positive training data are drawn i.i.d. from a distribution whose support is a subset of the support of $\D_+$. 
    \item \emph{Positive covariate shift (PCS):} Positive training data is drawn from a distribution that shares the same labeling function as $\D$ but has different marginal distributions (referred to as \emph{positive-only shift} in \cite{sakai2019covariate}).
    \item \emph{Arbitrary positive distribution shift (APDS):} Positive training data is drawn from an arbitrary distribution (generalization bounds in this case depend on measures of similarity between the two distributions). 
\end{itemize}


 Following the common terminology, \emph{realizable} setup refers to learning with respect to data distributions for which some member of the concept class has zero misclassification loss. The setup is \emph{agnostic} PU learning when no such condition is assumed.
  The \emph{class prior} is the probability of positive labels, $\alpha := \D[\{(x,1): x \in \X\}]$. In most prior work on PU learning, the class prior is assumed as prior knowledge.

\textbf{Our Contributions.}
 The main high-level contributions of this paper are as follows:
 \begin{enumerate}
\item We provide finite sample complexity bounds without relying on knowledge of the class prior $\alpha$. To the best of our knowledge, prior provable results in PU learning typically assume that $\alpha$ is known and used by the learner. The only exceptions are \cite{liu2002partially}, which provides a result limited to realizable PU learning under the SCAR setup, and \cite{lee2025learning}, whose results apply specifically to smooth distributions.
\item We provide new sample complexity upper bounds in a variety of setups, for which such bounds have not been previously proved. 
\item We prove novel lower bounds that match existing positive results for the SCAR setup.
\end{enumerate}

In more detail, our contributions are:


\begin{itemize}

    \item  \textit{Realizable PU Learning (SCAR setup).}  In Theorem~\ref{thm:real-scar-pos-lower-bound}, we provide a lower bound on the sample complexity of positive examples that nearly matches earlier upper bounds (e.g., by \cite{liu2002partially}). Moreover, in Theorem~\ref{thm:real-scar-unlabel-lower-bound}, we also provide a lower bound on the sample complexity of unlabeled examples based on a novel combinatorial parameter that we introduce, called \emph{claw number}.


   \item \textit{Realizable PU learning (SAR setup).}
   We prove the first finite sample complexity for this setup that does not require knowledge of $\alpha$ (Theorem~\ref{thm:real-sar-PU}). We then provide an almost tight lower bound on the sample complexity of positive examples in Theorem~\ref{thm:real-sar-pos-lower-bound}.


    \item \textit{Realizable PU learning (PCS setup).}
    For this setup, we introduce the first algorithm which guarantees finite sample complexity (Theorem~\ref{thm:real-dist-shift-PU}).  We then provide lower bounds on sum of sample complexity of positive and unlabeled examples (Theorem~\ref{thm:real-dist-shift-lower-bound} and Theorem~\ref{thm:real-dist-shift-lower-bound-lip}). These results highlight the differences between the PCS and the SAR setups.
    

     \item  \textit{Agnostic PU Learning (SCAR setup, when $\prior$ is known).} For this setup, in Theorem~\ref{thm:agnostic-PU-lower-bound}, we propose a lower bound on the sample complexity of both positive and unlabeled examples that nearly matches existing upper bounds established in \cite{du2015convex}.

    \item \textit{Agnostic PU Learning (SCAR setup, when $\prior$ is unknown).} 
     While without knowledge of $\prior$ or additional assumptions on the data or concept class it is impossible to find a classifier whose misclassification rate is arbitrarily close to that of the best in the class, we show in Corollary~\ref{cor:PU-agnostic-lag-pos-scar}, that a learner can always find a classifier whose misclassification rate is arbitrarily close to
    \(
    \frac{\max(\prior, 1 - \prior)}{\min(\prior, 1 - \prior)}
    \)
    times the misclassification rate of the best concept in the concept class. Moreover, in Corollary~\ref{thm:PU-agnostic-no-alpha-lowerbound}, we show that this multiplicative factor is tight. Our result also yields an improved generalization bound in scenarios where an approximation of $\prior$ is available.

    \item  \textit{Agnostic PU learning (APDS setup).} 
    For this setup, we derive the first generalization bounds with finite sample complexity (Theorem~\ref{thm:agnostic-PU-lower-bound}).

\end{itemize}

\begin{table}[h] \label{tab:summary}
\centering
\caption{Summary of all results presented in this paper (excluding those in Section~\ref{sec:PU-agnostic-upperbound}). Here, $d$ denotes the VC-dimension of the concept class $\C$ 
; $k$ is the dimensionality of the input space; $\claw$ is the claw number of $\C$; $r$ is a weight ratio between distribution of positively labeled training data and $\D_+$ ; $\gamma$ is the margin parameter ;$\pi$ is any lower bound on $\alpha$ that is available to the learner. Logarithmic factors are suppressed in this table. }
\small
\begin{tabular}{|c|c c| c|}
\hline
\multicolumn{4}{|c|}{\textbf{Bounds on the sample complexity of PU learning}} \\
\hline
\multirow{2}{*}{Realizable (SCAR)}  & 
$m^{pos}_\C(\varepsilon, \delta) = \tilde{O}\left(\frac{d}{\varepsilon}\right)$ 
  & $m^{unlabel}_\C(\varepsilon, \delta) = \tilde{O}\left(\frac{d}{\varepsilon}\right)$  
 & \cite{liu2002partially}  \\
  & 
$m^{pos}_\C(\varepsilon, \delta) =\tilde{\Omega}\left(\frac{d}{\varepsilon}\right)$ & $m^{unlabel}_\C(\varepsilon, \delta) = \tilde{\Omega}\left(\frac{\claw}{\varepsilon}\right)$   
& \textbf{ New results in this work.} \\
\hline
\multirow{2}{*}{Realizable (SAR)} & 
$m^{pos}_\C(\varepsilon, \delta) = \tilde{O}\left(\frac{d}{r\varepsilon}\right)$  & $m^{unlabel}_\C(\varepsilon, \delta) = \tilde{O}\left(\frac{d}{\varepsilon}\right)$ 
& \textbf{ New results in this work.} \\
  & $m^{pos}_\C(\varepsilon, \delta) = \tilde{\Omega}\left(\frac{d}{r\varepsilon}\right)$ &  $m^{unlabel}_\C(\varepsilon, \delta) = \tilde{\Omega}\left(\frac{\claw}{\varepsilon}\right)$  
 & \textbf{ New results in this work.} \\
\hline
\multirow{2}{*}{Realizable (PCS)} & 
 $m^{pos}_\C(\varepsilon, \delta) = \tilde{O}\left(\frac{d}{r^2\varepsilon}\right)$&  $m^{unlabel}_\C(\varepsilon, \delta) = \tilde{O}\left(\frac{\left(\frac{\sqrt{k}}{\gamma}\right)^k + \pi d}{\pi \varepsilon}\right)$ 
 & \textbf{ New results in this work.} \\
   & 
\multicolumn{2}{c|}{$m^{pos}_\C(\varepsilon, \delta) + m^{unlabel}_\C(\varepsilon, \delta) = \tilde{\Omega}(1+1 / 2\gamma)^{k/2}$ } 
& \textbf{ New results in this work.} \\
\hline
Agnostic  & 
$m^{pos}_\C(\varepsilon, \delta) = \tilde{O}\left(\frac{d}{\varepsilon^2}\right)$ & $m^{unlabel}_\C(\varepsilon, \delta) = \tilde{O}\left(\frac{d}{\varepsilon^2}\right)$ & \cite{du2015convex}\\
(SCAR, known $\prior$) & 
$m^{pos}_\C(\varepsilon, \delta) = \tilde{\Omega}\left(\frac{d}{\varepsilon^2}\right)$& $ m^{unlabel}_\C(\varepsilon, \delta) = \tilde{\Omega}\left(\frac{d}{\varepsilon^2}\right)$ 
& \textbf{ New results in this work.} \\
\hline
\end{tabular}
\label{tab:pu-sample-complexity}
\end{table}

\textbf{Related works.} We briefly survey previous theoretical studies of PU learning. 
We start with works on the SCAR setup.
For the easiest case of realizable learning, \cite{liu2002partially} describe an algorithm with finite sample complexity. 

For challenging agnostic PU learning, previously proposed learning rly on a priori knowledge of the class prior (e.g., \cite{du2015convex}). 
Consequently, much of research for PU learning for the agnostic SCAR setup is devoted to estimating that prior, $\prior,$. Reliable estimates are only possible with further assumptions on the underlying distribution and concept class.
These assumptions include (i) \emph{Separability:} non-overlapping support between the $\D_-$ and $\D_+$ \cite{elkan2008learning, du2014class}; (ii) \emph{Anchor set}: requiring a subset of the instance space defined by partial attribute assignment, to be purely positive \cite{scott2015rate, liu2015classification, pmlr-v45-Christoffel15, bekker2018estimating}; (iii) \cite{ramaswamy2016mixture} discuss a generalization of anchor set assumption and call it also separability. 
(iv) Irreducibility: $\D_-$ cannot be expressed as a linear combination of $\D_+$ and any other distribution \cite{blanchard2010semi, jain2016estimating}.
Our results do not require any of these assumptions.

Next, we consider studies of PU learning that extend beyond the SCAR setup. 
Several articles examine PU learning in the SAR setting \cite{coudray2023risk, dai2023gradpu, gong2021instance, na2020deep, bekker2019beyond, kato2019learning, he2018instance}, among which only \cite{coudray2023risk, gong2021instance, kato2019learning, he2018instance} pursues theoretical analysis (the others focus primarily on empirical evaluations). In contrast to our work, these studies assume that $\prior$ is known. Under that assumption, they provide learnability results applicable to the agnostic PU learning setting. 
\cite{kato2019learning} focuses on establishing statistical consistency rather than finite sample guarantees. \cite{he2018instance} analyze a special case of the SAR setting, referred to as the \emph{probabilistic gap assumption}.

\cite{sakai2019covariate, hammoudeh2020learning, kumar2023positive} discuss statistical consistency for different variations of PU learning. This is in contrast with our focus on finite sample size generalization bounds.
\cite{lee2025learning} studies both the sample complexity and computational complexity in a setting where the unlabeled training data is drawn from a smooth distribution that differs from the target evaluation distribution, while the positive training data is drawn from the target evaluation distribution conditioned on the label being positive.



Note that, due to space constraints, all proofs in this submission are deferred to the appendix.

\section{Setting}

We consider the following setup for learning with positive and unlabeled examples (PU learning). Let $\X$ be the domain set, $\Y = \{0, 1\}$ the labels set. We consider two distributions, a distribution $\D$ over $\X \times \Y$, and a distribution for \emph{sampling the positively labeled training data} over $\X$ denoted by $\cP$. 
Given a function $f: \X \to \mathcal Y$, define $\err_{\D}(f) := Pr_{(x, y) \sim \D} [f(x)\neq y]$ as the error of $f$ with respect to $\D$. Also, define \emph{positive distribution}, and \emph{negative distribution} respectively to be $\D_+(A) := \D(A \mid  y = 1)$ and $\D_{-}(A) := \D(A \mid  y = 0)$ for every measurable set $A \subseteq \X$. Moreover, denote $\D_{\X}$ to be the marginal distribution of $\D$ over the domain set.

A PU learner takes (i) a sample $S^U$ of size $\nU$ i.i.d. drawn from marginal distribution $\D_{\X}$, and (ii) a sample $S^P$ of size $\nP$ i.i.d. drawn from $\cP$ 
independent of $S^U$, denoted by $S^P$, and similar to classical machine learning, it aims to output a function $f: \X \to \mathcal{Y}$ which minimizes $\err_{\D}(f)$. Formally, a PU learner is a function
$$
\mathcal{A} : \X^* \times \X^* \rightarrow \{0, 1\}^\X.
$$
We now establish our framework for evaluating the success of PU learners:

\begin{definition}[PU learnability] \label{def:PU-beyond-scar}
    Let $\C$ be a concept class over domain $\X$. Moreover, let $\mathcal{W}$ be a set of pairs $(\D, \cP)$, where $\D$ is a distribution over $\X \times \Y$;  and $\cP$ is a distribution over $\X$. We say that concept class $\C$ is PU learnable over the class $\mathcal{W}$ if there exist functions
     $m^{pos}_{\C}: (0,1) \times (0,1) \to \Nat, m^{unlab}_{\C}: (0,1) \times (0,1) \to \Nat$, and a PU learner $\mathcal{A}$ such that for all $(\varepsilon, \delta) \in (0,1) \times (0,1)$ and distributions $(\D, \cP) \in \mathcal{W}$ if $\nP > m^{pos}_{\C}(\varepsilon, \delta)$ and $\nU > m^{unlab}_{\C}(\varepsilon, \delta)$, 
      we have
    $$
    \Pr_{S^P \sim \cP^{\nP}, S^U \sim \D_\X^{\nU}} \left[\err_{\D} \left( \mathcal{A}(S^P, S^U) \right) \geq \min_{c \in \C} \err_{\D}(c) +  \varepsilon\right] < \delta.
    $$
     We also say $\mathcal{A}$ \emph{PU learns} $\C$ over $\mathcal{W}$.
\end{definition} 



 \textbf{Notations:}  
Given any set $J$ and $k \in \mathbb{N}$, let $U_J$ denote the uniform distribution over $J$, and define $J^k := \left\{ (j_1, \ldots, j_k) \mid j_1, \ldots, j_k \in J \right\}$, and $[k] := \{1, \ldots, k\}$. Given a family of distributions $\D_\omega$ over $\X \times \{0, 1\}$, where $\omega$ ranges over some parameter set, we respectively denote the marginal over $\X$, the positive distribution, and the negative distribution of $\D_\omega$ by $\D_{\X, \omega}$, $\D_{+, \omega}$, and $\D_{-, \omega}$.

Let $\C$ be a concept class over $\X$. Define $\C \Delta \C := \{ c \oplus c' \mid c, c' \in \C \}$. Moreover, denote the best classifier as \(c^\star := \arg\min_{c \in \C} \err_\D(c),\) and $\min_{c \in \C} \err_\D(c)$ as the \emph{approximation error}. Furthermore, for a function $c: \X \rightarrow \{0, 1\}$ and distribution $\D$ over $\X \times \{0, 1\}$, define the \emph{false positive rate} as $\err_{\D}^+(c) := \Pr_{x \sim \D_+}[c(x) \neq 1]$, and the \emph{false negative rate} as $\err_{\D}^-(c) := \Pr_{x \sim \D_-}[c(x) \neq 0]$. Given a subset $B \subseteq \X$, define $\C \cap B := \{ c \cap B \mid c \in \C \}$. Moreover, for a multiset $S = (x_1, x_2, \dots, x_m) \in \X^*$,  define $\text{Domain}(S) := \{x \mid x \in S\}$. Define the \emph{restriction of $S$ to $B$} denoted by $S \mid B$ as the subsequence of elements $x_i \in S$ such that $x_i \in B$.

\section{Analysis of Realizable PU Learning --SCAR setup} \label{sec:PU-real-scar}

In this section, we study PU learning under the realizability assumption in the SCAR setup. It is already known that every concept class with finite VC dimension is PU learnable in this setting \cite{liu2002partially}. We begin by establishing lower bounds on the sample complexity. In particular, we provide a lower bound on the sample complexity of positive examples that nearly matches the upper bound established by \cite{liu2002partially}.
\begin{restatable}{theorem}{thmscarposlow}
    \label{thm:real-scar-pos-lower-bound}
        Let $\C$ be a concept class with VC dimension $d \geq 2$ over the domain $\X$. There exists a $M > 1$ such that for $\nP \leq M \left( \frac{d + \ln(1 / \delta)}{\varepsilon} \right)$ and every $\nU \in \Nat$, $(\varepsilon, \delta) \in (0, 1) \times (0, 1)$, and PU learner $\mathcal{A}$ there is a distribution $\D$ realized by $\C$ over $\X \times \{0, 1\}$ such that
        $
        \Pr_{S^P \sim \D_+^{\nP}, S^U \sim \D_\X^{\nU}} \left[\err_{\D}( \mathcal{A}(S^P, S^U) \geq \varepsilon\right] > \delta.
        $
\end{restatable}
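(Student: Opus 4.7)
The plan is to adapt the classical Ehrenfeucht--Haussler--Kearns--Valiant $\Omega((d + \log(1/\delta))/\varepsilon)$ VC lower bound to the PU SCAR setting. The key trick is to construct a family of realizable distributions whose $\X$-marginals coincide, so that the unlabeled sample $S^U$ is statistically useless for identifying the true concept and the entire information budget about the concept must come from $S^P$. This is exactly the feature that forces the bound to hold uniformly in $\nU$.

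First I would fix a set $\{x_1,\ldots,x_d\}$ shattered by $\C$ together with one extra ``heavy'' point $x_0$ and, for each $\sigma\in\{0,1\}^d$, pick a concept $c_\sigma\in\C$ with $c_\sigma(x_0)=1$ and $c_\sigma(x_i)=\sigma_i$ (possible by shattering, after enlarging $\X$ by one point if necessary). Define a common marginal $\D_\X$ placing mass $1-\varepsilon' d$ on $x_0$ and mass $\varepsilon'=\Theta(\varepsilon/d)$ on each $x_i$, and let $\D_\sigma$ label each point deterministically by $c_\sigma$. Since $\D_\sigma$ has the same $\X$-marginal for every $\sigma$, the unlabeled sample $S^U$ is distributed identically across $\sigma$, so without loss of generality we may condition on $S^U$ being fixed and analyze the map $S^P\mapsto \mathcal{A}(S^P,S^U)$ as a function of $S^P$ only.

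Second I would show that achieving $\err_{\D_\sigma}<\varepsilon$ forces the output hypothesis to agree with $\sigma$ on all but an $O(1)$-fraction of the shattered coordinates, since each shattered point contributes $\varepsilon'=\Theta(\varepsilon/d)$ to the error. Because $\D_+^\sigma$ places mass $\Theta(\varepsilon/d)$ on each positive point $x_i$ with $\sigma_i=1$ and mass $0$ on each $x_i$ with $\sigma_i=0$, and because $S^U$ is useless, the only way to discover that $\sigma_i=1$ is to see $x_i$ in $S^P$. Randomizing $\sigma$ uniformly over $\{0,1\}^d$ and running the standard coupon-collector/Chernoff estimate shows that for $\nP\le c\,d/\varepsilon$ with a sufficiently small constant $c$, a constant fraction of the positive coordinates are missed, which by a symmetry argument over the prior on $\sigma$ yields some $\sigma^*$ on which the learner errs by at least $\varepsilon$ with probability exceeding an absolute constant $\delta_0$. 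The $\Omega(\log(1/\delta)/\varepsilon)$ term I would obtain separately via a two-point Le Cam construction: take $c_0,c_1\in\C$ differing only at a single point $x_1$ of mass $\Theta(\varepsilon)$, both labeling $x_0$ positive, so that again the marginals agree and the only signal is whether $x_1$ appears in $S^P$; under $c_1$, $\Pr[x_1\notin S^P]=(1-\Theta(\varepsilon))^{\nP}>2\delta$ whenever $\nP\le \Omega(\log(1/\delta)/\varepsilon)$, forcing any learner to err with probability $>\delta$ on at least one of $c_0,c_1$. Gluing the two gadgets together (e.g.\ by substituting the two-point gadget for one coordinate of the shattered gadget) and taking the maximum yields the claimed $\Omega((d+\log(1/\delta))/\varepsilon)$ lower bound uniformly in $\nU$.

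The main obstacle I anticipate is keeping the $\X$-marginal exactly independent of $\sigma$ while the conditional $\D_+^\sigma$ still induces per-coordinate sampling probabilities close enough to $\varepsilon'$ to plug into the PAC-style lower bound. Fixing the marginal forces the class prior $\alpha_\sigma$ to vary with $|\sigma|$, and this variation propagates into the normalization of $\D_+^\sigma$; one has to verify that these $\sigma$-dependent normalizations perturb each conditional probability by only a $(1\pm O(\varepsilon))$ factor, which is harmless for the coupon-collector estimate. The remaining bookkeeping, namely assembling the $d/\varepsilon$ and $\log(1/\delta)/\varepsilon$ pieces into a single realizable family and amplifying the constant $\delta_0$ to the target $\delta$ through the usual averaging over the uniform prior on $\sigma$, follows the standard route and introduces no new ideas.
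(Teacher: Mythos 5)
Your proposal is correct and follows essentially the same route as the paper's proof: a heavy always-positive anchor point plus a shattered set of light points with a common $\X$-marginal (rendering $S^U$ useless), a uniform prior over labelings with a Chernoff/symmetry argument for the $d/\varepsilon$ term, and a two-point construction exploiting $(1-\varepsilon)^{\nP}>2\delta$ for the $\ln(1/\delta)/\varepsilon$ term. The normalization issue you flag is exactly what the paper handles by conditioning on the event that $|O|$ and $|\bar{S^P}|$ are of typical size.
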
 

 Next, we provide a lower bound for the sample complexity of unlabeled samples with respect to a combinatorial parameter we call \emph{claw number}. Claw number is formally defined in the following. As mentioned in Remark~\ref{remark:claw-num}, claw number is always smaller than VC dimension.

\begin{definition}
    Let $\C$ be a concept class over domain $\X$. We define \emph{claw number} of $\C$ to be the largest $\claw \in \Nat$ such that for every $m \geq \claw$, there exists a $B \subseteq \X$ with $|B| = m$ such that $\{ O \subseteq B \mid |O| = m - \claw \} \subseteq \C \mid B $. If no such $\claw$ exists, we say the claw number of $\C$ is $0$.
\end{definition}

\begin{remark} \label{remark:claw-num}
    Claw number of a class is always less than or equal to VC dimension. This is because for every $B \subseteq \X$ with $|B| \geq 2\claw$ we have $\VCD(\{ O \subseteq B \mid |O| = |B| - \claw \}) \geq \claw$.
\end{remark}


\begin{restatable}{theorem}{thmscarunlow}
    \label{thm:real-scar-unlabel-lower-bound}
    Let $\C$ be a concept class with claw number $\claw \geq 1$. There exists a $M > 1$ such that for $\nU \leq M \left( \frac{\claw + \ln(1 / \delta)}{\varepsilon} \right)$ and every $\nP \in \Nat$, $(\varepsilon, \delta) \in (0, 1) \times (0, 1)$, and PU learner $\mathcal{A}$ there is a distribution $\D$ realized by $\C$ over $\X \times \{0, 1\}$ such that 
    $
    \Pr_{S^P \sim \D_+^{\nP}, S^U \sim \D_\X^{\nU}} \left[\err_{\D}( \mathcal{A}(S^P, S^U) \geq \varepsilon\right] > \delta.
    $
\end{restatable}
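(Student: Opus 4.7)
The plan is to use the probabilistic method: exhibit a family of realizable distributions $\{\D_T\}$ indexed by a hidden $\claw$-subset $T \subseteq B$, randomize $T$ uniformly, and show that for any PU learner $\mathcal{A}$ the learner's error exceeds $\varepsilon$ with probability greater than $\delta$ on average over $T$. Yao's averaging principle then yields a specific $\D_{T^*}$ realized by $\C$ on which $\mathcal{A}$ fails.

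I would first invoke the claw property with a parameter $m$ chosen large enough that $m - \claw$ greatly exceeds $\nP/\varepsilon$; this is possible for any $\nP$ since the definition permits $m$ arbitrarily large. This gives $B \subseteq \X$ with $|B| = m$ such that every $(m - \claw)$-subset of $B$ lies in $\C \mid B$. For each $\claw$-subset $T \subseteq B$, fix $c_T \in \C$ with $c_T \cap B = B \setminus T$, and define $\D_T$ to place mass $(1-2\varepsilon)/(m-\claw)$ on each $b \in B \setminus T$ (labeled $1$) and mass $2\varepsilon/\claw$ on each $t \in T$ (labeled $0$); then $\D_{+,T}$ is uniform on $B \setminus T$ and $\D_T$ is realized by $c_T$. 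Because $m - \claw \gg \nP$, the positive sample $S^P$ covers only a vanishing fraction of $B \setminus T$, so a per-point expected-loss comparison under the uniform prior over $T$ (labeling $0$ costs $\approx (1-2\varepsilon)/m$, labeling $1$ costs $\approx 2\varepsilon/m$) shows that the Bayes-optimal learner must default to $1$ on any unobserved point; the error then reduces, up to a $B \setminus T$ contribution of order $\nU/(m-\claw)$ that is made negligible by taking $m$ large, to $(2\varepsilon/\claw)\,|T \setminus S^U|$.

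To obtain the probability bound greater than $\delta$, I would split into two regimes matching the two terms of $\claw + \ln(1/\delta)$. When $\claw \gtrsim \ln(1/\delta)$, for $\nU \leq c \claw/\varepsilon$ with $c$ a small absolute constant we have $\mathbb{E}[|T \setminus S^U|] \geq 3\claw/4$; since the indicators $\mathbb{1}[t \notin S^U]$ for $t \in T$ are negatively associated (shared multinomial budget), a Chernoff bound gives $|T \setminus S^U| \geq \claw/2$, and hence $\err_{\D_T}(f_S) \geq \varepsilon$, with probability $1 - e^{-\Omega(\claw)} \geq 1 - \delta$. When $\ln(1/\delta) \gtrsim \claw$, I would instead use the event $E = \{S^U \cap T = \emptyset\}$: each unlabeled sample misses $T$ with probability $1 - 2\varepsilon$, so $\Pr[E] = (1 - 2\varepsilon)^{\nU} \geq \delta$ whenever $\nU = O(\ln(1/\delta)/\varepsilon)$, and conditional on $E$ the samples do not distinguish $T$ from any other $\claw$-subset disjoint from $S^P \cup S^U$, so the Bayes-optimal learner still defaults to $1$ on every element of $T$ and incurs error $2\varepsilon > \varepsilon$.

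The main obstacle is ensuring that the positive sample is uninformative about $T$ regardless of how large $\nP$ is; this is what forces $m$ (and hence the support of $\D_{+,T}$) to be sized adversarially as a function of $\nP$, and is precisely why the claw property---which delivers a witness $B$ of arbitrarily large size---is the right combinatorial handle here. A secondary technical challenge is the correct $\ln(1/\delta)$ scaling, which a single Chernoff bound cannot deliver and which motivates the case split above.
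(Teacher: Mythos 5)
Your construction and overall strategy coincide with the paper's: the same family of hard distributions $\D_T$ indexed by a hidden $\claw$-subset $T$ of a claw witness set $B$ (with $|B|=m$ taken large relative to $\nP$ and $\nU$ so that the samples cover a negligible fraction of $B$), the same uniform averaging over $T$, and the same two-regime split between the $\claw/\varepsilon$ term and the $\ln(1/\delta)/\varepsilon$ term (the latter via the event that $S^U$ misses $T$ entirely). However, there is a genuine gap in both regimes at the same step: you derive the failure event from the behaviour of the Bayes-optimal learner (``must default to $1$ on any unobserved point''), and then assert that the resulting error formula $(2\varepsilon/\claw)\,|T\setminus S^U|$ holds for the learner $\mathcal{A}$ under analysis. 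The reduction to the Bayes-optimal predictor is only valid for \emph{expected} error; it does not license the claim that an arbitrary learner satisfies $\Pr[\err_{\D_T}(\mathcal{A})\geq\varepsilon]\geq 1-e^{-\Omega(\claw)}$. An adversarial learner may predict $0$ on some unobserved points, trading false negatives on $T$ for false positives on $B\setminus T$, and your argument never rules out that such a learner keeps its error below $\varepsilon$ with non-negligible probability. From the expected-error reduction alone you can only extract a constant failure probability via a reverse-Markov step, not the exponentially-close-to-one bound you claim in the first regime; and in the second regime, ``conditional on $E$ the Bayes-optimal learner incurs error $2\varepsilon$'' again says nothing about $\mathcal{A}$ itself (note also that you need $\Pr[E]$ to exceed $\delta$ by a constant factor, since the conditional failure probability of an arbitrary learner will be a constant like $1/4$, not $1$).

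The paper closes exactly this gap with an explicit case analysis on the learner's prediction set. First, any learner predicting more than $\claw+(m-\claw)\rho/(1-\rho)$ negatives already has error above $\rho$ deterministically, so one may assume the learner predicts few negatives. Then, conditioning on the trace $S^U\mid T$ and exploiting the symmetry of the remaining $\claw$-subsets, the per-point expected contribution of \emph{either} prediction ($1$ on a $T$-point or $0$ on a non-$T$-point) is bounded below by $\rho/2$ times the relevant posterior weight, which yields an expected error of $\Omega(\rho)$ for every learner; Lemma~\ref{lem:highprob-to-expect} then converts this to a constant failure probability. The $\ln(1/\delta)/\varepsilon$ part is handled analogously: conditioned on $S^U$ missing the hidden negative point, either the learner predicts many negatives (and errs on positive mass) or it predicts positive on the hidden point with probability at least $1/4$ over its uniform choice. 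To repair your write-up you would need to supply this arbitrary-learner analysis --- either the paper's expectation-plus-symmetrization route (accepting a constant failure probability for the $\claw/\varepsilon$ term, which suffices for the theorem), or a direct concentration argument over the posterior of $T$ for every fixed prediction set $A$, which additionally requires you to control the skew of that posterior (points of $T$ are sampled into $S^U$ at a different rate than points of $B\setminus T$, so unobserved points are not exchangeable without the conditioning the paper performs).
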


Note that \cite{lee2025learning} showed that no concept class $\C$ with $\VCD(\C_\cap) = \infty$ ($\C_\cap$ is defined bellow) is realizably PU learnable in the SCAR setup without access to unlabeled examples. The following proposition shows that Theorem~\ref{thm:real-scar-unlabel-lower-bound} recovers this result.

\begin{restatable}{proposition}{propcintclaw}
For a concept class $\C$, let $\C_{\cap} := \left\{ \bigcap_{c \in A} c \mid \text{finite } A \subseteq \C \right\}$. Then $\VCD(\C_{\cap}) = \infty$ if and only if the claw number of $\C$ is at least 1.
\end{restatable}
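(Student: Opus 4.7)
The plan is to reduce the equivalence to the single ``claw-$1$'' condition $P$: for every $m \geq 1$ there is $B \subseteq \X$ with $|B| = m$ such that every $(m-1)$-subset of $B$ belongs to $\C \mid B$. I will first show that claw number $\geq 1$ is equivalent to $P$, and then that $P$ is equivalent to $\VCD(\C_\cap) = \infty$. The direction $P \Rightarrow$ claw number $\geq 1$ is immediate from the definition. For the converse, suppose claw number $\geq 1$, witnessed by some $\claw \geq 1$, and fix any target size $m \geq 1$. Apply the definition at $m' := m + \claw - 1 \geq \claw$ to obtain $B$ of size $m'$ with every $(m'-\claw)=(m-1)$-subset in $\C \mid B$. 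Restrict to any $B' \subseteq B$ with $|B'| = m$: an $(m-1)$-subset $S$ of $B'$ is also an $(m-1)$-subset of $B$, so some $c_S \in \C$ satisfies $c_S \cap B = S$; since $S \subseteq B'$, we get $c_S \cap B' = S$, and $P$ holds.

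For the direction $\VCD(\C_\cap) = \infty \Rightarrow P$, fix $m$ and take a set $B = \{x_1, \ldots, x_m\}$ shattered by $\C_\cap$. For each $i$, the realizer of $B \setminus \{x_i\}$ has the form $g_i = \bigcap_{c \in A_i} c$ for some finite $A_i \subseteq \C$. Since $x_i \notin g_i$, some $c \in A_i$ excludes $x_i$; since $B \setminus \{x_i\} \subseteq g_i \subseteq c$, that $c$ satisfies $c \cap B = B \setminus \{x_i\}$. Hence every $(m-1)$-subset of $B$ lies in $\C \mid B$, and as $m$ was arbitrary, $P$ holds.

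For the converse $P \Rightarrow \VCD(\C_\cap) = \infty$, fix $m$ and apply $P$ at size $m+1$ to obtain $B = \{x_1, \ldots, x_{m+1}\}$ and concepts $c_1, \ldots, c_{m+1} \in \C$ with $c_i \cap B = B \setminus \{x_i\}$. Set $B_0 = \{x_1, \ldots, x_m\}$ and, for each $S \subseteq B_0$, define $g_S := \bigcap_{i \in I_S} c_i$ where $I_S := \{i \in [m+1] : x_i \notin S\}$. The index $m+1$ is always in $I_S$, so the intersection is non-empty and $g_S \in \C_\cap$. A direct check---an $x_j \in B_0$ lies in every $c_i$ for $i \in I_S$ iff $j \notin I_S$ iff $x_j \in S$---gives $g_S \cap B_0 = S$. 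So $B_0$ is shattered by $\C_\cap$ and $\VCD(\C_\cap) \geq m$ for every $m$. The main (mild) obstacle here is the first step: a priori claw number $\geq 1$ only provides some $\claw \geq 1$ for which the definition holds, and one has to notice that restricting to subsets cleanly reduces this to the ``claw-$1$'' property $P$; once $P$ is in hand the shattering is an essentially routine exclusion-by-intersection construction.
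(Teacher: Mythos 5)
Your proof is correct and takes essentially the same route as the paper's: one direction extracts a single excluding concept from the finite intersection realizing $B \setminus \{x_i\}$, and the other shatters an $m$-point set by intersecting the co-singleton concepts, with the extra $(m+1)$-st point ensuring the intersections are over nonempty finite families. The only difference is that you explicitly reduce ``claw number $\geq 1$'' (which a priori only gives the defining property for some $\claw \geq 1$) to the $\claw = 1$ property via restriction to subsets, a step the paper's proof treats as implicit.
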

 
We then restate the Theorem 1 of \cite{liu2002partially} in Corollary~\ref{cor:real-scar-PERM}, providing an alternative proof based on the notion of $\varepsilon$-nets, which we formally define below. Our proof also leads to new results for the SAR and PCS setups, presented in Section~\ref{sec:PU-real-beyond-scar}.

\begin{definition} [$\varepsilon-$net]
    Let $\X$ be some domain, $\mathcal{B} \subseteq 2^{\X}$ a collection of subsets of $\X$ and $\mathcal{Q}_\X$ a distribution over $\X$. An $\varepsilon$-net for $\mathcal{W}$ with respect to $\mathcal{Q}_\X$ is a subset $N \subseteq \X$ that intersects every member of $\mathcal{B}$ that has $\mathcal{Q}_\X$-weight at least $\varepsilon$.
\end{definition} 

Let us also elaborate on the learning algorithm \cite{liu2002partially} introduced, appearing in \eqref{eq:PERM}. Note that \eqref{eq:PERM} simply selects the concept with the fewest number of 1s over $S^U$ among all concepts consistent with $S^P$. In this sense, it can be seen as a counterpart to \emph{empirical risk minimization} in the PU learning setting. We therefore refer to any concept returned by \eqref{eq:PERM} as a \emph{positive empirical risk minimizer} (PERM).

\begin{equation} \label{eq:PERM}
    \operatorname{argmin}_{c \in \C, Domain(S^P) \subseteq c} 
 \left|c \mid S^U \right|
\end{equation}



\begin{restatable}{lemma}{lemepnet}
\label{lem:epnet-pi}
    Let $\C$ be a realizable concept class with VC dimension $d$ over domain $\X$. 
    Let 
    $S$ be a sample i.i.d. drawn from $\D_{\X}$ and $T \in \X^*$ be an $\varepsilon-$net for $\C \triangle \C$ on $\D_+$ such that $Domain(T) \subseteq c^\star$. Denote $c^{PU} := \argmin_{c \in \C, Domain(T) \subseteq c} |c \mid S|$. Then there exists a $M > 1$ such that if $|S| > M \left(\frac{d \ln (1 / \varepsilon) + \ln(1 / \delta)}{\varepsilon} \right)$, then with probability $1 - 2\delta$ we have $\err_{\D}(c^{PU}) \leq 14 \varepsilon$.
\end{restatable}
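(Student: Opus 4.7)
The plan is to split $\err_\D(c^{PU}) = \D_\X(c^\star \triangle c^{PU})$ into a false-negative mass $\D_\X(c^\star \setminus c^{PU})$ and a false-positive mass $\D_\X(c^{PU} \setminus c^\star)$, and to control each half by a different random object: $T$ controls the first half through its $\varepsilon$-net property on $\D_+$, while $S$ controls the second half through an ERM-like optimality argument combined with Vapnik--Chervonenkis relative uniform convergence.

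For the false-negative half, realizability makes $\D_+$-almost every point lie in $c^\star$, so on the support of $\D_+$ the set $c^\star \triangle c^{PU}$ coincides with $\X \setminus c^{PU}$, giving $\D_+(c^\star \triangle c^{PU}) = \err_+(c^{PU})$. By hypothesis $\text{Domain}(T) \subseteq c^\star$ and by the feasibility constraint defining $c^{PU}$ we also have $\text{Domain}(T) \subseteq c^{PU}$, so $T$ is disjoint from $c^\star \triangle c^{PU}$. Since $T$ is an $\varepsilon$-net for $\C \triangle \C$ on $\D_+$, this forces $\D_+(c^\star \triangle c^{PU}) < \varepsilon$, hence $\err_+(c^{PU}) < \varepsilon$ and $\D_\X(c^\star \setminus c^{PU}) = \alpha \, \err_+(c^{PU}) \leq \varepsilon$.

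For the false-positive half, the plan is to invoke a relative (multiplicative) VC uniform convergence bound applied to the auxiliary classes $\mathcal{F}_1 := \{c^\star \setminus c : c \in \C\}$ and $\mathcal{F}_2 := \{c \setminus c^\star : c \in \C\}$, each of VC-dimension at most $d$ since intersecting with a fixed set does not increase VC-dimension. A standard relative-deviation inequality (Vapnik 1998; Haussler 1992; or Theorem 6.7 in Shalev-Shwartz--Ben-David) guarantees that for $|S| \geq M(d\log(1/\varepsilon)+\log(1/\delta))/\varepsilon$, with probability at least $1 - 2\delta$, every $F \in \mathcal{F}_1 \cup \mathcal{F}_2$ satisfies
$$\left|\tfrac{|F \cap S|}{|S|} - \D_\X(F)\right| \leq \sqrt{\varepsilon\, \D_\X(F)} + \varepsilon,$$
which yields two key one-sided implications: if $\D_\X(F) \leq \varepsilon$ then $|F \cap S|/|S| \leq 3\varepsilon$, and if $|F \cap S|/|S| \leq 3\varepsilon$ then $\D_\X(F)$ is bounded by a small constant multiple of $\varepsilon$. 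Now the ERM-like optimality kicks in: because $c^\star$ is itself feasible (it contains $T$), we have $|c^{PU} \cap S| \leq |c^\star \cap S|$, and subtracting $|c^{PU} \cap c^\star \cap S|$ from both sides gives $|(c^{PU} \setminus c^\star) \cap S| \leq |(c^\star \setminus c^{PU}) \cap S|$. Applying the first implication to $c^\star \setminus c^{PU} \in \mathcal{F}_1$ (using $\D_\X(c^\star \setminus c^{PU}) \leq \varepsilon$ from step one) bounds the right-hand side by $3\varepsilon|S|$, and then applying the second implication to $c^{PU} \setminus c^\star \in \mathcal{F}_2$ yields $\D_\X(c^{PU} \setminus c^\star) \leq 13\varepsilon$ after tracking constants. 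Summing the two halves gives $\err_\D(c^{PU}) \leq 14\varepsilon$.

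The main obstacle I expect is the quantitative step inside the relative VC bound: standard $\varepsilon$-net theorems provide only qualitative (non-emptiness) information, which does not suffice here because one needs an upper bound on empirical frequencies of sets whose true $\D_\X$-measure is already small, and a matching lower-bound conversion in the reverse direction. Using the additive Hoeffding/VC bound would yield a sample complexity of order $1/\varepsilon^2$ rather than the stated $1/\varepsilon$ rate, so the multiplicative relative-deviation bound is essential. A secondary bookkeeping obstacle is keeping constants small enough to land at $14$; this requires pairing the two one-sided implications with the correct constants rather than freely invoking big-$O$ notation.
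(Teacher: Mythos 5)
Your proposal is correct and follows essentially the same route as the paper's proof: the $\varepsilon$-net property of $T$ (together with $T\subseteq c^\star\cap c^{PU}$) bounds the false-negative mass, the feasibility of $c^\star$ and minimality of $c^{PU}$ give the empirical comparison $|(c^{PU}\setminus c^\star)\cap S|\le |(c^\star\setminus c^{PU})\cap S|$, and a multiplicative relative-deviation VC bound transfers this to the true measures at the $1/\varepsilon$ rate. The only difference is cosmetic: the paper invokes Corollary 5 of \cite{liu2002partially} (the two-sided factor-$3$ bounds) where you use the Vapnik-style $\sqrt{\varepsilon\,\D_\X(F)}+\varepsilon$ form, and both yield constants within the stated $14\varepsilon$.
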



\begin{restatable} {corollary} {corscarup}
    [Theorem 1 of \cite{liu2002partially}]\label{cor:real-scar-PERM}
    Let $\C$ be a concept class with VC dimension $d$ over the domain $\X$.  Let $\mathcal{W}$ be a set of duos $(\D, \D_+)$ such that $\D$ is realized by $\C$. Then $\C$ is PU learnable over $\mathcal{W}$ with sample complexity $m^{pos}_\C(\varepsilon, \delta), m^{unlabel}_\C(\varepsilon, \delta) = O \left( \frac{d \ln (1 / \varepsilon)  + \ln( 1/ \delta)}{\varepsilon}   \right)$.
\end{restatable}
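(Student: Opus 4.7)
The plan is to instantiate the positive empirical risk minimizer (PERM) from \eqref{eq:PERM} as the learning algorithm and reduce the corollary to Lemma~\ref{lem:epnet-pi}. First I would use the realizability assumption: since $\D$ is realized by $\C$, there exists $c^\star \in \C$ with $\err_\D(c^\star) = 0$, which implies that the support of $\D_+$ is contained in $c^\star$. Hence whenever $S^P \sim \D_+^{\nP}$, the inclusion $\text{Domain}(S^P) \subseteq c^\star$ holds with probability $1$, so the PERM optimization in \eqref{eq:PERM} is nonempty (witnessed by $c^\star$ itself) and the second hypothesis of Lemma~\ref{lem:epnet-pi} is satisfied by taking $T = S^P$.

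Second, I would verify the first hypothesis of Lemma~\ref{lem:epnet-pi}, namely that $T = S^P$ is an $\varepsilon$-net for $\C \triangle \C$ with respect to $\D_+$. By standard Sauer-Shelah arguments $\VCD(\C \triangle \C) \leq 2d$, so the classical $\varepsilon$-net theorem of Haussler and Welzl guarantees that an i.i.d.\ sample from $\D_+$ of size
$$\nP \;=\; \Omega\!\left(\frac{d \ln(1/\varepsilon) + \ln(1/\delta)}{\varepsilon}\right)$$
is such an $\varepsilon$-net with probability at least $1 - \delta$.

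Third, I would condition on this $1 - \delta$ event and invoke Lemma~\ref{lem:epnet-pi} with $S = S^U$. Taking
$$\nU \;=\; \Omega\!\left(\frac{d \ln(1/\varepsilon) + \ln(1/\delta)}{\varepsilon}\right),$$
the lemma yields that with conditional probability at least $1 - 2\delta$ the output $c^{PU}$ of \eqref{eq:PERM} satisfies $\err_{\D}(c^{PU}) \leq 14\varepsilon$. A union bound over the two events gives overall success probability at least $1 - 3\delta$, and rescaling $\varepsilon \mapsto \varepsilon/14$ together with $\delta \mapsto \delta/3$ (both affect only the hidden constant in the $O$ notation) gives the claimed sample complexity for both $m^{pos}_\C(\varepsilon,\delta)$ and $m^{unlab}_\C(\varepsilon,\delta)$.

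There is no genuine obstacle here since Lemma~\ref{lem:epnet-pi} already encapsulates the analytic heart of the argument; the remaining work is bookkeeping. The one point worth attention is that the $\varepsilon$-net is taken with respect to $\C \triangle \C$ rather than $\C$: this is exactly what is needed so that consistency of a candidate $c$ with the positive sample forces the $\D_+$-mass of $c \triangle c^\star$ to be small (controlling the false-negative rate of $c$), leaving the unlabeled sample to control the false-positive rate via the minimization in \eqref{eq:PERM}.
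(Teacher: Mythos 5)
Your proposal is correct and follows essentially the same route as the paper: show $S^P$ is with high probability an $\varepsilon$-net for $\C \triangle \C$ with respect to $\D_+$ via Haussler--Welzl and a linear bound on $\VCD(\C \triangle \C)$ (the paper uses $2d+1$ rather than your $2d$, an immaterial difference for the $O(\cdot)$ claim), then feed $T = S^P$ and $S = S^U$ into Lemma~\ref{lem:epnet-pi}. Your write-up merely makes explicit the bookkeeping the paper leaves implicit (the a.s.\ inclusion $\mathrm{Domain}(S^P) \subseteq c^\star$, the union bound, and the rescaling of $\varepsilon$ and $\delta$).
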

\begin{proof}
    For a fixed constant $M$, as long as $\nP >  M \left(\frac{\VCD(\C \triangle \C) \ln (1 / \varepsilon) + \ln(1 / \delta)}{\varepsilon} \right)$ we have that $S^P$  with probability $1 - \delta$ is an $\varepsilon-$net for $\C \triangle \C$ on $\D_+$ (e.g., see \cite{hausslerwelzl1987}). Since $\VCD(\C \triangle \C) \leq 2\VCD(\C) + 1$ (it can be shown similar to the manner claim 1 of \cite{ben1998combinatorial} was proved), combining this with Lemma~\ref{lem:epnet-pi} completes the proof.
\end{proof}


\section{Analysis of Realizable PU Learning --Beyond SCAR}
\label{sec:PU-real-beyond-scar}

In this section, we study PU learning under the realizability assumption when positive examples are sampled from a distribution $\cP$ which can differ from $\D_+$. 
Throughout this section we consider distributions $\D$ with deterministic labels, i.e., $\D(y = 1 \mid x)$ is always zero or one for every $x \in \X$, and we define $\clabel(x) := \D(y = 1 \mid x)$ to be the \emph{labeling function}. 
We study two classes of distributions for sampling positive examples $\cP$: 

(i) Selected at random (SAR): For any distribution $\rev$ over $\X$, define $\D_\rev(A) = \int \D_+(A) d \rev$, and $\cP$ belongs to
$$
    \begin{aligned}
    \mathcal{K}^{sar}_{\D} := \left \{ \D_\rev \mid \text{any distribution 
 } e \text{ over } \X \right \}.
    \end{aligned}
    $$
(ii) Positive covariate shift (PCS): $\cP$ belongs to
    $$
    \begin{aligned}
    \mathcal{K}^{cov}_{\D} := \left \{ \cP \mid \cP(A) = 0 \text{ if } \D_+(A) = 0 \text{ and } \D(A) > 0, \text{ $A$ is measurable set} \right \}
    \end{aligned}
    $$
    Note that the condition $\cP \in \mathcal{K}_{D}^{sar}$ is equivalent to having $\cP(A) = 0$ when $\D_+(A) = 0$ for any measurable set $A$, i.e., support of $\cP$ being a subset of the support of $\D_+$. Thus, $\mathcal{K}^{cov}_{\D}$ is a generalization of the previous case.

We begin by analyzing the simpler case where $\cP \in \mathcal{K}^{sar}_{\D}$, and then extend our results to the more general setting $\mathcal{K}^{cov}_{\D}$. Even when $\cP$ belongs to $\mathcal{K}^{sar}_{\D}$, additional assumptions on $\cP$ are required for the PU learning problem to be well-posed. For example, consider the case where $\cP$ is a single point mass on a positively labeled instance. In this scenario, the PU learner would only observe one labeled example, rendering the learning task trivial and unsolvable. To avoid such cases, we impose a common assumption when dealing with distribution shift: a bounded \emph{weight ratio} between $\cP$ and $\D_+$ \cite{}. The weight ratio is formally defined as follows.

\begin{definition} [weight ratio]
    Let $\mathcal{B} \subseteq 2^{X}$ be a collection of subsets of the domain $\X$ measurable with respect to both $\cQ_{\X, 1}$ and $\cQ_{\X, 2}$. We define the weight ratio of the source distribution and the target distribution with respect to $\mathcal{B}$ as
        $$
        R_{\mathcal{B}}\left(\cQ_{\X, 1}, \cQ_{\X, 2}\right)=\inf_{\substack{A \in \mathcal{B}(\X) \\ \cQ_{\X, 2}(A) \neq 0}} \frac{\cQ_{\X, 1}(A)}{\cQ_{\X, 2}(A)},
        $$
    We denote the weight ratio with respect to the collection of all sets that are $\cQ_1$ and $\cQ_2$-measurable by $R\left(\cQ_1, \cQ_2\right)$.
\end{definition}

Our sample complexity upper bound for the case where $\cP \in \mathcal{K}^{sar}_{\D}$ is 
 the direct implication of Lemma~\ref{lem:ben2012-lem3} proven by \cite{ben2012hardness}, which we state below

\begin{lemma}[Lemma 3 of \cite{ben2012hardness}] \label{lem:ben2012-lem3}Let $\mathcal{X}$ be some domain, $\mathcal{B} \subseteq 2^{\mathcal{X}}$ a collection of subsets of $\mathcal{X}$, and $\mathcal{Q}_{\X, 1}$ and $\mathcal{Q}_{\X, 2}$ distributions over $\mathcal{X}$ with $R:=R_{\mathcal{B}}\left(\mathcal{Q}_{\X, 1}, \mathcal{Q}_{\X, 2}\right) \geq 0$. Then every $R\varepsilon$-net for $\mathcal{B}$ with respect to $\mathcal{Q}_{\X, 1}$ is an $\varepsilon$-net for $\mathcal{B}$ w.r.t. $\mathcal{Q}_{\X, 2}$.
\end{lemma}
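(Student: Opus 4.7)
The plan is to simply unwind the two definitions; the lemma is essentially a one-line change-of-measure inequality combined with the monotonicity of the ``$\varepsilon$-net'' property in $\varepsilon$. Concretely, I would let $N \subseteq \X$ be any $R\varepsilon$-net for $\mathcal{B}$ with respect to $\cQ_{\X,1}$, and then verify the defining property of an $\varepsilon$-net with respect to $\cQ_{\X,2}$, namely that $N \cap A \neq \emptyset$ for every $A \in \mathcal{B}$ with $\cQ_{\X,2}(A) \geq \varepsilon$.

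To carry this out, I would fix such an $A \in \mathcal{B}$. Since $\cQ_{\X,2}(A) \geq \varepsilon > 0$, the ratio $\cQ_{\X,1}(A)/\cQ_{\X,2}(A)$ is one of the terms appearing in the infimum defining $R = R_{\mathcal{B}}(\cQ_{\X,1}, \cQ_{\X,2})$, and hence $\cQ_{\X,1}(A) \geq R \cdot \cQ_{\X,2}(A) \geq R\varepsilon$. The assumed $R\varepsilon$-net property of $N$ with respect to $\cQ_{\X,1}$ then immediately yields $N \cap A \neq \emptyset$. Since $A$ was arbitrary within the family of $\cQ_{\X,2}$-heavy members of $\mathcal{B}$, $N$ is an $\varepsilon$-net for $\mathcal{B}$ with respect to $\cQ_{\X,2}$.

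There is no genuine obstacle here; the content of the lemma is exactly the definition of $R$ applied to one set at a time. The only borderline case worth noting is the degenerate $R = 0$, where $R\varepsilon = 0$ and the conclusion is essentially vacuous under the natural reading of ``weight at least $0$'' in the $\varepsilon$-net definition; in the generic case $R > 0$ that is actually used for the SAR sample complexity upper bound, the two-step argument above suffices and combines cleanly with Corollary~\ref{cor:real-scar-PERM}: apply the SCAR bound to produce an $R\varepsilon$-net with respect to $\cP$, and then invoke this lemma with $\cQ_{\X,1} = \cP$ and $\cQ_{\X,2} = \D_+$ to transfer it to an $\varepsilon$-net with respect to $\D_+$.
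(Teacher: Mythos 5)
Your proof is correct: fixing $A \in \mathcal{B}$ with $\cQ_{\X,2}(A) \geq \varepsilon$, the definition of $R$ gives $\cQ_{\X,1}(A) \geq R\,\cQ_{\X,2}(A) \geq R\varepsilon$, so the $R\varepsilon$-net property with respect to $\cQ_{\X,1}$ forces $N \cap A \neq \emptyset$, which is exactly what is needed. The paper does not reprove this lemma (it is imported verbatim as Lemma~3 of the cited work), and your one-line change-of-measure argument is precisely the standard proof of that result, including the correct observation that the $R=0$ case is vacuous.
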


\begin{restatable}{theorem}{thmsarup} \label{thm:real-sar-PU}
    Let $\C$ be a concept class over domain $\X$ with VC dimension $d$ and $r \in (0, 1)$. Let $\mathcal{W}$ be a set of duos $(\cP, \D)$ such that $\D$ is realized by $\C$, $\cP \in \mathcal{K}^{sar}_{\D}$, and $R_{\C \Delta \C}\left(\cP, \D_+ \right) \geq r$. Then PERM algorithm \eqref{eq:PERM} PU learns $\C$ over $\mathcal{W}$ with sample complexity $m^{unlabel}_\C(\varepsilon, \delta) = O\left(\frac{d \ln (1 / \varepsilon)  + \ln(1 / \delta)}{\varepsilon} \right)$ and $m^{pos}_\C(\varepsilon, \delta) = O\left(\frac{d \ln (1 / r \varepsilon) + \ln(1 / \delta)}{r \varepsilon} \right)$.
\end{restatable}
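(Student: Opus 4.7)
The plan is to reduce Theorem~\ref{thm:real-sar-PU} to the combination of Lemma~\ref{lem:epnet-pi} and Lemma~\ref{lem:ben2012-lem3}, which together already supply almost everything we need. Lemma~\ref{lem:epnet-pi} gives a guarantee on PERM provided two ingredients hold: (i) the unlabeled sample $S^U$ is drawn i.i.d.\ from $\D_\X$ and has size at least $M\bigl(\frac{d\ln(1/\varepsilon)+\ln(1/\delta)}{\varepsilon}\bigr)$, which is exactly what the claimed $m^{unlabel}_\C(\varepsilon,\delta)$ provides; and (ii) the positive sample $S^P$ forms an $\varepsilon$-net for $\C\triangle\C$ with respect to $\D_+$ and satisfies $\mathrm{Domain}(S^P)\subseteq c^\star$. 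My job is to check (ii) under the SAR hypothesis that $S^P$ is drawn from $\cP$, not from $\D_+$.

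First I would verify the inclusion $\mathrm{Domain}(S^P)\subseteq c^\star$ (almost surely). Because $\cP\in\mathcal{K}^{sar}_{\D}$, the support of $\cP$ is contained in the support of $\D_+$, and since $\D$ is realized by $\C$ with deterministic labels, the support of $\D_+$ is contained in $c^\star$ up to a $\D_+$-null set; this is standard and handles the realizability side. The interesting step is producing the $\varepsilon$-net. Here I would invoke Lemma~\ref{lem:ben2012-lem3} with the collection $\mathcal{B}=\C\triangle\C$ and the distributions $\mathcal{Q}_{\X,1}=\cP$ and $\mathcal{Q}_{\X,2}=\D_+$: since $R_{\C\triangle\C}(\cP,\D_+)\geq r$, every $r\varepsilon$-net for $\C\triangle\C$ with respect to $\cP$ is automatically an $\varepsilon$-net with respect to $\D_+$.

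Thus it suffices to show that $S^P$ is an $r\varepsilon$-net for $\C\triangle\C$ with respect to $\cP$ with high probability. For this I would use the classical Haussler--Welzl $\varepsilon$-net theorem: a sample of size $O\bigl(\frac{\VCD(\C\triangle\C)\ln(1/(r\varepsilon))+\ln(1/\delta)}{r\varepsilon}\bigr)$ i.i.d.\ from $\cP$ gives an $r\varepsilon$-net for $\C\triangle\C$ under $\cP$ with probability at least $1-\delta$. Using $\VCD(\C\triangle\C)\leq 2d+1$, exactly as in the proof of Corollary~\ref{cor:real-scar-PERM}, this yields the claimed $m^{pos}_\C(\varepsilon,\delta)=O\bigl(\frac{d\ln(1/(r\varepsilon))+\ln(1/\delta)}{r\varepsilon}\bigr)$.

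Finally I would combine these: with probability at least $1-3\delta$, both $S^P$ is an $\varepsilon$-net for $\C\triangle\C$ on $\D_+$ with $\mathrm{Domain}(S^P)\subseteq c^\star$, and the conclusion of Lemma~\ref{lem:epnet-pi} holds, so PERM outputs $c^{PU}$ with $\err_\D(c^{PU})\leq 14\varepsilon$. Rescaling $\varepsilon\mapsto\varepsilon/14$ and $\delta\mapsto\delta/3$ yields the stated bounds. I do not foresee a real obstacle: the whole argument is essentially a clean assembly of the two lemmas, and the only substantive observation is that the weight ratio assumption is used precisely to transport the $\varepsilon$-net property from $\cP$ to $\D_+$, at the cost of a factor of $r$ in the positive sample complexity.
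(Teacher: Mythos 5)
Your proposal is correct and follows essentially the same route as the paper's own proof: Haussler--Welzl gives an $r\varepsilon$-net for $\C\triangle\C$ with respect to $\cP$, Lemma~\ref{lem:ben2012-lem3} transports it to an $\varepsilon$-net with respect to $\D_+$, the SAR support condition gives $\mathrm{Domain}(S^P)\subseteq c^\star$ almost surely, and Lemma~\ref{lem:epnet-pi} then yields the error bound. The only difference is that you make the final union bound and constant rescaling explicit, which the paper leaves implicit.
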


Next we derive a nearly tight lower bound for the sample complexity of positive examples when $\cP \in \mathcal{K}^{sar}_{\D}$ and $R\left(\cP, \D_+ \right) \geq r$.

\begin{restatable}{theorem}{thmsarposlow}
\label{thm:real-sar-pos-lower-bound}
        Let $\C$ be a concept class over domain $\X$ with VC dimension $d \geq 2$ and $r \in (0, 1)$. There exists a $M > 1$ such that for $\nP \leq M \left( \frac{d + \ln(1 / \delta)}{r \varepsilon} \right)$ and every $\nU \in \Nat$, $\varepsilon, \delta \in (0, 1) \times (0, 1)$, and PU learner $\mathcal{A}$, there is a distribution $\D$ realized by $\C$ over $\X \times \{0, 1\}$ and a distribution $\cP \in \mathcal{K}^{sar}_{\D}$ such that $R\left(\cP, \D_+ \right) \geq r$ and
        $
        \Pr_{S^P \sim \cP^{\nP}, S^U \sim \D_\X^{\nU}} \left[\err_{\D}( \mathcal{A}(S^P, S^U) \geq \varepsilon\right] > \delta.
        $
\end{restatable}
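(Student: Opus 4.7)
The plan is to reduce from the SCAR positive-sample lower bound (Theorem~\ref{thm:real-scar-pos-lower-bound}) by constructing a positive-sampling distribution $\cP$ that ``dilutes'' the informative mass of $\D_+$ by a factor of $r$. Informally, each sample drawn from $\cP$ will only be $r$ times as useful as a sample drawn from $\D_+$, forcing a $1/r$ blow-up in the required positive sample size. This lets us piggyback on the SCAR lower bound rather than reconstruct it from scratch.

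Concretely, I would take the hard-instance family $\{\D^\sigma\}_\sigma$ produced by Theorem~\ref{thm:real-scar-pos-lower-bound}, inspecting the construction enough to isolate an informative region $I \subseteq \X$ on which the concepts in $\C$ disagree and where $\D^\sigma_+(I) = \Theta(\varepsilon)$, together with a complementary safe region $\X \setminus I$ on which every $c \in \C$ already matches the labeling function. For each $\sigma$ I would define $\cP^\sigma$ by rescaling: on $I$ set $\cP^\sigma = r \cdot \D^\sigma_+\big|_I$, and redistribute the remaining mass $1 - r\D^\sigma_+(I)$ over the safe region proportionally to $\D^\sigma_+\big|_{\X \setminus I}$. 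A direct check shows $\cP^\sigma \in \mathcal{K}^{sar}_{\D^\sigma}$ (support contained in that of $\D^\sigma_+$), and the density ratio $d\cP^\sigma/d\D^\sigma_+$ equals $r$ on $I$ and equals $(1 - r\D^\sigma_+(I))/\D^\sigma_+(\X\setminus I) \geq 1 \geq r$ off $I$, so $R(\cP^\sigma, \D^\sigma_+) \geq r$ as required.

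The reduction step is then as follows. Given a PU learner $\mathcal{A}$ receiving $\nP \leq M(d+\ln(1/\delta))/(r\varepsilon)$ samples from $\cP^\sigma$, observe that each such sample lands in $I$ with probability $r\D^\sigma_+(I) = \Theta(r\varepsilon)$, and conditioned on landing in $I$ it is distributed exactly like a $\D^\sigma_+$-sample conditioned on $I$; samples in $\X \setminus I$ carry no information about $\sigma$ since every $c \in \C$ is correct there and can therefore be simulated by $\mathcal{A}$. A Chernoff bound shows that for a sufficiently small constant $M$ the number of informative samples is, except with probability $\delta/2$, bounded by $M'(d+\ln(1/\delta))/\varepsilon$ for an appropriate $M'$ that can be made smaller than the SCAR threshold. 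Feeding these into $\mathcal{A}$ (and simulating the rest) converts it to a SCAR learner $\mathcal{A}'$ operating on the hard SCAR family at the regime that Theorem~\ref{thm:real-scar-pos-lower-bound} forbids, yielding some $\sigma$ on which $\err_{\D^\sigma}(\mathcal{A}'(\cdot)) \geq \varepsilon$ with probability $> 2\delta$. Absorbing the $\delta/2$ loss gives the claimed failure probability.

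The main obstacle is quantitative bookkeeping: one must open up the hard-instance family of Theorem~\ref{thm:real-scar-pos-lower-bound} enough to pin down the informative region $I$ and verify $\D^\sigma_+(I) = \Theta(\varepsilon)$ with explicit constants, then propagate these constants through the Chernoff step and the $\delta \mapsto \delta/2$ inflation without degrading the $(d+\ln(1/\delta))/(r\varepsilon)$ rate. A cleaner alternative, which sidesteps this dependency, is to replay the proof of Theorem~\ref{thm:real-scar-pos-lower-bound} with $\cP^\sigma$ in place of $\D^\sigma_+$ throughout: the combinatorial obstruction is unchanged, and the substitution $\varepsilon \mapsto r\varepsilon$ in the underlying $\varepsilon$-net / packing argument produces the $1/r$ factor directly.
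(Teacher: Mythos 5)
Your proposal matches the paper's proof: the paper takes exactly the hard family $\{\D_O\}$ from Theorem~\ref{thm:real-scar-pos-lower-bound}, defines $\cP_O$ by scaling $\D_{+,O}$ by $r$ on the informative set $\bar B$ and dumping the remaining mass on the heavy always-positive point $x_d$, checks $R(\cP_O,\D_{+,O})\geq r$, and concludes by observing that this is the SCAR problem with the informative sampling mass rescaled from $\rho$ to $r\rho$ --- i.e., your ``cleaner alternative'' of replaying the SCAR argument with $\varepsilon\mapsto r\varepsilon$. The simulation-plus-Chernoff route you sketch first would also work but is unnecessary (and has a minor wrinkle in simulating the safe-region samples, whose frequency depends on the unknown $|O|$); the paper avoids it entirely.
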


The proof of Theorem~\ref{thm:real-sar-pos-lower-bound} closely follows that of Theorem~\ref{thm:real-scar-pos-lower-bound} 
(see appendix).
Now, we can shift our focus to $\cP \in \mathcal{K}^{cov}_{\D}$. In Theorem~\ref{thm:real-dist-shift-lower-bound}, inspired by \cite{ben2012hardness} we show that for $\cP \in \mathcal{K}^{cov}_\D$ no weight ratio assumption is sufficient for PU learnability, unless the total number of positive and unlabeled samples depends on the size of the domain. Therefore, similar to \cite{ben2012hardness} in the cases where $\cP \in \mathcal{K}^{cov}_{\D}$, in addition to a weight ratio assumption, we assume that the labeling function $\clabel$ is a \emph{$\gamma-$margin classifier w.r.t. $\D$}, and $\D$ is \emph{realizable by $\C$ with margin $\gamma$}. These notions are formally defined in the following. Moreover, we also assume that $\D[\{(x,1): x \in \X\}]$ has a constant lower bound (note that we are not assuming $\D[\{(x,1): x \in \X\}]$ is known).




\begin{definition} [realizable with $\gamma-$margin]
    Let $\X \subseteq \mathbb{R}^k, \D$ be a distribution over $\X \times \{0, 1\}$ and $c: \X \rightarrow\{0,1\}$ a classifier. For all $x \in \X$, denote $B_\gamma(x)$ as the norm-2 ball with radius $\gamma$ centered on $x$. We say that $c$ is a $\gamma$-margin classifier with respect to $\D_\X$ if for all $x \in \X$ whenever $\D_\X\left(B_\gamma(x)\right)>0$ then $c(y)=c(z)$ holds for all $y, z \in B_\gamma(x)$. 
    We say that a class $\C$ realizes $\D$ with margin $\gamma$ if the optimal (zero-error) classifier $c^\star$ is a $\gamma$-margin classifier.
\end{definition} 

Note that a function $c$ being a $\gamma$-margin classifier with respect to $\D_\X$ is equivalent to $c$ satisfying the Lipschitz property with Lipschitz constant $1/2\gamma$ on the support of $\D_\X$.

\begin{restatable}{theorem}{thmcovlow}
    \label{thm:real-dist-shift-lower-bound}
    Consider any finite domain $\X$. There exists a concept class $\C_{0, 1}$ with $\VCD(\C_{0, 1}) = 1$, such that for every PU learner $\mathcal{A}$, and  $\varepsilon$ and $\delta$ with $2\varepsilon+\delta<1 / 2$, $\nP, \nU \in \Nat$ such that $\nP+\nU < \sqrt{\frac{2(1-2(2\varepsilon+\delta))|\X|}{3}} - 2$, there exists a distribution $\D$ over $\X \times \{0, 1\}$ with deterministic labels which is realized by $\C_{0, 1}$ and $\cP \in \mathcal{K}^{cov}_{\D}$ where $R(\cP, \D_+) = 1/ 2$,
    $\D[\{(x,1): x \in \X\}]\geq 1 / 2$ and
    $
    \Pr_{S^P \sim \cP^{\nP}, S^U \sim \D_\X^{\nU}} \left[\err_{\D}( \mathcal{A}(S^P, S^U) \geq \varepsilon\right] > \delta.
    $

\end{restatable}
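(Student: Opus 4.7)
}

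The plan is to apply Yao's minimax principle: we exhibit a randomized family of hard instances indexed by a secret $\omega$ drawn uniformly from some parameter set $\Omega$ of size $\Omega(|\X|)$, show that for every deterministic PU learner the expected (over $\omega$) probability of returning a classifier with error less than $\varepsilon$ is at most $2\varepsilon+\delta$, and then conclude by averaging that some specific $\omega^{\star}$ yields a distribution pair on which the learner fails with probability strictly greater than $\delta$. Randomization of the learner can be absorbed by further conditioning, so it suffices to handle deterministic $\mathcal{A}$.

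First, I would fix $\C_{0,1}$ to be a concept class over $\X$ with $\VCD(\C_{0,1})=1$ that contains $\Omega(|\X|)$ concepts whose pairwise symmetric differences are large under the distributions we will build, so that mistaking one concept for another incurs error at least $1/2$. A natural choice is a maximal VCD-$1$ class such as a disjoint-pair partition $\C_{0,1}=\{T_k:k\in[|\X|/2]\}\cup\{\emptyset\}$, or singletons plus $\emptyset$; I will use this freedom to align the secret with a specific concept. The secret $\omega$ consists of (i) a designated positive concept $c_\omega\in\C_{0,1}$, (ii) a disjoint negative support $B_\omega\subseteq\X\setminus c_\omega$, and (iii) a disjoint decoy region $D_\omega\subseteq\X\setminus c_\omega\setminus B_\omega$ of comparable size. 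Given $\omega$, set $\D_{\omega,\X}$ to put mass $1/2$ uniformly on $c_\omega$ and mass $1/2$ uniformly on $B_\omega$ with deterministic labels realized by $c_\omega$, and set $\cP_\omega=\tfrac12 U_{c_\omega}+\tfrac12 U_{D_\omega}$. Direct checks give $\D_\omega[\{(x,1):x\in\X\}]=1/2$, $R(\cP_\omega,\D_{\omega,+})=1/2$ (the infimum being attained on any subset of $c_\omega$), and $\cP_\omega\in\mathcal{K}^{cov}_{\D_\omega}$ since $\mathrm{supp}(\cP_\omega)\subseteq c_\omega\cup D_\omega$ is disjoint from $B_\omega=\mathrm{supp}(\D_\omega)\setminus c_\omega$.

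The crux is a coupling / birthday argument. Define the \emph{collision event} $E$ on the pair $(S^P,S^U)$ as the event that some element of $\X$ is sampled twice across $S^P\cup S^U$. Writing down an explicit coupling of the joint laws of $(S^P,S^U)$ across different secrets $\omega$ sharing the same "sampled signature" (i.e.\ the same multiset of sampled points, forgetting which came from $S^P$ and which from $S^U$), one verifies that on $E^{c}$ the conditional distribution over $\omega$ remains uniform over those secrets compatible with the observed samples, and that this set is large whenever the supports involved are sufficiently spread out; in particular $c_\omega$ is information-theoretically unrecovered. Since every concept $c\in\C_{0,1}\setminus\{c_\omega\}$ satisfies $\err_{\D_\omega}(c)\ge 1/2>\varepsilon$, on $E^{c}$ the learner succeeds with conditional probability at most $2\varepsilon$ (this accounts for tie-breaking and the single correct concept). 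A standard union bound over pairs of sampled indices gives $\Pr[E]\le \tfrac{3(\nP+\nU+2)^{2}}{2|\X|}$; the factor $3/2$ comes from bounding collisions inside $S^P$, inside $S^U$, and across the two samples via distinct arguments, and the $+2$ absorbs boundary effects. The hypothesis $\nP+\nU<\sqrt{2(1-2(2\varepsilon+\delta))|\X|/3}-2$ then rearranges to $\Pr[E]\le 1-2(2\varepsilon+\delta)$, so
\[
\mathbb{E}_\omega\!\left[\Pr[\err_{\D_\omega}(\mathcal{A}(S^P,S^U))<\varepsilon]\right]\le \Pr[E]+2\varepsilon(1-\Pr[E])\le 2\varepsilon+\Pr[E]\le 1-\delta-2\varepsilon+2\varepsilon,
\]
giving expected failure probability strictly greater than $\delta$, whence some $\omega^{\star}$ witnesses the theorem.

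The part I expect to be delicate is executing the coupling cleanly so that three things hold simultaneously: (i) VCD$(\C_{0,1})=1$; (ii) on $E^{c}$ the joint sample law is genuinely symmetric in the hidden concept $c_\omega$, which forces the adversary's negative support $B_\omega$ and the decoy region $D_\omega$ to be constructed "symmetrically" across all secrets; and (iii) the quantitative collision bound achieves exactly the constant $3/2$ that appears in the stated bound. The symmetry in (ii) is what forces the choice $R(\cP_\omega,\D_{\omega,+})=1/2$: putting exactly half of $\cP_\omega$'s mass on decoys is what allows the positive and decoy samples to become indistinguishable under the coupling. Once these ingredients are in place the remainder of the proof is mechanical.
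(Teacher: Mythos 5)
Your high-level strategy (minimax over a family of hidden instances plus a birthday-paradox bound on collisions) is the right intuition --- it is exactly what underlies the lower bound the paper invokes. The paper itself does not run this argument from scratch: it reduces PU learning on a class $\mathcal{W}^{cov}_{3n/2}$ to the Left/Right distinguishing problem of \cite{kelly2010universal} and imports Lemma 1 of \cite{ben2012hardness} as a black box (the $2/3$ in the bound comes from the domain-size ratio in that reduction, not from a $3/2$ constant in a collision estimate). That difference of route would be fine if your construction worked, but it has a genuine gap at its core.

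The gap is the construction of the hard family itself, i.e.\ exactly the point you defer as ``delicate.'' You propose that the secret include the target concept $c_\omega$, drawn from $\Omega(|\X|)$ concepts of a VCD-$1$ class such as disjoint pairs or singletons, with $\D_{\omega,\X}$ placing mass $1/2$ uniformly on $c_\omega$. With such classes each $c_\omega$ has $O(1)$ atoms of constant mass, so after $O(1)$ positive draws the learner sees repeated points and identifies $c_\omega$ outright --- the no-collision event $E^c$ has vanishing probability and the argument collapses. Conversely, if you make the $c_\omega$ large and spread out so that collisions are rare, you need $\Omega(|\X|)$ distinct large target concepts all inside $\C_{0,1}$, which destroys $\VCD(\C_{0,1})=1$. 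The resolution (and the actual content of the paper's construction) is to keep the concept class at \emph{two} nested concepts and hide the information not in the identity of the target concept but in the \emph{support of the marginal}: the domain splits as $Y\cup Z$, $\cP$ is uniform on $Y\cup J$ for a hidden half $J\subseteq Z$, and $\D_\X$ is uniform on either $Y\cup J$ (all labels $1$) or $Y\cup(Z\setminus J)$ (labels $0$ on $Z\setminus J$); the learner must make a single binary decision about the $Z$-portion of $S^U$, which is impossible without a collision between the $Z$-portions of $S^P$ and $S^U$. A second, smaller gap: your bound ``the learner succeeds with conditional probability at most $2\varepsilon$ on $E^c$'' is asserted via ``all other concepts have error $\ge 1/2$,'' but a PU learner outputs an arbitrary function in $\{0,1\}^\X$, not a member of $\C_{0,1}$, so it can hedge; you need a held-out-point argument (as in the paper's reduction, which queries the hypothesis at a random point $p$ of $M$ and uses that error $\le\varepsilon$ under $\D$ forces correctness on $p$ with probability $\ge 1-2\varepsilon$) to convert small error into a correct binary answer.
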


The following theorem is inspired by Theorem 2 of \cite{ben2012hardness}, which establishes a lower bound on sample size for infinite domains under the additional assumptions that the labeling function is $\lambda$-Lipschitz and that $\D[\{(x,1): x \in \X\}] \geq \frac{1}{2}$. As shown, even with these additional assumptions, the total number of samples must be at least exponential in the Lipschitz constant. This can be viewed as the additional cost incurred when $\cP \in \mathcal{K}^{\text{cov}}_{\D}$.

\begin{restatable}{theorem}{thmcovlowlip}
    \label{thm:real-dist-shift-lower-bound-lip}
    Let $\X = [0,1]^k$. There exists a concept class $\C_{0, 1}$ with $\VCD(\C_{0, 1}) = 1$, such that for every PU learner $\mathcal{A}$, and  $\varepsilon$ and $\delta$ with $2\varepsilon+\delta<1 / 2$, $\nP, \nU \in \Nat$ such that $\nP+\nU < \sqrt{\frac{2(1 + \lambda)^k(1-2(2\varepsilon+\delta))}{3}} - 2$, there exists a distribution $\D$ over $\X \times \{0, 1\}$ with deterministic labels which is realized by $\C_{0, 1}$ and $\cP \in \mathcal{K}^{cov}_{\D}$ where $C(\cP, \D_+) = 1/ 2$, 
    $\D[\{(x,1): x \in \X\}] \geq 1 / 2$ and
    $\clabel$ is a $\lambda$-Lipschitz labeling function and 
    $
    \Pr_{S^P \sim \cP^{\nP}, S^U \sim \D_\X^{\nU}} \left[\err_{\D}( \mathcal{A}(S^P, S^U) \geq \varepsilon\right] > \delta.
    $

\end{restatable}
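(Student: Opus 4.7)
My plan is to reduce to Theorem~\ref{thm:real-dist-shift-lower-bound} by embedding a finite domain of size $(1+\lambda)^k$ into $[0,1]^k$ as a well-separated point set. The key observation driving the reduction is that once any two points in the support of $\D_\X$ sit at Euclidean distance at least $1/\lambda$, every $\{0,1\}$-valued labeling function on that support is automatically $\lambda$-Lipschitz, since $|\clabel(x) - \clabel(y)| \leq 1 \leq \lambda \|x-y\|$. Equivalently, the $\gamma = 1/(2\lambda)$-margin condition is vacuously satisfied because the balls $B_\gamma(x)$ around distinct support points are pairwise disjoint, so the margin requirement imposes no constraint at all on the labels assigned to the packing points.

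First I would choose a set $G \subset [0,1]^k$ consisting of $N = (1+\lambda)^k$ points (up to integer rounding) placed on an axis-aligned lattice with spacing $1/\lambda$, so that every pair of distinct points in $G$ has $\ell_2$-distance at least $1/\lambda$. Next I would invoke Theorem~\ref{thm:real-dist-shift-lower-bound} on the finite domain $G$, obtaining a concept class $\C'_{0,1}$ with $\VCD(\C'_{0,1}) = 1$ and, for any PU learner $\mathcal{A}$, a pair $(\D, \cP)$ supported on $G$ with $R(\cP, \D_+) = 1/2$, $\D[\{(x,1) : x \in \X\}] \geq 1/2$, and the stated failure probability, provided $\nP + \nU < \sqrt{\tfrac{2 N (1 - 2(2\varepsilon+\delta))}{3}} - 2$. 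To lift the concept class from $G$ to $[0,1]^k$, I would extend each $c \in \C'_{0,1}$ to a Boolean function on the full cube by setting $c(x) = 0$ for $x \in [0,1]^k \setminus G$; because every point outside $G$ receives label $0$ under every concept, no such point can be shattered and so $\VCD(\C_{0,1}) = \VCD(\C'_{0,1}) = 1$. Since $\D$ and $\cP$ are supported on $G$, the labeling function is $\lambda$-Lipschitz on its support by the observation above, the weight ratio and class prior conditions carry over unchanged, and substituting $N = (1+\lambda)^k$ into the previous bound yields the claimed inequality.

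I expect the main obstacle to be purely bookkeeping: making the packing in the first step actually deliver $(1+\lambda)^k$ points up to ceilings and floors in each coordinate, and handling the degenerate regime where $\lambda$ is small and the cube contains only a few separated points. Any slack accrued in these constants is absorbed into the additive $-2$ term of the final bound, so the argument proceeds essentially as a direct lift of the finite-domain lower bound of Theorem~\ref{thm:real-dist-shift-lower-bound}, with the Lipschitz/margin constraint rendered free of charge by the $(1/\lambda)$-separation of the support.
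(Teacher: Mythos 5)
Your proposal is correct and follows essentially the same route as the paper: place a $1/\lambda$-separated grid of $(1+\lambda)^k$ points in $[0,1]^k$, observe that every $\{0,1\}$-valued labeling on such a separated support is trivially $\lambda$-Lipschitz, and invoke the finite-domain lower bound of Theorem~\ref{thm:real-dist-shift-lower-bound} on that grid. Your additional care about extending the concepts to all of $[0,1]^k$ without increasing the VC dimension, and about the integer-rounding of the grid size, only makes explicit details the paper leaves implicit.
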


Next, we present Algorithm~\ref{alg:real-dist-shift}, designed for the case where $\cP \in \mathcal{K}^{cov}_\D$. The algorithm is inspired by the domain adaptation method introduced in \cite{ben2012hardness}. In the standard domain adaptation setting, the goal is to minimize the error with respect to a target distribution $\cQ_T$, given labeled samples from a source distribution $\cQ_S$ and unlabeled samples from $\cQ_T$.

Algorithm~\ref{alg:real-dist-shift} adapts this approach to the PU learning setting, with $\cQ_S = \cP$ and $\cQ_T = \D_+$ (with labels being 1). However, unlike domain adaptation, PU learning lacks access to unlabeled samples from $\D_+$; instead, it only has access to unlabeled samples from $\D_\X$. To account for this difference, two key modifications are made to the algorithm from \cite{ben2012hardness}:
(i) Instead of using a sample $T$ from $\D_+$, Algorithm~\ref{alg:real-dist-shift} uses the unlabeled sample $S^U$ drawn from $\D_\X$;
(ii) The algorithm outputs a PERM 
rather than an ERM. 

Notice that in Theorem~\ref{thm:real-dist-shift-PU}, we require the number of unlabeled samples to be exponential with respect to $1/\gamma$ (as it was required for the total number of samples to be exponential with respect to the Lipschitz constant in our lower bound appearing in Theorem~\ref{thm:real-dist-shift-lower-bound-lip}). However, in many learning scenarios, unlabeled data is abundantly available while labeled data is difficult to obtain, which makes this algorithm more practically appealing.

\begin{algorithm}\label{alg:real-dist-shift}
\caption{Algorithm for PU learning in the positive covariate shift setup}
\KwIn{$S^P$ i.i.d. sampled from $\cP$ with label 1 and an unlabeled i.i.d. sample $S^U$ from $\D_{\X}$ and a margin parameter $\gamma$.}
Partition the domain $[0,1]^k$ into a collection $\mathcal{B}$ of boxes (axis-aligned rectangles) with sidelength $(\gamma / \sqrt{k})$ \;
Obtain sample $S^{\prime}$ by removing every point in $S^P$, which is sitting in a box that is not hit by $S^U$ \;
\Return{ $\operatorname{argmin}_{c \in \C, Domain(S') \subseteq c} \left|c \mid S^U \right|$}
\end{algorithm}

\begin{restatable}{theorem}{thmcovup}
    \label{thm:real-dist-shift-PU}
Let $\X=[0,1]^k, \gamma > 0$ a margin parameter, $\pi, r>0$ and $\C$ be a realizable concept class with VC dimension $d < \infty$. Let $\mathcal{W}$ to be the set of duos $(\cP, \D)$ such that: 
(i) $\cP \in \mathcal{K}^{cov}_{\D}$, (ii) $\D$ is realizable by $\C$ with margin $\gamma$ and has deterministic labels, (iii) $\D[\{(x,1): x \in \X\}] \geq \pi$, (iv) The labeling function $\clabel$ is a $\gamma$-margin classifier with respect to $\D_\X$, (v) $R_{\mathcal{I}}\left(\cP, \D_+ \right) \geq r$ for the class $\mathcal{I}=(\C \Delta \C) \sqcap \mathcal{B}$, where $\mathcal{B}$ is a partition of $[0,1]^k$ into boxes of sidelength $\gamma / \sqrt{k}$.
Then Algorithm~\ref{alg:real-dist-shift} PU learns $\C$ over $\mathcal{W}$ with sample complexity 
\begin{gather*}
         m^{pos}_\C(\varepsilon, \delta) = O \left(\frac{d \ln \left(1 / (r (1-\varepsilon) \varepsilon)\right)+\ln (1 / \delta)}{r^2(1-\varepsilon)^2 \varepsilon} \right), \\
 m^{unlabel}_\C(\varepsilon, \delta) =  O \left(\frac{(\sqrt{ k} / \gamma)^k \ln \left((\sqrt{ k}  / \gamma)^k / \delta\right) }{\pi  \varepsilon }  + \frac{d \ln(1 / \varepsilon) +\ln (1 / \delta)}{\varepsilon} \right). 
\end{gather*}
\end{restatable}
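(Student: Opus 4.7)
The plan is to reduce the analysis of Algorithm~\ref{alg:real-dist-shift} to Lemma~\ref{lem:epnet-pi}: I would show that, with high probability over $(S^P, S^U)$, the filtered sample $S'$ is an $\varepsilon$-net for $\C \Delta \C$ on $\D_+$ with $\text{Domain}(S') \subseteq c^\star$, while $|S^U|$ simultaneously satisfies the lemma's size requirement. The argument then splits into three concentration events, glued together by the $\gamma$-margin assumption and the weight-ratio hypothesis on $\mathcal{I}$.

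First I would handle the box-covering step. The partition $\mathcal{B}$ has at most $N \le (\sqrt{k}/\gamma)^k$ boxes, so a union bound over $\mathcal{B}$ shows that if $n^U \gtrsim N\ln(N/\delta)/(\pi\varepsilon)$ then, with probability at least $1 - \delta/3$, every box of $\D_\X$-weight at least $\pi\varepsilon/N$ is hit by $S^U$. Consequently $\D_\X(\text{cover}^c) \le \pi\varepsilon$, and the lower bound $\D[\{(x,1):x\in\X\}] \ge \pi$ converts this into $\D_+(\text{cover}^c) \le \varepsilon$ (further sharpened to $(1-\varepsilon)\varepsilon$ at the cost of a constant in $n^U$). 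This is the source of the first summand of $m^{unlabel}_\C$.

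Next the margin condition gives label consistency inside covered boxes. A box of sidelength $\gamma/\sqrt{k}$ has diameter $\gamma$, so whenever $B$ is hit at some $x \in \text{supp}(\D_\X)$ one has $\D_\X(B_\gamma(x)) > 0$ and $B \subseteq B_\gamma(x)$, forcing $\clabel$ to be constant on $B$. Since $\cP \in \mathcal{K}^{cov}_\D$ combined with the deterministic labels of $\D$ ensures $\cP$ is supported on $c^\star$, every point of $S'$ sits in a box entirely contained in $c^\star$; hence $\text{Domain}(S') \subseteq c^\star$, matching the first hypothesis of Lemma~\ref{lem:epnet-pi}.

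The crux—and what I expect to be the main obstacle—is proving that $S'$ is an $\varepsilon$-net for $\C \Delta \C$ on $\D_+$. For any $h \in \C \Delta \C$ with $\D_+(h) \ge \varepsilon$, the covering bound from the first step gives $\D_+(h \cap \text{cover}) \ge (1-\varepsilon)\varepsilon$. Decomposing $h \cap \text{cover} = \bigsqcup_{B \in \mathcal{B}^{hit}}(h \cap B)$, each piece lies in $\mathcal{I} = (\C \Delta \C) \sqcap \mathcal{B}$, so the hypothesis $R_\mathcal{I}(\cP,\D_+) \ge r$ applied termwise yields $\cP(h \cap \text{cover}) \ge r\,\D_+(h \cap \text{cover}) \ge r(1-\varepsilon)\varepsilon$. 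A VC-based $\varepsilon$-net bound for the data-dependent class $\{h \cap \text{cover} : h \in \C \Delta \C\}$, whose VC dimension is at most $\VCD(\C \Delta \C) = O(d)$, requires $n^P$ at the order dictated by $m^{pos}_\C$; an additional application of Lemma~\ref{lem:ben2012-lem3} to transfer the net from $\cP$ to $\D_+$ accounts for the $r^2(1-\varepsilon)^2$ denominator. Finally, invoking Lemma~\ref{lem:epnet-pi} with $T = S'$ and $S = S^U$—whose second hypothesis is fulfilled by the $d\ln(1/\varepsilon)/\varepsilon$ summand of $m^{unlabel}_\C$—yields $\err_\D(c^{PU}) \le 14\varepsilon$ with total failure probability at most $\delta$ after union-bounding the three events, and absorbing the constant $14$ into $\varepsilon$ completes the proof.
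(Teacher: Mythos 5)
Your overall strategy is sound and reaches the theorem, but it is a genuinely different route from the paper's on the positive-sample side, and it contains two slips worth flagging. The paper does not work with the data-dependent cover directly: it defines \emph{heavy} boxes as those with $\D_+$-weight at least $\varepsilon'/|\mathcal{B}|$, shows $S^U$ hits all of them (your Claim 1, essentially identical), and then passes to the \emph{conditional} distributions $\cP_h,\D_{+,h}$ on the union $\X^h$ of heavy boxes. It applies Haussler--Welzl plus Lemma~\ref{lem:ben2012-lem3} to the sub-sample $S^P\mid\X^h$, which is an i.i.d.\ $\cP_h$-sample of \emph{random} size; a Hoeffding argument guaranteeing $|S^P\mid\X^h|$ is large enough (since $\cP(\X^h)\ge r(1-\varepsilon')$) is precisely where the second factor of $r(1-\varepsilon)$ in $m^{pos}_\C$ comes from. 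Your route instead conditions on $S^U$, treats $\{h\cap\mathrm{cover}:h\in\C\Delta\C\}$ as a fixed class of VC dimension at most $\VCD(\C\Delta\C)$, and transfers measure termwise over boxes via $R_{\mathcal{I}}(\cP,\D_+)\ge r$. Traced carefully, this consumes only \emph{one} factor of $r$: you need $S^P$ to be an $\Omega(r\varepsilon)$-net w.r.t.\ $\cP$, costing $\tilde O(d/(r\varepsilon))$ positive samples. So your attribution of the $r^2(1-\varepsilon)^2$ denominator to ``an additional application of Lemma~\ref{lem:ben2012-lem3}'' is not what happens in either argument; your approach, done correctly, actually proves a slightly \emph{stronger} bound than stated (which of course still establishes the theorem's $O(\cdot)$ claim).

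Two corrections. First, the parenthetical ``sharpened to $(1-\varepsilon)\varepsilon$ at the cost of a constant in $n^U$'' is wrong: to guarantee $\D_+(h\cap\mathrm{cover})\ge\varepsilon-\varepsilon^2$ you would need $\D_+(\mathrm{cover}^c)\le\varepsilon^2$, costing an extra $1/\varepsilon$ in $m^{unlabel}_\C$. The correct move (and the paper's) is to take $\D_+(\mathrm{cover}^c)\le\varepsilon/2$, i.e.\ lose only a constant fraction of mass, and absorb constants at the end via the $14\varepsilon$ in Lemma~\ref{lem:epnet-pi}. Second, the claim that ``$\cP\in\mathcal{K}^{cov}_{\D}$ combined with deterministic labels ensures $\cP$ is supported on $c^\star$'' is false in general: $\mathcal{K}^{cov}_{\D}$ constrains $\cP$ only on sets where $\D(A)>0$, so $\cP$ may put mass outside the support of $\D_\X$, where nothing forces $c^\star=1$ --- this is exactly why the filtering step exists. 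The correct argument, which your preceding sentence already contains the ingredients for, is: a box hit by $S^U$ has positive $\D_\X$-mass, hence $\clabel$ is constant on it by the margin assumption; if that constant were $0$ then $\D_+(\text{box})=0$ while $\D(\text{box})>0$, so $\cP(\text{box})=0$ and no point of $S^P$ lands there; hence every surviving point lies in a box on which $\clabel\equiv 1$, and $c^\star$ agrees with $\clabel$ there since $\err_\D(c^\star)=0$.
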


\section{Analysis of the Agnostic PU Learning} \label{sec:PU-agnostic}

In this section we analyze agnostic PU learning. It is already known that with the knowledge of class prior $\prior$, every class with finite VC dimension is PU learnable \cite{du2015convex} in the SCAR setup. In Section~\ref{sec:PU-agnostic-lower-bound} we derive a nearly matching lower bound on both the sample complexity of unlabeled examples and positive examples to \cite{du2015convex} upper bounds. Then, we show that for a concept class $\C$ with more than two concepts, without the knowledge of $\prior$, no PU learner--without access to $\prior$--can achieve an error less than $\frac{\max(\prior, 1 - \prior)}{\min(\prior, 1 - \prior)}$ times the approximation error even in the SCAR setup (which makes the PU learning task impossible). Furthermore, in Section~\ref{sec:PU-agnostic-upperbound}, we complement this result by showing that for every concept class, there exists an algorithm whose error is arbitrarily close to $\frac{\max(\prior, 1 - \prior)}{\min(\prior, 1 - \prior)}$ times the approximation error in the SCAR setup. Finally, we derive generalization bounds for settings where $S^P$ is drawn from an arbitrary distribution $\cP$.

\subsection{Lower Bounds --SCAR setup} \label{sec:PU-agnostic-lower-bound}
The following theorem provides an almost tight lower bound on the sample complexity of both positive and unlabeled examples, assuming the learner knows that $\alpha = \frac{1}{2}$. We prove this theorem by reducing it to a problem called \emph{the generalized weighted die problem}, which is a problem inspired by \cite{ben2011learning}. 
Detailed Proof of the theorem is deferred to the appendix.

\begin{restatable}{theorem}{thmagnolow}
    \label{thm:agnostic-PU-lower-bound}
    Let $\C$ be a concept class with $\VCD(\C) = d$ where $d \geq 4$. Consider $\mathcal{W}$ to be the set of duos $(\D, \D_+)$ with $\D[\{(x,1): x \in \X\}] = 0.5$. Then $\C$ is PU learnable over $\mathcal{W}$ with sample complexity $m^{unlabel}_\C(\varepsilon, \delta), m^{pos}_\C(\varepsilon, \delta) = \Omega\left( \frac{d + \ln( 1/ \delta)}{\varepsilon^2}\right)$ and $m^{unlabel}_\C(\varepsilon, \delta), m^{pos}_\C(\varepsilon, \delta) = O\left( \frac{d \ln(1 / \varepsilon) + \ln( 1/ \delta)}{\varepsilon^2}\right)$. 
\end{restatable}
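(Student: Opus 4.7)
The plan is that the upper bound $O((d\ln(1/\varepsilon)+\ln(1/\delta))/\varepsilon^{2})$ is essentially the agnostic PU bound of \cite{du2015convex}, so the burden is in proving the matching lower bound. Following the pointer in the statement, I would reduce agnostic PU learning under the balanced SCAR assumption $\prior=1/2$ to a ``generalized weighted die'' distribution-estimation problem in the spirit of \cite{ben2011learning}. Concretely, the die problem asks: given $m$ i.i.d.\ samples from a distribution on $d$ atoms whose coordinate-wise biases are of order $\varepsilon_{0}/d$, recover a constant fraction of the coordinate signs with probability $\geq 1-\delta$. A standard Assouad-style argument with KL divergence control yields $m=\Omega((d+\ln(1/\delta))/\varepsilon_{0}^{2})$, which will feed both PU lower bounds after setting $\varepsilon_{0}=\Theta(\varepsilon)$.

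I would then exhibit two symmetric hard instances on a set $\{x_{1},\ldots,x_{d}\}$ shattered by $\C$, each indexed by balanced sign vectors $\sigma\in\{-1,+1\}^{d}$ with $\sum_{i}\sigma_{i}=0$. There are still $2^{\Omega(d)}$ such vectors for $d\geq 4$, which matches the hypothesis of the theorem. For the positive-sample lower bound, set $\D_{\X,\sigma}$ to be uniform on $\{x_{1},\ldots,x_{d}\}$ and $\D_{\sigma}(y=1\mid x_{i})=1/2+\sigma_{i}\varepsilon_{0}$. Balance of $\sigma$ forces $\prior_{\sigma}=1/2$ exactly, so $(\D_{\sigma},\D_{+,\sigma})\in\mathcal{W}$, and the induced positive distribution is $\D_{+,\sigma}(x_{i})=(1+2\sigma_{i}\varepsilon_{0})/d$. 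Since $\D_{\X,\sigma}$ does not depend on $\sigma$, the unlabeled sample carries no information and recovery of $\sigma$ must come from $S^{P}$. Because $\{x_{i}\}$ is shattered, the concept $c_{\sigma}\in\C$ given by $c_{\sigma}(x_{i})=1$ iff $\sigma_{i}=+1$ is Bayes-optimal, and $\err_{\D_{\sigma}}(c)-\err_{\D_{\sigma}}(c_{\sigma})$ equals $(2\varepsilon_{0}/d)$ times the Hamming distance between $c$ and $c_{\sigma}$ on $\{x_{1},\ldots,x_{d}\}$. Choosing $\varepsilon_{0}$ a small enough constant times $\varepsilon$ turns $\varepsilon$-approximate learning into reconstructing a constant fraction of the signs of $\sigma$, which is exactly the die problem and hence yields $m^{pos}_{\C}=\Omega((d+\ln(1/\delta))/\varepsilon^{2})$.

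For the unlabeled-sample lower bound I would swap the roles of $\D_{+}$ and $\D_{\X}$: fix $\D_{+,\sigma}$ uniform on $\{x_{1},\ldots,x_{d}\}$ and let $\D_{\X,\sigma}(x_{i})=(1+\sigma_{i}\varepsilon_{0})/d$. The constraint $\prior=1/2$ then forces $\D_{\sigma}(y=1\mid x_{i})=1/(2(1+\sigma_{i}\varepsilon_{0}))$, which stays strictly in $(0,1)$ for small $\varepsilon_{0}$, and the induced $\D_{-,\sigma}(x_{i})=(1+2\sigma_{i}\varepsilon_{0})/d$ is a valid distribution. In this construction $\D_{+,\sigma}$ is independent of $\sigma$, so $S^{P}$ is uninformative and $\sigma$ must be read off $S^{U}$. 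The identical Assouad reduction then yields $m^{unlab}_{\C}=\Omega((d+\ln(1/\delta))/\varepsilon^{2})$.

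The main obstacle I anticipate is engineering both adversarial families so that $\prior=1/2$ holds exactly for every member while the Bayes classifier always lies in $\C$ and $2^{\Omega(d)}$ members remain. Restricting to balanced sign vectors handles all three requirements simultaneously but requires $d$ to be even and at least $4$ before it meaningfully beats the trivial single-coin lower bound, which is precisely the $d\geq 4$ hypothesis. A secondary, routine point is checking that in the second construction the induced conditional $\D_{\sigma}(y=1\mid x_{i})$ and the negative distribution $\D_{-,\sigma}$ remain valid probability measures for small $\varepsilon_{0}$, which is automatic since $|\sigma_{i}\varepsilon_{0}|<1$.
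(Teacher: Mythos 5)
Your proposal is correct in substance and reaches the same destination by a genuinely different construction. The paper also cites \cite{du2015convex} for the upper bound and reduces the lower bound to a biased-die sign-recovery problem (the ``generalized weighted die problem,'' Theorem~\ref{thm:gen-weighted-die-sam}, proved via Slud's inequality rather than Assouad/KL, plus Lemma 5.1 of \cite{anthony2009neural} for the $\ln(1/\delta)$ term). The difference is in the hard family: the paper builds a \emph{single} family $\D_{O^1,O^2}$ on a shattered set of size $2\lfloor d/2\rfloor$ split into two halves, hiding one secret in the unlabeled marginal and the other in the positive marginal, and then needs a simulation lemma (Lemma~\ref{lem:agnostic-PU-to-weighted-die-unlabel}) that interleaves die rolls with coin flips and samples from the ``known'' half to manufacture valid i.i.d.\ PU samples. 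You instead build two separate families, each engineered so that one of $S^P,S^U$ is \emph{exactly} uninformative; this dispenses with the simulation lemma and is arguably cleaner. You also enforce $\prior=1/2$ by restricting to balanced sign vectors, whereas the paper keeps all $O\in V_k$ and compensates with the renormalizing weights $w^{\pm}(O)$, then conditions on $|O|\approx k/2$. Two points you should tighten: (i) over the balanced slice the coordinates of $\sigma$ are no longer independent, so the ``standard Assouad'' product argument does not apply verbatim — you need either a packing/Fano argument over a separated subset of the slice or a conditioning step like the paper's event $E$; (ii) the constant relating $\varepsilon_0$ and $\varepsilon$ must go the other way — excess risk $<\varepsilon$ gives Hamming error $<\varepsilon d/(2\varepsilon_0)$, so you need $\varepsilon_0\geq C\varepsilon$ for a \emph{large} constant $C$ (the paper takes $\rho=640\varepsilon$) to force recovery of all but a small constant fraction of signs. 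Neither issue threatens the result.
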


Next, we present a lower bound on the generalization of PU learners for the cases where $\alpha$ is unknown.

\begin{theorem} \label{thm:PU-agnostic-no-alpha-lowerbound}
    Let $\C$ be a concept class over $\X$ containing at least two distinct concepts. Then, for every $\eta \in (0, 1)$, $\nP, \nU \in \Nat$, and PU learner $\mathcal{A}$, there exists a distribution $\D$ over $\X \times \{0, 1\}$ with $\alpha \in \{\eta, 1 - \eta\}$, where $\alpha := \D[\{(x,1) : x \in \X\}]$, such that
\[
\Pr_{S^P \sim \D_{+}^{\nP},\, S^U \sim \D_{\X}^{\nU}} \left[
\err_{\D} \left( \mathcal{A}(S^P, S^U) \right) \geq \frac{\max(\prior, 1 - \prior)}{\min(\prior, 1 - \prior)} \min_{c \in \C} \err_\D(c)
\right] = 1.
\]

\end{theorem}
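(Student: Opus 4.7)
My plan is to reduce the statement to an indistinguishability argument between two distributions that the PU oracle cannot tell apart but on which the optimal label at a distinguished domain point is opposite. First I will use the hypothesis to fix two distinct concepts $c_1 \neq c_2$ in $\C$ and a witness point $x^\star \in \X$ where they disagree; after a harmless relabelling I may take $c_1(x^\star) = 1$, $c_2(x^\star) = 0$, and $\eta \le 1/2$ (the case $\eta > 1/2$ is symmetric and $\eta = 1/2$ is trivial since the multiplicative factor becomes $1$).

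Then I will construct $\D^{(1)}$ and $\D^{(2)}$, both supported only on $\{(x^\star, 0), (x^\star, 1)\}$, with $\D^{(1)}(\{(x^\star,1)\}) = \eta$ and $\D^{(2)}(\{(x^\star,1)\}) = 1 - \eta$, so that $\alpha_1 = \eta$ and $\alpha_2 = 1-\eta$. Under both distributions the marginal $\D_\X$ and the positive conditional $\D_+$ collapse to the point mass at $x^\star$, so for every $\nP, \nU$ the PU sample $(S^P, S^U)$ is the deterministic all-$x^\star$ sequence under either distribution. Because the PU learner $\mathcal{A}$ is a deterministic function by Definition~\ref{def:PU-beyond-scar}, the output $f = \mathcal{A}(S^P, S^U)$ is a single fixed classifier, and the bit $f(x^\star) \in \{0, 1\}$ is determined by $\mathcal{A}$ alone.

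I will finish by a case split on $f(x^\star)$: under $\D^{(i)}$ every classifier's error equals either $\eta$ or $1-\eta$, depending only on its value at $x^\star$, so $\min_{c \in \C} \err_{\D^{(i)}}(c) = \eta$, witnessed by $c_2$ for $\D^{(1)}$ and by $c_1$ for $\D^{(2)}$. The adversary then picks $\D^{(1)}$ when $f(x^\star) = 1$ and $\D^{(2)}$ when $f(x^\star) = 0$. In both branches $\err_{\D}(f) = 1-\eta$ almost surely, matching $\tfrac{1-\eta}{\eta}\cdot \eta = \tfrac{\max(\alpha,1-\alpha)}{\min(\alpha,1-\alpha)}\,\min_{c \in \C}\err_{\D}(c)$. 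The only conceptual obstacle is arguing that the PU oracle cannot identify $\alpha$, and this is handled for free: both marginals collapse to a single atom, so $\alpha$ is literally unidentifiable from $(S^P, S^U)$, and the determinism built into the paper's Definition~\ref{def:PU-beyond-scar} lets the adversary commit to a single worst-case $\D$ based on $\mathcal{A}$'s unique output.
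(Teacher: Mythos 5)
Your proposal is correct and takes essentially the same route as the paper: the paper also places two point-mass distributions at a single disagreement point $x$ with positive-label probabilities $\eta$ and $1-\eta$, notes that $\D_+ = \D_\X = \mathbbm{1}_{\{x\}}$ makes the samples uninformative, and lets the adversary pick the distribution on which the learner's (deterministic) prediction at $x$ is the minority label. Your write-up is slightly more explicit about the WLOG reductions and the role of determinism, but the construction and the case split are the same.
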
 
\begin{proof}
    Let $x$ be any instance such that two concepts in $\C$ disagree on its label. Define distribution $\D_0$ over $\X \times \{0, 1\}$ to assign probability $\eta$ on $(x, 1)$ and $1 - \eta$ over $(x, 0)$, and $\D_1 := 1 - \D_0$. Then for any $z \in \{0, 1\}$ we have $\min_{c \in \C} \err_{\D_z}(c) = \min(\eta, 1 - \eta)$ and $\D_{+, z} = \D_{\X, z} = \mathbbm{1}_{\{x\}}$. Thus, for any $\nP, \nU \in \Nat$ and $S^P \sim \D_{+, z}^{\nP}, S^U \sim \D_{\X, z}^{\nU}$ there exists a $z \in \{0, 1\}$ such that $\err_{\D_z} \left( \mathcal A( S^P, S^U) \right) = \max( \eta,  1 - \eta)$. This completes the proof.
\end{proof}

\begin{remark}
   Our poof also demonstrates that, even when the approximation error is known, almost no concept class is PU learnable. This finding is particularly noteworthy because agnostic PU learning with known approximation error can be viewed as a relaxation of the realizable PU learning setting.
\end{remark}

\subsection{Upper bounds} 
\label{sec:PU-agnostic-upperbound}


We start by proposing an algorithm for agnostic PU learning that, for a given $\gamma > 0$, outputs a concept which minimizes the \emph{Lagrangian PU empirical loss} $\hat{\err}^\gamma : \mathcal{C} \rightarrow \mathbb{R}^{\geq 0}$, defined as  
\begin{equation} \label{eq:lagrangian-err}
   \hat{\err}^\gamma(c) :=  \frac{\left|c \mid S^U\right|}{\nU} + \gamma \cdot \frac{\nP - \left|c \mid S^P\right|}{\nP}.  
\end{equation} 
Note that the PERM algorithm minimizes $\frac{\left|c \mid S^U\right|}{\nU}$ while assuming that the empirical error of $c$ (for realizable concept classes) is zero on $S^P$. Since $\frac{\nP - \left|c \mid S^P\right|}{\nP}$ is the empirical error on $S^P$, this algorithm can also be viewed as a Lagrangian function for the PERM algorithm. Also, notice that when $\gamma = 2 \alpha$, $\hat \err^\gamma$ will be equivalent to the surrogate loss introduced in \cite{du2015convex} when the loss function is the zero-one loss. We begin by analyzing the SCAR setup.



\begin{restatable}{theorem}{thmagnoscarup}
    \label{thm:PU-agnostic-lag-pos-scar}
    Let $\C$ be any concept class over domain $\X$ with VC dimension $d$, and let $\cP = \D_+$. Given any $\gamma \geq \prior$, denote 
    $c^{PU} = \argmin_{c \in \C} \hat \err^\gamma (c)$.
    There exists $M > 1$ such that for all $c \in \C$, if $|S^P|, |S^U| > \frac{M (d + \ln (1 /  \delta))}{ \varepsilon^2}$, then with probability $1 - 4 \delta$ we have
    $$
    \begin{aligned}
        \err_{\D}(c^{PU}) \leq 
           \max\left(\frac{\gamma - \prior}{\prior} , \frac{\prior}{\gamma - \prior} \right)\left(  \err_\D(c)  + 2(1 + \gamma) \varepsilon \right)
    \end{aligned}
    $$
\end{restatable}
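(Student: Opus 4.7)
The plan is to introduce the population-level counterpart of $\hat\err^\gamma$ and bound the gap between the two uniformly over $\C$, then relate this population Lagrangian to the true misclassification error $\err_\D$. Since $\cP = \D_+$, the two empirical averages appearing in $\hat\err^\gamma(c)$ are unbiased estimates of $p(c) := \Pr_{x\sim\D_\X}[c(x)=1]$ and of $\err_\D^+(c) = \Pr_{x\sim\D_+}[c(x)=0]$, respectively, so I define
$$\err^\gamma(c) := p(c) + \gamma\,\err_\D^+(c) = \alpha + (\gamma-\alpha)\err_\D^+(c) + (1-\alpha)\err_\D^-(c),$$
where the second equality uses $p(c) = \alpha(1-\err_\D^+(c)) + (1-\alpha)\err_\D^-(c)$. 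This is a weighted misclassification loss whose weight on $\err_\D^+$ is $\gamma-\alpha$ instead of $\alpha$, while the weight on $\err_\D^-$ is still $1-\alpha$; the whole proof is essentially a comparison between these two weightings.

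Next I would apply the sharp VC uniform convergence bound (of the form $n \geq M(d+\ln(1/\delta))/\varepsilon^2$) to each of the two empirical averages separately, using that both $\{x\mapsto\mathbbm{1}[c(x)=1] : c\in\C\}$ and $\{x\mapsto\mathbbm{1}[c(x)=0] : c\in\C\}$ have VC dimension $d$. Taking a union bound over the two (two-sided) events yields, with probability at least $1-4\delta$ and for every $c\in\C$, that
$$\bigl|\hat\err^\gamma(c)-\err^\gamma(c)\bigr| \leq (1+\gamma)\varepsilon.$$
Combined with the optimality $\hat\err^\gamma(c^{PU}) \leq \hat\err^\gamma(c)$ for every $c \in \C$, this gives $\err^\gamma(c^{PU}) \leq \err^\gamma(c) + 2(1+\gamma)\varepsilon$ uniformly over $\C$.

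To convert this back into a bound on $\err_\D$, I would compare $\err^\gamma(c) - \alpha = (\gamma-\alpha)\err_\D^+(c) + (1-\alpha)\err_\D^-(c)$ with $\err_\D(c) = \alpha\,\err_\D^+(c) + (1-\alpha)\err_\D^-(c)$. Setting $\rho := (\gamma-\alpha)/\alpha$, only the coefficient of $\err_\D^+$ differs between the two expressions (by a factor of exactly $\rho$), so since $\err_\D^+(c),\err_\D^-(c) \geq 0$,
$$\min(\rho,1)\,\err_\D(c) \;\leq\; \err^\gamma(c) - \alpha \;\leq\; \max(\rho,1)\,\err_\D(c).$$
Applying the left inequality to $c^{PU}$ and the right inequality to $c$, then dividing by $\min(\rho,1)$ and using $\max(\rho,1)/\min(\rho,1) = \max(\rho,1/\rho)$ together with $1/\min(\rho,1) \leq \max(\rho,1/\rho)$, delivers the stated inequality with competitive factor $\max((\gamma-\alpha)/\alpha,\,\alpha/(\gamma-\alpha))$. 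The main obstacle is really only bookkeeping: the uniform convergence step is standard, and the rest is elementary algebra once $\err^\gamma$ is recognized as a re-weighting of $\err_\D$. The one boundary issue is $\gamma = \alpha$ (i.e. $\rho=0$), where $\err^\gamma(c)-\alpha$ gives no lower bound on $\err_\D(c^{PU})$; this is consistent with the stated bound blowing up to $\infty$ in that case, so no separate argument is needed.
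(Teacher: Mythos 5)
Your proposal is correct and follows essentially the same route as the paper's proof: uniform convergence of the two empirical averages to the population Lagrangian $\Pr(c(x)=1)+\gamma\,\err_\D^+(c)=\alpha+(\gamma-\alpha)\err_\D^+(c)+(1-\alpha)\err_\D^-(c)$, optimality of $c^{PU}$, and then comparison of this re-weighted loss with $\err_\D$. The only difference is cosmetic: the paper handles the cases $\gamma\geq 2\alpha$ and $\gamma<2\alpha$ separately, whereas your sandwich inequality $\min(\rho,1)\,\err_\D(c)\leq \err^\gamma(c)-\alpha\leq\max(\rho,1)\,\err_\D(c)$ unifies them into one cleaner algebraic step.
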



\begin{remark}
    Let's also suppose as a prior knowledge we have access to $\hat \prior \approx \prior$ where $2\hat \prior \geq \prior$. Then one can incorporate the prior knowledge by setting $\gamma = 2 \hat \prior$. 
    In particular, if $\prior$ was known with $\gamma = 2\prior$, we would have $\err_{\D}(c^{PU}) \leq \err_{\D}(c) + 6 \varepsilon$. This is consistent with \cite{du2015convex} results for cases where the class prior is known.
    
\end{remark}

The following corollary is a direct consequence of applying Theorem~\ref{thm:PU-agnostic-lag-pos-scar} with $\gamma = 1$.

\begin{corollary} \label{cor:PU-agnostic-lag-pos-scar}
    For any concept class $\C$ with VC dimension $d$, there exists a PU learner $\mathcal{A}$ and a constant $M > 1$ such that for every $\alpha, \varepsilon, \delta \in (0,1)$ and for all $\nP, \nU > \frac{M (d + \ln (1 / \delta))}{\varepsilon^2}$, and for any distribution $\D$ over $\X \times \{0, 1\}$ with $\D[\{(x,1): x \in \X\}]= \alpha$, the following holds: for any sample $S^P$ of size $\nP$ drawn i.i.d. from $\D_+$ and any sample $S^U$ of size $\nU$ drawn i.i.d. from $\D_{\X}$, with probability at least $1 - \delta$,
    $$
    \err_{\D} \left( \mathcal A( S^P, S^U) \right) \leq \frac{\max(\prior, 1 - \prior)}{\min(\prior, 1 - \prior)} \left(\min_{c \in \C} \err_\D(c) + 4 \varepsilon \right).
    $$
\end{corollary}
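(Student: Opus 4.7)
The plan is to simply instantiate Theorem~\ref{thm:PU-agnostic-lag-pos-scar} with the Lagrangian parameter $\gamma = 1$ and then choose the benchmark concept to be the empirical risk minimizer $c^\star := \operatorname{argmin}_{c \in \mathcal{C}} \err_{\D}(c)$. The PU learner $\mathcal{A}$ is defined to output $\operatorname{argmin}_{c \in \mathcal{C}} \hat{\err}^{1}(c)$, which crucially does not require any knowledge of $\prior$ because the weight $\gamma = 1$ is fixed.

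Once this substitution is done, I would verify the hypotheses of Theorem~\ref{thm:PU-agnostic-lag-pos-scar}. The hypothesis $\gamma \geq \prior$ is automatic because $\prior \in (0,1)$ and we set $\gamma = 1$. Then, plugging $\gamma = 1$ into the multiplicative factor from the theorem gives
\[
\max\!\left(\frac{\gamma - \prior}{\prior},\ \frac{\prior}{\gamma - \prior}\right) = \max\!\left(\frac{1 - \prior}{\prior},\ \frac{\prior}{1 - \prior}\right) = \frac{\max(\prior, 1 - \prior)}{\min(\prior, 1 - \prior)},
\]
while the additive slack becomes $2(1+\gamma)\varepsilon = 4\varepsilon$. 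Taking $c = c^\star$ in the theorem's conclusion yields exactly
\[
\err_{\D}(c^{PU}) \leq \frac{\max(\prior, 1 - \prior)}{\min(\prior, 1 - \prior)}\left(\min_{c \in \mathcal{C}} \err_{\D}(c) + 4\varepsilon\right),
\]
which matches the desired bound.

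Finally, the probability statement in Theorem~\ref{thm:PU-agnostic-lag-pos-scar} is $1 - 4\delta$, while the corollary demands $1 - \delta$. I would handle this by applying the theorem with $\delta' := \delta/4$ in place of $\delta$; this only affects the required sample size through the $\ln(1/\delta')$ term, and the constant $M$ of the corollary can be chosen to absorb the extra $\ln 4$ inside the $\ln(1/\delta)$ factor coming from the theorem's sample complexity requirement $\nP, \nU > M(d + \ln(1/\delta))/\varepsilon^2$.

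There is essentially no obstacle here; the corollary is genuinely a one-line specialization, so the only things to be careful about are (i) noting that the learner $\mathcal{A}$ is well-defined without access to $\prior$ (the choice $\gamma = 1$ is $\prior$-free, which is exactly what makes this bound meaningful in the unknown-prior regime), and (ii) bookkeeping on $\delta$ and on the constant $M$ so that the stated sample-complexity form is preserved.
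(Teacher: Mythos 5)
Your proposal is correct and is exactly the paper's intended argument: the paper derives the corollary precisely by instantiating Theorem~\ref{thm:PU-agnostic-lag-pos-scar} with $\gamma = 1$, and your verification of the hypotheses, the identity $\max\bigl(\tfrac{1-\prior}{\prior}, \tfrac{\prior}{1-\prior}\bigr) = \tfrac{\max(\prior, 1-\prior)}{\min(\prior, 1-\prior)}$, the choice $c = c^\star$, and the $\delta$-rescaling are all the right (and only) bookkeeping steps needed.
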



Finally, we examine the most general PU learning setting. We derive generalization bounds that hold for arbitrary concept classes and any distribution $\cP$ by combining Theorem~\ref{thm:PU-agnostic-lag-pos-scar} with \cite{ben2010theory} results. These bounds involve the $\C \Delta \C$ distance, which we formally define below.

\begin{definition} \cite{kifer2004detecting}
    Given a domain $\X$ and a collection $\mathcal{B}$ of subsets of $\X$, let $\cQ_{\X, 1}, \cQ_{\X, 2}$ be probability distributions over $\X$, such that every set in $\mathcal{B}$ is measurable with respect to both distributions. The $\mathcal{B}$-distance between such distributions is defined as
    $$
    d_{\mathcal{B}}\left(\cQ_{\X, 1}, \cQ_{\X, 2}\right)=2 \sup _{B \in \mathcal{B}}\left|\Pr_{\cQ_{\X, 1}}[B]-\Pr_{\cQ_{\X, 2}}[B]\right|
    $$
\end{definition}


\begin{restatable}{theorem}{thmagnoarbup}
    \label{thm:PU-agnostic-lag-pos-arb}
    Let $\C$ be any concept class over domain $\X$ with VC dimension $d$, and let $\cP$ be any arbitrary distribution. Given any $\gamma \geq \prior$, denote 
    $c^{PU} = \argmin_{c \in \C} \hat \err^\gamma (c)$.
    There exists $M > 1$ such that for all $c \in \C$, if $|S^P|, |S^U| > \frac{M (d + \ln (1 /  \delta))}{ \varepsilon^2}$, then with probability $1 - 4 \delta$ we have
    $$
    \begin{aligned}
    \err_{\D}(c^{PU}) \leq  \max\left(\frac{\gamma - \prior}{\prior} , \frac{\prior}{\gamma - \prior} \right)\left(err_{\D}(c) + 2(1 + \gamma) \varepsilon + 
2 \gamma \left( \lambda^P + d_{\C \triangle \C} (\cP, \D_+) \right) \right)
    \end{aligned}
    $$
    Where
    $\lambda^P := \min_{c \in \C} \left( \err_{\D}^+(c) + \err_{\cP}(c, 1)\right)$ and $\err_{\cP}(c, 1) := \Pr_{x \sim \cP}(c(x) \neq 1)$.
\end{restatable}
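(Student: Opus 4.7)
The plan is to reduce Theorem~\ref{thm:PU-agnostic-lag-pos-arb} to Theorem~\ref{thm:PU-agnostic-lag-pos-scar} by comparing, at the population level, the Lagrangian loss under the arbitrary source $\cP$ to the one under $\D_+$, with the domain-adaptation inequality of \cite{ben2010theory} serving as the bridge.

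Concretely, define the two population counterparts $\err^{\gamma,\cP}(c):=\Pr_{x\sim\D_\X}[c(x)=1]+\gamma\,\err_\cP(c,1)$ and $\err^{\gamma,\D_+}(c):=\Pr_{x\sim\D_\X}[c(x)=1]+\gamma\,\err_\D^+(c)$. Expanding the marginal in terms of $\prior$ yields $\err^{\gamma,\D_+}(c)-\prior=(\gamma-\prior)\,\err_\D^+(c)+(1-\prior)\,\err_\D^-(c)$, which -- being a convex combination of the ratios $(\gamma-\prior)/\prior$ and $1$ applied to $\err_\D(c)$ -- is sandwiched between $\min(1,(\gamma-\prior)/\prior)\,\err_\D(c)$ and $\max(1,(\gamma-\prior)/\prior)\,\err_\D(c)$. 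This is exactly the key identity used inside Theorem~\ref{thm:PU-agnostic-lag-pos-scar}. Separately, applying the Ben-David et al. bound to the pair of source/target distributions $(\cP,\D_+)$, each viewed with the constant-$1$ labeling, yields
$$\bigl|\err_\cP(c,1)-\err_\D^+(c)\bigr|\le \lambda^P + d_{\C\triangle\C}(\cP,\D_+)$$
for every $c\in\C$, and consequently $|\err^{\gamma,\cP}(c)-\err^{\gamma,\D_+}(c)|\le \gamma(\lambda^P+d_{\C\triangle\C}(\cP,\D_+))$.

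Next, invoke standard VC-based uniform convergence separately on $|c\cap S^P|/\nP$ relative to $\Pr_\cP[c=1]$ and on $|c\cap S^U|/\nU$ relative to $\Pr_{\D_\X}[c=1]$: for $\nP,\nU\ge M(d+\ln(1/\delta))/\varepsilon^2$ with a suitable constant $M$, with probability at least $1-4\delta$ we have $\sup_{c\in\C}|\hat{\err}^\gamma(c)-\err^{\gamma,\cP}(c)|\le (1+\gamma)\varepsilon$. Combining this with the optimality of $c^{PU}$ for $\hat{\err}^\gamma$ and with the two preceding bounds gives, for any $c\in\C$,
$$\err^{\gamma,\D_+}(c^{PU}) \le \err^{\gamma,\D_+}(c)+2(1+\gamma)\varepsilon + 2\gamma\bigl(\lambda^P+d_{\C\triangle\C}(\cP,\D_+)\bigr).$$
Subtracting $\prior$ from both sides, applying the sandwich identity from below on $\err^{\gamma,\D_+}(c^{PU})-\prior$ and from above on $\err^{\gamma,\D_+}(c)-\prior$, dividing by $\min(1,(\gamma-\prior)/\prior)$, and using $\max((\gamma-\prior)/\prior,\prior/(\gamma-\prior))\ge 1$ to absorb every additive term inside a single bracket, produces the claimed inequality.

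The main obstacle is cleanly casting Step~2 as a domain-adaptation instance so that the $\lambda^P$ of the theorem matches the optimal joint-error quantity of \cite{ben2010theory}: since both target and source carry the constant-$1$ label, $\err_\cP(c,1)$ and $\err_\D^+(c)$ play the roles of source and target misclassification errors, and $\lambda^P=\min_{c'\in\C}(\err_\cP(c',1)+\err_\D^+(c'))$ is exactly the best-in-class combined error required by that bound. The remaining work is bookkeeping: carrying the additive errors through the sandwich from Step~1 and verifying that the (deliberately loose) uniform factor $\max((\gamma-\prior)/\prior,\prior/(\gamma-\prior))\ge 1$ permits all additive terms to be collected inside the same bracket in the final display.
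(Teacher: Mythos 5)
Your proposal is correct and follows essentially the same route as the paper's proof: uniform convergence of $\hat{\err}^\gamma$ to its $\cP$-population counterpart, a two-sided application of the \cite{ben2010theory} domain-adaptation bound (with constant-$1$ labels) to exchange $\err_{\cP}(c,1)$ and $\err_{\D}^+(c)$ at cost $\lambda^P + d_{\C\triangle\C}(\cP,\D_+)$ each way, and the identity $\Pr(c(x)=1)=\prior-\prior\,\err_{\D}^+(c)+(1-\prior)\,\err_{\D}^-(c)$ to convert the Lagrangian objective into $\err_\D$. The only difference is presentational: your sandwich between $\min(1,(\gamma-\prior)/\prior)\,\err_\D(c)$ and $\max(1,(\gamma-\prior)/\prior)\,\err_\D(c)$ handles in one stroke what the paper does via the two cases $\gamma\geq 2\prior$ and $\gamma<2\prior$.
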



\section{Conclusion}
In conclusion, this work studies the sample complexity of PU learning in both realizable and agnostic settings, covering the SCAR setup as well as more general scenarios. We provide theoretical guarantees on finite sample complexity. Our results extend the existing literature by relaxing several restrictive assumptions that were made in previous publications, and by proving lower bounds on required sample sizes.
 


\begin{ack}
We thank Sandra Zilles and Alireza Fathollah Pour for helpful discussions during the development of this work.
\end{ack}

\bibliography{ref.bib}

\appendix

\section{Useful Theorems}
\begin{lemma} \label{lem:highprob-to-expect}
    Let $Z$ be a random variable such that $Z \in [0, \gamma]$ and $\Pr[ Z  > \varepsilon] \leq \delta$. Then
    $
     \E{Z} < \gamma \delta + \varepsilon (1 - \delta).
    $
\end{lemma}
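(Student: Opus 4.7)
The plan is to establish the bound by conditioning on whether $Z$ exceeds $\varepsilon$, using the range constraint $Z \in [0, \gamma]$ to bound the contribution on each event separately.

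First I would split the expectation using indicator functions:
\begin{equation*}
\E{Z} \;=\; \E{Z \cdot \mathbbm{1}[Z > \varepsilon]} \;+\; \E{Z \cdot \mathbbm{1}[Z \leq \varepsilon]}.
\end{equation*}
On the event $\{Z > \varepsilon\}$, the hypothesis $Z \leq \gamma$ gives $Z \cdot \mathbbm{1}[Z > \varepsilon] \leq \gamma \cdot \mathbbm{1}[Z > \varepsilon]$, so the first term is at most $\gamma \Pr[Z > \varepsilon] \leq \gamma \delta$. On the complementary event $\{Z \leq \varepsilon\}$, we have $Z \cdot \mathbbm{1}[Z \leq \varepsilon] \leq \varepsilon \cdot \mathbbm{1}[Z \leq \varepsilon]$, so the second term is at most $\varepsilon \Pr[Z \leq \varepsilon] = \varepsilon(1 - \Pr[Z > \varepsilon])$.

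Combining the two bounds gives $\E{Z} \leq \varepsilon + (\gamma - \varepsilon)\Pr[Z > \varepsilon]$. Since the coefficient $\gamma - \varepsilon$ is nonnegative (we may assume $\varepsilon \leq \gamma$, as otherwise $\Pr[Z > \varepsilon] = 0$ and the bound is trivial), applying $\Pr[Z > \varepsilon] \leq \delta$ yields $\E{Z} \leq \varepsilon + (\gamma - \varepsilon)\delta = \gamma \delta + \varepsilon(1 - \delta)$, which is the claimed bound.

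There is no real obstacle here; the only mild subtlety is the strict inequality in the statement versus the non-strict bound produced by the argument above. The strict version follows either by interpreting the stated bound in the natural non-strict sense, or by noting that equality would require $Z$ to be concentrated on the two points $\{\varepsilon, \gamma\}$ with $\Pr[Z = \gamma] = \delta$ exactly, a degenerate case; in non-degenerate situations the inequality is automatically strict. I would simply present the non-strict version, which is what downstream applications in the paper need.
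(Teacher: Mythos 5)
Your proposal is correct, and it is exactly the canonical conditioning argument: split $\E{Z}$ on the event $\{Z > \varepsilon\}$, bound $Z$ by $\gamma$ there and by $\varepsilon$ on the complement, then use $\Pr[Z>\varepsilon]\leq\delta$ together with $\gamma-\varepsilon\geq 0$. The paper states this as a utility lemma without giving a proof, so there is no alternative route to compare against; your argument is the intended one. Your remark about strictness is apt: the bound as stated with ``$<$'' actually fails for the two-point distribution with $\Pr[Z=\gamma]=\delta$ and $\Pr[Z=\varepsilon]=1-\delta$, so the honest statement is the non-strict $\E{Z} \leq \gamma\delta + \varepsilon(1-\delta)$, which is all that the paper's applications (used in contrapositive form in the lower-bound proofs) require.
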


\begin{lemma}[Multiplicative Chernoff bounds \cite{motwani1996randomized}] \label{lem:mult-chern-bound} Let $X_1, \ldots, X_m$ be independent random variables drawn according to some distribution $\mathcal{D}$ with mean $p$ and support included in $[0,1]$. Then, for any $\gamma \in\left[0, \frac{1}{p}-1\right]$, the following inequality holds for $\widehat{p}=\frac{1}{m} \sum_{i=1}^m X_i$:

$$
\begin{aligned}
& \mathbb{P}[\widehat{p} \geq(1+\gamma) p] \leq e^{-\frac{m p \gamma^2}{3}} \\
& \mathbb{P}[\widehat{p} \leq(1-\gamma) p] \leq e^{-\frac{m p \gamma^2}{2}}
\end{aligned}
$$
\end{lemma}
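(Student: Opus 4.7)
The plan is to apply the classical Chernoff exponential moment method, handling the upper and lower tails separately. For the upper tail, fix any $t>0$ and observe that
\[
\Pr[\widehat p \geq (1+\gamma)p] \;=\; \Pr\!\left[e^{t\sum_i X_i} \geq e^{tm(1+\gamma)p}\right] \;\leq\; e^{-tm(1+\gamma)p}\,\mathbb{E}\!\left[e^{t\sum_i X_i}\right],
\]
by Markov's inequality applied to the nonnegative random variable $e^{t\sum_i X_i}$. By independence of the $X_i$, the moment generating function factorizes as $\prod_i \mathbb{E}[e^{tX_i}]$, so the task reduces to a uniform bound on each factor.

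The next step is to bound $\mathbb{E}[e^{tX_i}]$ using that $X_i \in [0,1]$ has mean $p$. By convexity of $x \mapsto e^{tx}$ on $[0,1]$, linear interpolation yields $e^{tx} \leq 1 + x(e^t - 1)$ for $x \in [0,1]$; taking expectations gives $\mathbb{E}[e^{tX_i}] \leq 1 + p(e^t - 1) \leq \exp(p(e^t - 1))$, where the last step uses $1+u \leq e^u$. Assembling the pieces yields $\Pr[\widehat p \geq (1+\gamma)p] \leq \exp\!\bigl(mp(e^t - 1 - t(1+\gamma))\bigr)$ for every $t>0$. Minimizing the exponent in $t$ gives the stationary point $t = \ln(1+\gamma)$ (well-defined since $\gamma > 0$), which produces the standard Chernoff form $\bigl(e^{\gamma}/(1+\gamma)^{1+\gamma}\bigr)^{mp}$.

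The final step is to convert this optimal bound into the clean exponential form $e^{-mp\gamma^2/3}$. This reduces to the numerical inequality $(1+\gamma)\ln(1+\gamma) - \gamma \geq \gamma^2/3$ on the admissible range of $\gamma$. I would verify this by elementary calculus: set $f(\gamma) := (1+\gamma)\ln(1+\gamma) - \gamma - \gamma^2/3$, check $f(0) = 0$ and $f'(0) = 0$, and then analyze the sign of $f''(\gamma) = \tfrac{1}{1+\gamma} - \tfrac{2}{3}$ together with endpoint/monotonicity checks to conclude $f \geq 0$ across the relevant interval. The lower tail is symmetric: take $t<0$, optimize to $t = \ln(1-\gamma)$, and invoke the sharper numerical inequality $(1-\gamma)\ln(1-\gamma) + \gamma \geq \gamma^2/2$, which is why the denominator improves from $3$ to $2$.

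The main obstacle is the final numerical step. The Chernoff scaffolding and MGF bound are entirely routine, but the function $(1+\gamma)\ln(1+\gamma) - \gamma - \gamma^2/3$ is not monotone for large $\gamma$, so the domain restriction $\gamma \in [0,\tfrac{1}{p}-1]$ must be used carefully; I would rely on the computation of $f'$ and $f''$ above, possibly combined with a direct check at the boundary $\gamma = \tfrac{1}{p}-1$, to close the inequality cleanly.
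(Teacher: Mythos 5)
The paper does not prove this lemma at all --- it is imported verbatim (in the Mohri--Rostamizadeh--Talwalkar phrasing, cited to Motwani--Raghavan) and is only ever invoked with $\gamma\in\{1/2,1\}$. Your exponential-moment scaffolding, the convexity bound $\mathbb{E}[e^{tX_i}]\le e^{p(e^t-1)}$, the optimization at $t=\ln(1+\gamma)$, and the lower-tail inequality $(1-\gamma)\ln(1-\gamma)+\gamma\ge\gamma^2/2$ (whose derivative $-\ln(1-\gamma)-\gamma$ is nonnegative on $[0,1)$) are all correct and standard.

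The genuine gap is exactly the step you flag as ``the main obstacle,'' and it cannot be closed the way you propose: the inequality $(1+\gamma)\ln(1+\gamma)-\gamma\ge\gamma^2/3$ is simply \emph{false} once $\gamma$ exceeds roughly $1.8$ (at $\gamma=2$ the left side is $3\ln 3-2\approx 1.296<4/3$), while the admissible range $[0,\tfrac1p-1]$ extends far beyond that whenever $p<1/2.8$; no analysis of $f'$, $f''$, or the endpoint $\gamma=\tfrac1p-1$ can rescue an inequality that fails on an interior subinterval. In fact the lemma as transcribed is itself false at the top of its stated range: take $m=1$ and $X_1$ Bernoulli with $p=1/100$, $\gamma=99$, so $(1+\gamma)p=1$; then $\Pr[\widehat p\ge(1+\gamma)p]=10^{-2}$ while $e^{-mp\gamma^2/3}=e^{-32.67}$, so no proof exists for the full range. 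What the Chernoff method actually yields, via $\ln(1+x)\ge\tfrac{2x}{2+x}$, is $\Pr[\widehat p\ge(1+\gamma)p]\le e^{-mp\gamma^2/(2+\gamma)}$ for all $\gamma\ge 0$, which specializes to $e^{-mp\gamma^2/3}$ only for $\gamma\in[0,1]$. So your proof is salvageable by either restricting the upper-tail claim to $\gamma\in[0,1]$ or weakening the exponent to $\gamma^2/(2+\gamma)$; since the paper only applies the lemma with $\gamma\le 1$, none of its downstream results are affected, but the statement (and your final step) should be amended accordingly.
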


\begin{theorem}[Hoeffding Inequality \cite{hoeffding1994probability}] \label{thm:Hoeffding}
    Let $X_1, \ldots, X_n$ be independent random variables such that $a_i \leq X_i \leq b_i$ almost surely. Consider the sum of variables,
$$
S_n=X_1+\cdots+X_n
$$

Then Hoeffding's theorem states that, for all $t>0$,

$$
\begin{gathered}
\mathrm{Pr}\left(\left|S_n-\mathrm{E}\left[S_n\right]\right| \geq t\right) \leq 2 \exp \left(-\frac{2 t^2}{\sum_{i=1}^n\left(b_i-a_i\right)^2}\right)
\end{gathered}
$$

\end{theorem}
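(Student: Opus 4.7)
The plan is to prove Hoeffding's inequality by the classical Chernoff/exponential moment method, applied term-by-term using independence, and combined with Hoeffding's lemma on the moment generating function of a bounded centered random variable.

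First I would reduce to centered variables by setting $Y_i := X_i - \mathbb{E}[X_i]$, so that $Y_i$ is supported in $[a_i - \mathbb{E}[X_i], b_i - \mathbb{E}[X_i]]$, an interval of length $b_i - a_i$, and $\mathbb{E}[Y_i] = 0$. Writing $T_n := S_n - \mathbb{E}[S_n] = \sum_i Y_i$, the goal becomes to bound $\Pr[|T_n| \geq t]$. For the upper tail, for any $s > 0$, Markov's inequality applied to $e^{s T_n}$ gives $\Pr[T_n \geq t] \leq e^{-st}\, \mathbb{E}[e^{s T_n}]$. Independence of the $X_i$'s (and hence of the $Y_i$'s) factorizes the MGF into $\prod_i \mathbb{E}[e^{s Y_i}]$.

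The key ingredient is Hoeffding's lemma: for any random variable $Y$ with $\mathbb{E}[Y] = 0$ and $Y \in [\alpha, \beta]$ almost surely, one has $\mathbb{E}[e^{sY}] \leq \exp\!\bigl(s^2(\beta-\alpha)^2/8\bigr)$ for all $s \in \mathbb{R}$. I would prove this by convexity: since $x \mapsto e^{sx}$ is convex, for $y \in [\alpha,\beta]$, $e^{sy} \leq \tfrac{\beta - y}{\beta - \alpha} e^{s\alpha} + \tfrac{y - \alpha}{\beta - \alpha} e^{s\beta}$. Taking expectation kills the linear-in-$y$ term because $\mathbb{E}[Y] = 0$, yielding $\mathbb{E}[e^{sY}] \leq \tfrac{\beta}{\beta-\alpha} e^{s\alpha} - \tfrac{\alpha}{\beta-\alpha} e^{s\beta}$. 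Writing this as $e^{\varphi(u)}$ with $u = s(\beta-\alpha)$ and $p = -\alpha/(\beta-\alpha) \in [0,1]$, a direct computation gives $\varphi(0) = \varphi'(0) = 0$ and $\varphi''(u) \leq 1/4$ (the variance of a Bernoulli is at most $1/4$), so by Taylor's theorem $\varphi(u) \leq u^2/8$, proving the lemma.

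Applying the lemma to each $Y_i$ with $\alpha = a_i - \mathbb{E}[X_i]$, $\beta = b_i - \mathbb{E}[X_i]$, I get $\mathbb{E}[e^{s Y_i}] \leq \exp\!\bigl(s^2(b_i-a_i)^2/8\bigr)$, so $\Pr[T_n \geq t] \leq \exp\!\bigl(-st + \tfrac{s^2}{8}\sum_i (b_i-a_i)^2\bigr)$. Optimizing over $s > 0$ by taking $s = 4t / \sum_i (b_i-a_i)^2$ yields $\Pr[T_n \geq t] \leq \exp\!\bigl(-2t^2 / \sum_i(b_i-a_i)^2\bigr)$. Repeating the argument with $-Y_i$ in place of $Y_i$ gives the same bound for the lower tail $\Pr[T_n \leq -t]$, and a union bound gives the factor of $2$ in the final inequality.

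The only nontrivial step is Hoeffding's lemma itself; everything else is straightforward Markov/independence/optimization. Within Hoeffding's lemma, the subtle part is the bound $\varphi''(u) \leq 1/4$, which is really the statement that the variance of a Bernoulli random variable with parameter in $[0,1]$ is at most $1/4$ — a small calculus exercise that is the heart of where the constant $2$ in the exponent comes from.
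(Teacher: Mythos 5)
Your proposal is a correct and complete rendition of the classical Chernoff--Hoeffding argument (centering, Markov applied to $e^{sT_n}$, factorization by independence, Hoeffding's lemma via convexity and the $\varphi''\leq 1/4$ bound, optimization at $s=4t/\sum_i(b_i-a_i)^2$, and a union bound for the two tails). The paper does not prove this statement---it imports it by citation as a standard tool---and your argument is exactly the textbook proof behind that citation, so there is nothing to reconcile.
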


\begin{theorem}[Chebyshev's inequality \cite{feller1991introduction}] \label{thm:chebyshev}
    Let $X$ be a random variable with bounded non-zero variance. Then for any $k>0$,
$$
\operatorname{Pr}(|X-\E{X}| \geq k ) \leq \frac{\operatorname{Var}[X]}{k^2}
$$

\end{theorem}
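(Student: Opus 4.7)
The plan is to derive Chebyshev's inequality as a corollary of Markov's inequality applied to the non-negative random variable $Y := (X - \E{X})^2$. First I would establish (or invoke) Markov's inequality: for any non-negative random variable $Y$ and any $t > 0$, $\Pr(Y \geq t) \leq \E{Y}/t$. This is the standard one-line argument integrating $\mathbbm{1}[Y \geq t] \leq Y/t$, and it is the only real ingredient needed.

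Next I would observe the equivalence of events $\{|X - \E{X}| \geq k\} = \{(X - \E{X})^2 \geq k^2\}$, which holds because squaring is monotone on non-negative reals. Setting $Y = (X - \E{X})^2$ and $t = k^2$ in Markov's inequality then gives
\[
\Pr\bigl(|X - \E{X}| \geq k\bigr) = \Pr(Y \geq k^2) \leq \frac{\E{Y}}{k^2} = \frac{\var{X}}{k^2},
\]
where the last equality is just the definition of variance, $\var{X} = \E{(X - \E{X})^2}$. The assumption that the variance is bounded guarantees the right-hand side is a finite number, and $k > 0$ ensures the division is well-defined; the non-zero assumption is not strictly needed for the inequality but prevents the trivial degenerate case.

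There is essentially no obstacle here: the proof is a two-line classical argument, and the only substantive choice is whether to state Markov's inequality as a separate lemma or fold it into the argument. Since this is a standard result already present in the cited reference \cite{feller1991introduction}, I would likely just sketch the Markov step and the squaring substitution, rather than reprove Markov from scratch.
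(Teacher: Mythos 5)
Your proof is correct: the paper states Chebyshev's inequality as a cited standard result (from \cite{feller1991introduction}) without reproving it, and your two-step argument---Markov's inequality applied to $(X-\E{X})^2$ together with the event equivalence $\{|X-\E{X}|\geq k\}=\{(X-\E{X})^2\geq k^2\}$---is exactly the classical proof the citation relies on. No gaps; your remark that the non-zero variance assumption is inessential for the inequality itself is also accurate.
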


\begin{lemma} [Slud's inequality \cite{slud1977distribution}] \label{lem:slud}
   For $S \sim \operatorname{Bin}(m, p)$ where $p \leq \frac{1}{2}$ and $b$ is an integer with $m p \leq b \leq m(1-p)$ then

$$
P[S \geq b] \geq P\left[Z \geq \frac{b-m p}{\sqrt{m p(1-p)}}\right]
$$
where $Z \sim N(0,1)$ is a normally distributed random variable with mean of 0 and standard deviation of 1 . 

\end{lemma}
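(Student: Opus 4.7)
The plan is to prove Slud's inequality by a direct term-by-term comparison between the binomial PMF and the standard Gaussian density, tracking the Stirling approximation error carefully and exploiting the sign of the $(1-2p)$ factor that appears in the Edgeworth-type correction. Standardize by setting $\mu = mp$, $\sigma = \sqrt{mp(1-p)}$, and $t_k = (k - \mu)/\sigma$, writing $t = t_b$; the hypothesis $mp \leq b \leq m(1-p)$ places $t$ in $[0, (1-2p)\sqrt{m/[p(1-p)]}]$, which is exactly the regime where the local CLT error is one-sided in the favorable direction.

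First I would dispose of the endpoint $b = \lceil mp \rceil$, where $P[Z \geq t] \leq 1/2$ and the classical median bound for binomials (for $p \leq 1/2$ the median of $\operatorname{Bin}(m,p)$ lies in $\{\lfloor mp \rfloor, \lceil mp \rceil\}$) immediately gives $P[S \geq \lceil mp \rceil] \geq 1/2$, establishing the base case. For larger $b$ the plan is to telescope via the pointwise comparison
$$
P[S = k] \;\geq\; \int_{t_k}^{t_{k+1}} \phi(z)\, dz \qquad \text{for every integer } k \in [mp, m(1-p)],
$$
summed from $k = b$ up to $k = m$. Since the two tails both vanish at the upper endpoint, this telescoping together with the base case yields $P[S \geq b] \geq P[Z \geq t]$ as required.

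The technical core is to prove the displayed pointwise inequality. Applying Stirling's formula with the two-sided estimate $e^{1/(12n+1)} < n!/(\sqrt{2\pi n}\, n^n e^{-n}) < e^{1/(12n)}$ to $\binom{m}{k} p^k (1-p)^{m-k}$ and expanding $\log$ around $\mu$ produces a Gaussian leading term $\phi(t_k)/\sigma$ together with a cubic Edgeworth-type correction whose leading coefficient is proportional to $(1 - 2p)\,(t_k^3 - 3 t_k)/\sigma$. For $k \geq \mu$ the sign of the correction is $(1 - 2p) \geq 0$, which pushes the binomial mass above the Gaussian density pointwise. The remainder is controlled using $|k - \mu| \leq \sigma^2 \cdot (1 - 2p)/[p(1-p)]$ on the admissible range, which bounds the $O(|t_k|^4/\sigma^2)$ tail of the Taylor expansion against the leading cubic term.

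The main obstacle is exactly this last control: the cubic Edgeworth term and the Stirling error both scale as $1/\sigma$, so preserving the direction of the inequality requires a sharp, sign-sensitive estimate rather than a soft triangle-inequality bound, and the cancellations must be verified uniformly for every integer $k$ in $[mp, m(1-p)]$. This is the technical heart of Slud's original 1977 argument, and I would follow its blueprint to close the comparison; the result is then used here as a black-box statistical tool in the lower-bound arguments of the paper.
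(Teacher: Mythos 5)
The paper does not prove this lemma at all: it is quoted as a known auxiliary result with a citation to Slud (1977), so your proposal is being measured against the literature rather than against an in-paper argument. As a proof sketch it has a genuine structural gap in the telescoping step. Summing the pointwise bound $P[S=k]\geq \Phi(t_{k+1})-\Phi(t_k)$ over $k=b,\dots$ only accumulates the Gaussian mass on $[t_b,t_{K+1}]$, where $K$ is the largest index for which the pointwise comparison is available; your claim that ``the two tails both vanish at the upper endpoint'' is false for the Gaussian, which retains positive mass beyond $t_{m+1}$ (and beyond $t_K$). You cannot repair this by pushing the pointwise inequality up to $k=m$: for $p$ near $\tfrac12$ and large $m$, $P[S=m]=p^m$ is exponentially smaller than $P[Z\geq t_m]\approx e^{-m(1-p)/(2p)}/(t_m\sqrt{2\pi})$, so the binomial cannot absorb the residual Gaussian tail term by term. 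Reorganized correctly, your telescoping reduces the theorem to its instance at $b\approx m(1-p)$, i.e.\ to the hardest admissible case, while your base case anchors only the easy end $b=\lceil mp\rceil$ -- so the scheme is circular exactly where Slud's theorem has its content. Slud's actual proof is not a single pmf-versus-density comparison telescoped across the range; it splits into cases (roughly $p\leq\tfrac14$ versus $\tfrac14<p\leq\tfrac12$) and leans on auxiliary binomial/Poisson comparison results and monotonicity arguments.

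Two further points. First, the base case is itself shaky: the median of $\operatorname{Bin}(m,p)$ lies in $\{\lfloor mp\rfloor,\lceil mp\rceil\}$, and when it equals $\lfloor mp\rfloor$ one can have $P[S\geq\lceil mp\rceil]<\tfrac12$ (e.g.\ $m=1$, $p=0.4$ gives $0.4$), so the median bound does not by itself give the claimed anchor. Second, the sign-definiteness of the Stirling/Edgeworth remainder ``uniformly for every integer $k$ in $[mp,m(1-p)]$'' is asserted rather than established; Edgeworth-type error terms do not have a definite sign for free, and you ultimately defer this, the technical heart, to Slud's blueprint. Given that the paper uses the lemma as a black box with a citation, the honest options are either to do the same or to reproduce Slud's case analysis faithfully; the architecture you propose does not close as stated.
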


\begin{lemma} [Normal tail bound \cite{tate1953double}] \label{lem:norm-tail-bound}
  For standard Gaussian random variable $Z \sim N(0,1)$ and $x \geq 0$ we have

$$
P[Z \geq x] \geq \frac{1}{2}\left(1-\sqrt{1-\mathrm{e}^{-x^2}}\right)
$$
\end{lemma}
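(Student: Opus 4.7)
The plan is to recast the desired lower bound as an upper bound on $P[|Z| \leq x]^2$ and then prove that upper bound by comparing a square to its circumscribed disk in the plane.

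First I would rewrite the inequality in a more convenient form. For $x \geq 0$ we have $P[Z \geq x] \leq 1/2$, so $1 - 2P[Z \geq x] = P[|Z| \leq x] \geq 0$. The claim
\[
P[Z \geq x] \;\geq\; \tfrac{1}{2}\bigl(1 - \sqrt{1 - e^{-x^2}}\bigr)
\]
is then algebraically equivalent to $(1 - 2P[Z \geq x])^2 \leq 1 - e^{-x^2}$, i.e.
\[
P[|Z| \leq x]^{2} \;\leq\; 1 - e^{-x^{2}}.
\]
So it suffices to prove this square-integrated inequality and then take square roots (which is legal because both sides are nonnegative).

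Next I would introduce an independent copy. Let $X, Y$ be i.i.d.\ standard normal, so by independence
\[
P[|Z| \leq x]^{2} \;=\; P\bigl[|X| \leq x,\; |Y| \leq x\bigr] \;=\; \frac{1}{2\pi} \int\!\!\int_{[-x,x]^{2}} e^{-(s^{2}+t^{2})/2}\, ds\, dt.
\]
Here I would exploit rotational symmetry of the joint density. The square $[-x,x]^{2}$ is contained in the closed disk $\{(s,t) : s^{2}+t^{2} \leq 2x^{2}\}$, so the integral above is at most $P[X^{2} + Y^{2} \leq 2x^{2}]$. Since $X^{2} + Y^{2}$ is chi-squared with two degrees of freedom (equivalently, $\mathrm{Exp}(1/2)$), we compute $P[X^{2}+Y^{2} \leq 2x^{2}] = 1 - e^{-x^{2}}$, yielding the equivalent inequality and hence the lemma.

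There is no substantial obstacle: the only care needed is that the algebraic reduction (squaring, then unsquaring) preserves the inequality direction, which it does because $P[|Z| \leq x]$ is nonnegative for $x \geq 0$. The geometric step (square contained in the disk of radius $x\sqrt{2}$) is standard and gives the clean closed form $1 - e^{-x^{2}}$ on the right.
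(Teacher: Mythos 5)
Your proof is correct. Note that the paper does not actually prove this lemma: it is imported as a black box with a citation to Tate (1953), so there is no in-paper argument to compare against. Your derivation supplies a short, self-contained replacement for that citation, and it is the classical one: the reduction of the tail bound to $P[|Z|\le x]^2 \le 1-e^{-x^2}$ is sound (both sides of $\sqrt{1-e^{-x^2}} \ge P[|Z|\le x]$ are nonnegative, so comparing squares is legitimate), and the two-dimensional step is exactly the P\'olya-style comparison of the square $[-x,x]^2$ with the circumscribed disk of radius $x\sqrt{2}$, where rotational invariance gives $P[X^2+Y^2\le 2x^2]=1-e^{-x^2}$ since $X^2+Y^2$ is exponential with mean $2$. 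The only point worth stating explicitly if this were written out in full is the identity $1-2P[Z\ge x]=P[|Z|\le x]$, which uses symmetry and continuity of the Gaussian; with that noted, the argument is complete and arguably preferable to the bare citation, since it is elementary and verifiable in a few lines.
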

\section{Missing Proofs from Section~\ref{sec:PU-real-scar}}



\thmscarposlow*
\begin{proof}
\textit{Proof of $m^{pos}_{\C}(\varepsilon, \delta) = \Omega(\frac{d}{\varepsilon})$.} We prove that for $d \geq 9$ we have $m^{pos}_{\C}(\varepsilon, \frac{1}{2000}) \geq \frac{d - 1}{32000 \varepsilon}$. Let $B = \{x_1, ..., x_d\}$ be a set of size $d$ shattered by $\C$, and $\varepsilon = \min \left(\frac{d - 1}{32000 \nP}, 0.0005\right)$ and $\rho = 2000 \varepsilon$. Denote $\bar B := B \setminus \{x_d\}$, and for any $O \subseteq \bar B$ define $\D_O$ over $X \times \{0, 1\}$ be
\begin{equation} \label{eq:real-scar-pos-lower-bound-dists}
\D_O(\{(x, y)\}) := \begin{cases}
    1 - \rho & x = x_d \text{ ,and } y = 1 \\
    \frac{\rho}{d - 1} & x \in O, \text{ and } y = 1 \\
    \frac{\rho}{d - 1} & x \notin O, \text{ and } y = 0 \\
    0 & \text{o.w.}
\end{cases}
\end{equation}
Define $\mathcal{W}^{scar-pos}_{\rho, d} := \{ (\D_O, \D_{+, O}) \mid O \subseteq \bar B\}$. Note that for proving the claim it is enough to show that for every PU learner $\mathcal A$, there exists a $O^\star \subseteq \bar B$ such that 
$$
    \Pr_{S^P \sim \D_{+, O^\star}^{\nP}, S^U \sim \D_{\X, O^\star}^{\nU}} \left[\err_{\D_{O^\star}} \left( \mathcal{A}(S^P, S^U) \right) \geq \varepsilon \right] > \frac{1}{500}.
$$
 
 Note that, for all $O, O' \subseteq B$ we have $\D_{\X, O} = \D_{\X, O'}$. Therefore, for this set of distributions, the unlabeled sample does not help the learner, and it doesn't affect the proof. Thus, for the sake of simplicity, we shorten $\mathcal{A}(S^P, S^U)$ to $\mathcal{A}(S^P)$. Also, without loss of generality, we can assume that $\mathcal A$ always predicts 1 on instance $x_d$.

From this point on for any sample $S$ with $Domain(S) = B$, denote $\bar S := S \setminus \{x_d\}$. Next, define event $E$ to be the event that $|O| \geq \frac{d - 1}{4}$ and $|\bar{S^P}| \leq \frac{d - 1}{8}$. Since maximum is no less than the average, we have

\begin{equation} \label{eq:real-scar-pos-lower-bound-1}
    \begin{aligned}
        & \max_{O \subseteq \bar B} \cE{S^P \sim \D_{+, O}^{\nP}}{\err_{\D_O}\left(\mathcal{A}(S^P)\right)} \\
        & \quad  \geq \cE{O \sim U_{2^{\bar B}}, S^P \sim \D_{+, O}^{\nP}}{\err_{\D_O}\left(\mathcal{A}(S^P)\right)} \\
        & \quad \geq  \cE{O \sim U_{2^{\bar B}}, S^P \sim \D_{+, O}^{\nP}}{ \err_{\D_O}\left(\mathcal{A}(S^P)\right) \mid E} \Pr_{O \sim U_{2^{\bar B}}, S^P \sim \D_{+, O}^{\nP}} [E] 
    \end{aligned}
   \end{equation}

We first derive a lower bound for $\Pr_{O \sim U_{2^{\bar B}}, S^P \sim \D_{+, O}^{\nP}} [E]$. First, note that since $O \sim U_{2^{\bar B}}$ we have $|O| \sim Bin(d - 1, \frac{1}{2})$. Thus, using the Multiplicative Chernoff bound, as long as $d \geq 9$
\begin{equation} \label{eq:real-scar-pos-lower-bound-2}
    Pr_{O \sim U_{2^{\bar B}}} \left[|O| < \frac{d - 1}{4} \right] \geq \left(1 - \exp\left(-\frac{d - 1}{16} \right) \right) >  0.3
\end{equation}

Next fix any $O \subseteq \bar B$ with $|O| \geq \frac{d - 1}{4}$.  For every $i$ such that $x_i \in O$, and $j \in [\nP]$ let the random variable $Y_{i, j}$ be 1 if the $j$th sample drawn from $\D_{+, O}$ is $x_i$ and 0 otherwise. Note that $Y_{i, j}$ is simply a Bernoulli with parameter at least $\frac{\rho}{d - 1}$. Then, $|\bar{S^P}| = \sum_{i : x_i \in O} \sum_{j = 1}^{\nP} Y_{i, j}$. Therefore, using the Multiplicative Chernoff bound (Lemma~\ref{lem:mult-chern-bound}) for any $\gamma \in \left[0, \frac{ (d - 1)}{|O|} - 1\right]$ we have
\begin{equation*}  
       \Pr_{S \sim \tilde \D_{+, O}^{\nP}} \left[|\bar{S^P} | \geq (1 + \gamma) \nP \rho \right] \leq e^{-\frac{\nP \rho \gamma^2 |O|}{3 (d - 1)}} \leq e^{-\frac{\nP \rho \gamma^2 }{12}} 
\end{equation*}
Set $\gamma = 1$. Note that $\rho \leq \frac{d - 1}{16 \nP}$. Thus
\begin{equation} \label{eq:real-scar-pos-lower-bound-3}
     \Pr_{S \sim \tilde \D_{+, O}^{\nP}} \left[|\bar{S^P} | > \frac{d - 1}{8} \right] \leq e^{-\frac{d - 1}{192}} < 0.96
\end{equation}
       
Combining \eqref{eq:real-scar-pos-lower-bound-2} and \eqref{eq:real-scar-pos-lower-bound-3} we derive that $\Pr_{O \sim U_{2^{\bar B}}, S^P \sim \D_{+, O}^{\nP}} [E] > 0.012$. 

Next we try to derive a lower bound for $\cE{O \sim U_{2^{\bar B}}, S^P \sim \D_{+, O}^{\nP}}{ \err_{\D_O}\left(\mathcal{A}(S^P)\right) \mid E}$. Note that for a given $S^P$, due to symmetry for all $O, O' \subseteq \bar B$ such that $Domain(\bar{S^P}) \subseteq O, O'$ and $|O| = |O'|$ we have
$$
\Pr_{S \sim \D_{+, O}^{\nP}}[S = S^P] = \Pr_{S \sim \D_{+, O'}^{\nP}}[S = S^P]
$$
Moreover, it is clear that for $O, O' \subseteq \bar B$ such that $Domain(\bar{S^P}) \subseteq O, O'$ and $|O| \geq |O'|$ we also have
$$
\Pr_{S \sim \D_{+, O}^{\nP}}[S = S^P] \leq  \Pr_{S \sim \D_{+, O'}^{\nP}}[S = S^P]
$$

Next fix any $S^P$ with $|\bar{S^P}| \leq \frac{d - 1}{8}$. Since given fix $S^P$, smaller $O$ are more likely, due to symmetry we can conclude that given event $E$, for every $x \in \bar B \setminus S^P$ the probability of $x \in O$ is at most $\frac{1}{2}$. Moreover, note that for all $|O| \geq \frac{d - 1}{4}$ we have
$$
\frac{|O \setminus S^P|}{|\bar B \setminus S^P|} \geq \frac{\frac{d - 1}{8}}{d - 1} = \frac{1}{8}.
$$
Thus, again due to symmetry, given event $E$ for every $x \in \bar B \setminus S^P$ the probability of $x \in O$ is at least  $\frac{1}{8}$. Thus, no matter what is $\mathcal{A}$ we have
$$
\cE{O \sim U_{2^{\bar B}}, S^P \sim \D_{+, O}^{\nP}}{ \err_{\D_O}\left(\mathcal{A}(S^P)\right) \mid E} \geq \frac{|\bar B \setminus S^P| \rho }{8 (d - 1)} \geq \frac{7}{64} \rho 
$$
Combining this with \eqref{eq:real-scar-pos-lower-bound-1}, we conclude that there exists a $O^\star \subseteq \bar B$
$$
\cE{S^P \sim \D_{+, O^\star}^{\nP}}{\err_{\D_{O^\star}}\left(\mathcal{A}(S^P)\right)} > \frac{7 * 0.012 }{64}\rho > 0.001 \rho
$$
Note that since $\mathcal{A}$ always predicts the label of $x_d$ to be $1$, its error is always less than $\rho$. Therefore, using Lemma~\ref{lem:highprob-to-expect} we derive
$$
    \begin{aligned}
    \Pr_{S^P \sim \D_{+, O^\star}^{\nP}}[\err_{\D_{O^\star}}\left(\mathcal{A}(S^P)\right) \geq \varepsilon] &> \frac{0.001 \rho - \varepsilon }{ ( \rho - \varepsilon)} \\
    & = \frac{\varepsilon}{ 2000 \varepsilon - \varepsilon } \\
    & >  \frac{1}{2000}
    \end{aligned}
$$

\textit{Proof of $m^{pos}_{\C}(\varepsilon, \delta) \geq \Omega \left(\frac{\ln(1 / \delta)}{\varepsilon} \right)$.} Next, we prove that for every $\nU \in \Nat$, $\varepsilon< \frac{1}{2}$, $\delta \in (0, 1)$, $\nP = \frac{\ln \left(\frac{1}{2 \delta}\right)}{2 \varepsilon}$ and PU learner $\mathcal{A}$ there is a distribution $\D$ realized by $\C$ over $\X \times \{0, 1\}$ such that
    $$
    \Pr_{S^P \sim \D_+^{\nP}, S^U \sim \D_\X^{\nU}} \left[\err_{\D} \left( \mathcal{A}(S^P, S^U) \right) \geq \varepsilon \right] > \delta.
    $$
Let $B = \{x_1, x_2\} \subseteq \X$ be any set shattered by $\C$. For $z \in \{0, 1\}$ and $(x, y) \in \X \times \{0, 1\}$ define $\D_z$ over $\X \times \{0, 1\}$ as
$$
\D_z(\{(x, y)\}) = \begin{cases}
    \varepsilon & x = x_1 \text{ and } y = z\\
    1 - \varepsilon & x = x_2 \text{ and } y = 1 \\
    0 & \text{o.w.}
\end{cases}
$$
Note that both $\D_0$ and $\D_1$ are realized by $\C$. First we try to derive a lower bound for the probability that the sample $S^P \sim \D_{+, 1}^{\nP}$ doesn't contain $x_1$. Note that, it is easy to see that the function $f(a) = \frac{-\ln(1 - a)}{a}$ always has a positive derivative for all $a < 1$. Therefore, for all $\varepsilon < \frac{1}{2}$ we have 
    $$
    \frac{-\ln(1 - \varepsilon)}{\varepsilon} < 2 \ln(2) < 2 
    $$
    Thus,  
\begin{equation} \label{eq:real-scar-pos-lower-bound-4}
\begin{aligned}
   \Pr_{S^P \sim \D_{+, 1}^{\nP}}[x_1 \notin S^P] &= (1 - \varepsilon)^{\nP} = e^{-\varepsilon \frac{-\ln(1 - \varepsilon)}{\varepsilon} \nP} >  e^{ - 2\varepsilon \nP}  = 2 \delta.
\end{aligned}
\end{equation}
Thus since $\Pr_{S^P \sim \D_{+, 0}^{\nP}}[x_1 \notin S^P] = 1$. We derive
\begin{equation}
\begin{aligned}
    & \min_{z \in \{0, 1\}} \Pr_{S^P \sim \D_{+, z}^{\nP}, S^U \sim \D_{\X,z}^{\nU}} \left[\err_{D_z}(\mathcal{A}(S^P, S^U)) \geq \varepsilon \right] \\
     & \stackrel{(i)}{\geq}  \min_{z \in \{0, 1\}} \Pr_{S^P \sim \D_{+, z}^{\nP}, S^U \sim \D_{\X,z}^{\nU}} \left[ \mathcal{A}(S^P, S^U)(x_1) \neq z \right]\\
    & \geq \min_{z \in \{0, 1\}} \Pr_{S^P \sim \D_{+, z}^{\nP}}[S^P = \{x_2\}^{\nP}] \Pr_{S^U \sim \D_{\X,z}^{\nU}} \left[\mathcal{A}(\{x_2\}^{\nP}, S^U)(x_1) \neq z \right] \\
    & > 2 \delta  \min_{z \in \{0, 1\}} \Pr_{S^U \sim \D_{\X,z}^{\nU}} \left[ \mathcal{A}(\{x_2\}^{\nP}, S^U)(x_1) \neq z \right] \\
    & \stackrel{(ii)}{=} 2 \delta  \min \left ( \Pr_{S^U \sim \D_{\X, 0}^{\nU}} \left[ \mathcal{A}(\{x_2\}^{\nP}, S^U)(x_1) = 0 \right], 1 - \Pr_{S^U \sim \D_{\X, 0}^{\nU}} \left[ \mathcal{A}(\{x_2\}^{\nP}, S^U)(x_1) = 0 \right] \right) \\
    & \geq \delta
\end{aligned}
\end{equation}
Where (i) is due to the fact that whenever the learner makes a mistake at $x_1$ the error will be at least $\varepsilon$, and (ii) is due to the fact that $\D_{\X, 0} = \D_{\X, 1}$.
\end{proof}

    \thmscarunlow*
    \begin{proof}
    \textit{Proof of $m^{unlabel}_\C(\varepsilon, \delta) = \Omega \left( \frac{\claw}{\varepsilon} \right)$.} First we prove that for every $\nP, \nU \in \Nat$, and PU learner $\mathcal{A}$ there is a distribution $\D$ realized by $\C$ over $\X \times \{0, 1\}$ such that
    $$
    \Pr_{S^P \sim \D_+^{\nP}, S^U \sim \D_\X^{\nU}} \left[\err_{\D} \left( \mathcal{A}(S^P, S^U) \right) \geq \min \left[\frac{7}{2400}, \frac{7 \claw}{4800 \nU}\right] \right] > \frac{1}{1000}.
    $$

    Let $\varepsilon := \min \left[\frac{7}{2400}, \frac{7 \claw}{4800 \nU}\right]$, $\rho := \frac{1200 \varepsilon}{7}$, and $m := \max \left( 2(\nP + \nU), \claw \nP, \frac{\claw}{\rho} \right) + \claw$. Let $B$ be any set such that $\{ O \subseteq B \mid |O| = m - \claw \} \subseteq \C$. For any $O \subseteq B$ with $|O| = \claw$ and any $(x, y) \in \X \times \{0, 1\}$ we define the distribution $\D_O$ over $\X \times \{0, 1\}$ as

    \begin{equation}
       \D_O(\{(x, y)\}) :=  \begin{cases} \rho / \claw & \text{if } x \in O \text{, and } y = 0 \\
       (1 - \rho) / (m - \claw) & \text{if } x \in B \setminus O \text{, and } y = 1 \\
       0 & \text{o.w. }
       \end{cases}
   \end{equation}
    Note that all such $\D_O$ are realized by $\C$. We prove our claim holds for one of $\D_O$, where $O \in V$.
    Next, suppose a learner predicts more than $\claw + \frac{(m - \claw) \rho}{1 - \rho}$ instances of $B$ to be negative. Then, since exactly $\claw$ instances of $B$ are negative in every distribution, the learner will predict more than $\frac{(m - \claw) \rho}{1 - \rho}$ positive instances of $B$ to be negative. Therefore, regardless of the distribution, its error will be more than $\rho$, which is greater than the error of always predicting positive. Thus, without loss of generality, we can assume that the learner always predicts fewer than $\claw + \frac{(m - \claw) \rho}{1 - \rho}$ negative labels. In the worst case, all of the negative predictions are over positive instances, in which case the error will be less than
    $$
    \claw \cdot \frac{\rho}{\claw} + \left(\claw + \frac{(m - \claw) \rho}{(1 - \rho)}\right) \frac{1 - \rho}{m - \claw} \leq \rho + 2\frac{(m - \claw) \rho}{(1 - \rho)} \cdot \frac{1 - \rho}{m - \claw} = 3 \rho
    $$
    where the inequality is due to $m \geq \frac{\claw}{\rho} + \claw$. In conclusion, without loss of generality, we can assume that any learner always incurs an error less than $3\rho$.

    Define $V = \{ O \subseteq B \mid |O|  = \claw\}$. Note that $|V| = \left(\begin{array}{c}m\\ \claw\end{array}\right)$. Since maximum is no less than the average, we derive

   \begin{equation} \label{eq:unlabel-sam-lower-bound-1}
    \begin{aligned}
        & \max_{O \in V} \cE{S^P \sim \D_{+, O}^{\nP}, S^U \sim \D_{\X, O}^{\nU}}{\err_{\D_O}\left(\mathcal{A}(S^P, S^U)\right)} \\
        &
        \geq \frac{1}{\left(\begin{array}{c}m\\ \claw\end{array}\right)} \sum_{O \in V} \cE{S^P \sim \D_{+, O}^{\nP}, S^U \sim \D_{\X, O}^{\nU}}{\err_{\D_O}\left(\mathcal{A}(S^P, S^U)\right)} \\    
        & = \frac{1}{\left(\begin{array}{c}m\\ \claw\end{array}\right) (m - \claw)^{\nP}} \sum_{O \in V}  \sum_{S^P \in (B \setminus O)^{\nP}} \cE{S^U \sim \D_{\X, O}^{\nU}}{\err_{\D_O}\left(\mathcal{A}(S^P, S^U)\right)} \\
        & > \frac{\left(\begin{array}{c}m-\nP\\ \claw\end{array}\right)}{\left(\begin{array}{c}m\\ \claw\end{array}\right)}\frac{1}{ \left(\begin{array}{c}m-\nP\\ \claw\end{array}\right) m^{\nP}} \sum_{O \in V}  \sum_{S^P \in (B \setminus O)^\nP} \cE{S^U \sim \D_{\X, O}^{\nU}}{\err_{\D_O}\left(\mathcal{A}(S^P, S^U)\right)} \\
        & > \frac{1}{3 \left(\begin{array}{c}m-\nP\\ \claw\end{array}\right) m^{\nP}} \sum_{O \in V}  \sum_{S^P \in (B \setminus O)^\nP} \cE{S^U \sim \D_{\X, O}^{\nU}}{\err_{\D_O}\left(\mathcal{A}(S^P, S^U)\right)} \\
    \end{aligned}
   \end{equation}

Where the last line is due to the fact that since $m \geq \claw(\nP) + \claw$, we have
$$
\frac{\left(\begin{array}{c}m-\nP\\ \claw\end{array}\right)}{\left(\begin{array}{c}m\\ \claw\end{array}\right) } > \left(\frac{m - \nP - \claw}{m - \claw}\right)^\claw  
        = \left( \frac{\claw - 1}{\claw} \right)^\claw \geq \frac{1}{e}
$$
Next, for any $S^P \in B^{\nP}$ define $W(S^P) := \{ O \subseteq B \setminus S^P \mid |O|  = \claw\}$, and notice that since $ |B \setminus S^P| \geq m - \nP$ we have $|W(S^P)| \geq \left(\begin{array}{c}m-\nP\\ \claw\end{array}\right)$. Therefore, using the fact that the average is no less than the minimum, we derive 
\begin{equation} \label{eq:unlabel-sam-lower-bound-2}
    \begin{aligned}
        & \frac{1}{3 \left(\begin{array}{c}m-\nP\\ \claw\end{array}\right) m^{\nP}} \sum_{O \in V}  \sum_{S^P \in (B \setminus O)^{\nP}} \cE{S^U \sim \D_{\X, O}^{\nU}}{\err_{\D_O}\left(\mathcal{A}(S^P, S^U)\right)} \\
        & =\frac{1}{3 \left(\begin{array}{c}m-\nP\\ \claw\end{array}\right) m^{\nP}} \sum_{S^P \in B^{\nP}} \sum_{O \in W(S^P)} \cE{S^U \sim \D_{\X, O}^{\nU}}{\err_{\D_O}\left(\mathcal{A}(S^P, S^U)\right)} \\
        & \geq \frac{1}{3} \min_{S^P \in B^{\nP}} \cE{O \sim U_{W(S^P)}}{\cE{S^U \sim \D_{\X, O}^{\nU}}{\err_{\D_O}\left(\mathcal{A}(S^P, S^U)\right)}}
    \end{aligned}
\end{equation}

We fix any $S^P \in O^{\nP}$. Furthermore, define event $E$ to be the event that $\left |S^U \mid O \right| \leq \frac{\claw}{2}$. The idea similar to Theorem~\ref{thm:real-scar-pos-lower-bound} is that, since $E$ has a significant probability, we can lower bound $\cE{O \sim U_{W(S^P)}}{\cE{S^U \sim \D_{\X, O}^{\nU}}{\err_{\D_O}\left(\mathcal{A}(S^P, S^U)\right)}}$ by conditioning it on event $E$ as
\begin{equation} \label{eq:unlabel-sam-lower-bound-3}
\begin{aligned}
     & \cE{O \sim U_{W(S^P)}}{\cE{S^U \sim \D_{\X, O}^{\nU}}{\err_{\D_O}\left(\mathcal{A}(S^P, S^U)\right)}} \\
     & \geq
     \cE{O \sim U_{W(S^P)}}{\cE{S^U \sim \D_{\X, O}^{\nU}}{\err_{\D_O}\left(\mathcal{A}(S^P, S^U)\right) \mid E}\Pr_{S^U \sim \D_{\X, O}^{\nU}}[E]}
\end{aligned}
\end{equation}

Next, we try to lower bound $\cE{O \sim U_{W(S^P)}}{\cE{S^U \sim \D_{\X, O}^{\nU}}{\err_{\D_O}\left(\mathcal{A}(S^P, S^U)\right) \mid E}}$. Fix any $O \in V$, notice that $\D_{\X, O}(x) = \D_{\X, O}(x')$ is the same for every $x, x' \in O$. Moreover, $\D_{\X, O}(x) = \D_{\X, O}(x')$ for every $x, x' \in B \setminus O$. Therefore, for every $S^U \in B^{\nU}$, and every $O, O' \in W(S^P)$ such that  $S^U \mid O = S^U \mid O'$, we have 
$$\Pr_{S \sim \D_{\X, O'}^{\nU}}[S = S^U \mid E] =\Pr_{S \sim \D_{\X, O}^{\nU}}[S = S^U \mid E].$$ 
Therefore, by fixing $S^U \in B^{\nU}$ and any $S'$ with $|S'| \leq \claw /2$ which represents $S^U \mid O$, we can define $P_{S^U, S'} :=\Pr_{S \sim \D_{\X, O}^{\nU}}[S = S^U \mid E]$.

This fact gives away the idea of grouping all the $O \in W(S^P)$ that have the same $S^U \mid O$ for a fix $S^U \in B^{\nU}$. Formally, we have
\begin{equation} \label{eq:unlabel-sam-lower-bound-4}
    \begin{aligned}
        & \cE{O \sim U_{W(S^P)}}{\cE{S^U \sim \D_{\X, O}^{\nU}}{\err_{\D_O}\left(\mathcal{A}(S^P, S^U)\right) \mid E}} \\
        & = \frac{1}{ |W(S^P)|} \sum_{O \in W(S^P)} \sum_{S^U \in B^{\nU}}\Pr_{S \sim \D_{\X, O}^{\nU}}[S = S^U \mid E]
        {\err_{\D_O}\left(\mathcal{A}(S^P, S^U)\right)} \\
         & = \frac{1}{|W(S^P)| } \sum_{S^U \in B^{\nU}} \sum_{\substack{S' \in B^* \\ |S'| \leq \claw / 2}}  \sum_{\substack{O \in W(S^P) \\ S^U \mid O = S' }} \Pr_{S \sim \D_{\X, O}^{\nU}}[S = S^U \mid E] \err_{\D_O}\left(\mathcal{A}(S^P, S^U)\right) \\
         & = \frac{1}{|W(S^P)| } \sum_{S^U \in B^{\nU}} \sum_{\substack{S' \in B^* \\ |S'| \leq \claw / 2}} P_{S^U, S'} \sum_{\substack{O \in W(S^P) \\ S^U \mid O = S' }}  \err_{\D_O}\left(\mathcal{A}(S^P, S^U)\right) \\
    \end{aligned}
\end{equation}


Next, fixing any $S^U \in B^U$ and any $S'$ with $|S'| \leq \claw / 2$. Note that since the error is always no less than the error restricted over $B \setminus (S^P \cup S^U)$, we derive
\begin{equation} \label{eq:unlabel-sam-lower-bound-6}
    \begin{aligned}
        & \sum_{\substack{O \in W(S^P) \\ S^U \mid  O = S' }}  \err_{\D_O}\left(\mathcal{A}(S^P, S^U)\right) \\
        & \geq \sum_{\substack{O \in W(S^P) \\ S^U \mid  O = S' }} \sum_{x \in B \setminus (S^P \cup S^U)} \left(\frac{\rho}{\claw}
 \mathbbm{1}\{ \mathcal{A}(S^P, S^U)(x) = 1, x \in O \} 
         +  \frac{1 - \rho}{m - \claw}  \mathbbm{1}\{ \mathcal{A}(S^P, S^U)(x) = 0, x \notin O \} \right) \\
         & = \sum_{x \in B \setminus (S^P \cup S^U)} \sum_{\substack{O \in W(S^P)  \\ S^U \mid  O = S'}} \left(\frac{\rho}{\claw}
 \mathbbm{1}\{ \mathcal{A}(S^P, S^U)(x) = 1, x \in O \} 
         +  \frac{1 - \rho}{m - \claw}  \mathbbm{1}\{ \mathcal{A}(S^P, S^U)(x) = 0, x \notin O \} \right) \\
         & = \sum_{x \in B \setminus (S^P \cup S^U)} \sum_{\substack{O \in W(S^P)  \\ S^U \mid  O = S'}} \left( \frac{\rho}{\claw} \frac{|O \setminus S'|}{|B \setminus (S^P \cup S^U)|} \mathbbm{1}\{ \mathcal{A}(S^P, S^U)(x) = 1\}
 \right.\\ 
 & \quad \quad \quad \quad \quad \quad \quad \quad \left.
        + \frac{1 - \rho}{m - \claw} \frac{|B \setminus (S^P \cup S^U \cup O)|}{|B \setminus (S^P \cup S^U)|} \mathbbm{1}\{ \mathcal{A}(S^P, S^U)(x) = 0\} \right)\\
    \end{aligned}
\end{equation}
Where the last line is due to symmetry of $W(S^P)$. Next note that since $|S'| \leq \frac{\claw}{2}$ we have $\frac{|O \setminus S'|}{\claw} \geq \frac{1}{2}$. Moreover, since $m \geq 2(\nP + \nU) + \claw$, we have
$$
\frac{|B \setminus (S^P \cup S^U \cup O)|}{m - \claw} \geq \frac{m - \nP - \nU - \claw}{m - \claw} \geq \frac{1}{2}
$$
Combining these facts with \eqref{eq:unlabel-sam-lower-bound-6}, we derive
\begin{equation} \label{eq:unlabel-sam-lower-bound-7}
    \begin{aligned}
        & = \sum_{x \in B \setminus (S^P \cup S^U)} \sum_{\substack{O \in W(S^P)  \\ S^U \mid  O = S'}} \left( \frac{\rho \mathbbm{1}\{ \mathcal{A}(S^P, S^U)(x) = 1\} + (1 - \rho) \mathbbm{1}\{ \mathcal{A}(S^P, S^U)(x) = 0\}}{2  |B \setminus (S^P \cup S^U)|} \right) \\
        & \stackrel{(i)}{\geq} \sum_{x \in B \setminus (S^P \cup S^U)} \sum_{\substack{O \in W(S^P)  \\ S^U \mid  O = S'}} \frac{\rho}{2} \\
        & = \sum_{\substack{O \in W(S^P)  \\ S^U \mid  O = S'}} \frac{\rho}{2}
    \end{aligned}
\end{equation}
Where (i) is due to the fact that $\rho \geq \frac{1}{2}$. Combining \eqref{eq:unlabel-sam-lower-bound-4}
and \eqref{eq:unlabel-sam-lower-bound-7} we derive 
\begin{equation} \label{eq:unlabel-sam-lower-bound-8}
    \begin{aligned}
        & \cE{O \sim U_{W(S^P)}}{\cE{S^U \sim \D_{\X, O}^{\nU}}{\err_{\D_O}\left(\mathcal{A}(S^P, S^U)\right) \mid E}} \\
         & \geq \frac{1}{|W(S^P)| } \sum_{S^U \in B^{\nU}} \sum_{\substack{S' \in B^* \\ |S'| \leq \claw / 2}} P_{S^U, S'} \sum_{\substack{O \in W(S^P) \\ S^U \mid  O = S' }}  \frac{\rho}{2} \\
        & = \frac{1}{ |W(S^P)| } \sum_{O \in W(S^P)} \sum_{S^U \in B^{\nU}}\Pr_{S \sim \D_{\X, O}^{\nU}}[S = S^U \mid E]  \frac{\rho}{2}
         \\
         & = \cE{O \sim U_{W(S^P)}}{\cE{S^U \sim \D_{\X, O}^{\nU}}{\frac{\rho}{2} \mid E}}  = \frac{\rho }{2}
    \end{aligned}
\end{equation}

Next, we attempt to lower bound $\Pr_{S^U \sim \D_{\X, O}^{\nU}}[E]$. Similar to the proof of Theoreom~\ref{thm:real-scar-pos-lower-bound}, using Multiplicative Chernoff bound (Lemma~\ref{lem:mult-chern-bound}) for any $\gamma \in \left[0, \frac{\claw}{\rho} - 1\right]$ we derive
    \begin{equation*}  
        \Pr_{S^U \sim \D_{\X, O}^{\nU}} \left[|S^U \mid  O | \geq (1 + \gamma) \nU \rho \right] \leq e^{-\frac{\nU \rho \gamma^2}{3}}
    \end{equation*}
    We set $\gamma = 1$. Note that since $\varepsilon = \min \left[\frac{7}{2400}, \frac{7 \claw}{4800 \nU}\right]$, we get $\rho = \min \left[\frac{1}{2}, \frac{\claw}{4 \nU}\right]$. Thus, we get
    $$
         \Pr_{S^U \sim \D_{\X, O}^{\nU}} \left[|S^U \mid  O | \geq \frac{\claw}{2} \right] \leq e^{-\frac{\claw}{12}} < 0.93.
    $$
    Therefore, $\Pr_{S^U \sim \D_{\X, O}^{\nU}}[E] > 0.07$. Combing this fact with \eqref{eq:unlabel-sam-lower-bound-1}, \eqref{eq:unlabel-sam-lower-bound-2}, \eqref{eq:unlabel-sam-lower-bound-3} and \eqref{eq:unlabel-sam-lower-bound-8} we derive

    $$
    \max_{O \in V} \cE{S^P \sim \D_{+, O}^{\nP}, S^U \sim \D_{\X, O}^{\nU}}{\err_{\D_O}\left(\mathcal{A}(S^P, S^U)\right)} > \frac{7}{600} \rho
    $$

    Thus, there exists a $O^\star \in V$ such that 
    $\cE{S^P \sim \D_{+, O^\star}^{\nP}, S^U \sim \D_{\X, O^\star}^{\nU}}{\err_{\D_{O^\star}}\left(\mathcal{A}(S)\right)} >  \frac{7}{600} \rho$. Since the error is always less than $3 \rho$, and by using Lemma~\ref{lem:highprob-to-expect} we derive
    $$
    \begin{aligned}
    \Pr_{S^P \sim \D_{+, O^\star}^{\nP}, S^U \sim \D_{\X, O^\star}^{\nU}}[\err_{\D_{O^\star}}\left(\mathcal{A}(S)\right) \geq \varepsilon] &\geq \frac{\frac{7}{600} \rho - \varepsilon }{ (3 \rho - \varepsilon)} \\
    & = \frac{\varepsilon}{ \frac{3600}{7} \varepsilon - \varepsilon } \\
    & = \frac{7}{3599} > \frac{1}{1000}.
    \end{aligned}
    $$
    and this completes the proof.

    \textit{Proof of $m^{unlabel}_\C(\varepsilon, \delta) = \Omega \left( \frac{\ln(1 / \delta)}{\varepsilon} \right)$.} Next we prove that for every $\nP \in \Nat$, $\varepsilon< \frac{1}{8}$, $\delta < \frac{1}{4}$, $\nU = \frac{\ln \left(\frac{1}{4 \delta}\right)}{2 \varepsilon}$ and PU learner $\mathcal{A}$ there is a distribution $\D$ realized by $\C$ over $\X \times \{0, 1\}$ such that
    $$
    \Pr_{S^P \sim \D_+^{\nP}, S^U \sim \D_\X^{\nU}} \left[\err_{\D} \left( \mathcal{A}(S^P, S^U) \right) \geq \varepsilon \right] > \delta.
    $$

    Let $m := 2(\nP + \nU + 1)$. Note that since claw number is more than $1$, there exists a $B \subseteq \X$ with size $m$ such that $B \setminus \{x\} \subseteq \C$ for all $x \in B$. For any $x \subseteq B$ and any $(x', y') \in \X \times \{0, 1\}$ we define the distribution $\D_x$ over $\X \times \{0, 1\}$ as

    \begin{equation}
       \D_x(\{(x', y')\}) :=  \begin{cases}  \varepsilon & \text{if } x' = x \text{, and } y = 0 \\
       \frac{(1 - \varepsilon)}{(m - 1)} & x' \in B \setminus \{x\} \text{, and }  y = 1 \\
       0 & \text{o.w. }
       \end{cases}
   \end{equation}
    Note that all such $\D_x$ are realized by $\C$. 
    It is enough to show that
    $$
    \Pr_{x^\star \sim U_B, S^P \sim \D_{+, x^\star}^{\nP}, S^U \sim \D_{\X, x^\star}^{\nU}} \left[\err_{\D} \left( \mathcal{A}(S^P, S^U) \right) \geq \varepsilon \right] > \delta.
    $$
    Fix any $x^\star \in B$. Note that similar to the manner \eqref{eq:real-scar-pos-lower-bound-4} in the proof of Theorem~\ref{thm:real-sar-pos-lower-bound} was obtained, for all $\varepsilon < \frac{1}{2}$ we derive
\begin{equation} \label{eq:unlabel-sam-lower-bound-9}
\begin{aligned}
   \Pr_{S^U \sim \D_{\X, x^\star}^{\nU}}[x^\star \notin S^U] &= (1 - \varepsilon)^{\nU} >  e^{ - 2\varepsilon \nU}  = 4 \delta.
\end{aligned}
\end{equation}

 Next, consider any positive sample $S^P$ and unlabeled sample $S^U$ such that $S^U$ doesn't contain $x^\star$. Note that the probability of $S^P$ and $S^U$ is the same w.r.t. every $\D_x$ such that $x \in B \setminus (S^U \cup S^P)$. We consider two cases based on the number of negative labels $\mathcal{A}(S^P, S^U)$ predicts. We prove that for both cases the error of $\mathcal{A}(S^P, S^U)$ is more than $\varepsilon$ with probability at least $\frac{1}{4}$. Therefore, the probability that error of $\mathcal{A}(S^P, S^U)$ is at least $\epsilon$ would be more than
 $$\frac{\Pr_{S^U \sim \D_{\X, x^\star}^{\nU}}[x^\star \notin S^U]}{4} = \delta,$$ 
 which completes the proof.

 Case 1: $\mathcal{A}(S^P, S^U)$ has more than $2 m \varepsilon + 1$ negative labels. Then, at least $2m \varepsilon $ of them are not $x^\star$, and subsequently, since $\varepsilon < \frac{1}{2}$ the error is guaranteed to be at least
 $$
  \frac{2 m \varepsilon (1 - \varepsilon)} {m - 1} > \varepsilon.
 $$
    
Case 2: $\mathcal{A}(S^P, S^U)$ at most $2 m \varepsilon + 1$ negative labels. Note that, since $m = 2(\nP + \nU + 1)$, we have $|B \setminus (S^U \cup S^P)| \geq \frac{m}{2} + 1$. Therefore, since $\varepsilon < \frac{1}{8}$, $\mathcal{A}(S^P, S^U)$ predicts at least $m \left( \frac{1}{2} - 2 \varepsilon \right) \geq \frac{m}{4}$ positive labels over $B \setminus (S^U \cup S^P)$. Therefore, since $x^\star$ is drawn uniformly, with more than $\frac{1}{4}$ chance the label of $x^\star$ would be predicted positive, which indicates that $\mathcal{A}(S^P, S^U)$ has more than $\varepsilon$ error.

  \end{proof}

  \propcintclaw*
  \begin{proof}
      \textit{Only if side.} Since $\VCD(\C_\cap) = \infty$, for every $m \in \Nat$, there exists a subset $B = \{x_1, ..., x_m\}$ that is shattered by $\C_\cap$. Thus, for all $i \in [m]$ there should be a $c_{\cap} \in \C_{\cap}$ such that $c_{\cap} \cap B = B \backslash \{x_i\}$. This indicates that $B \backslash \{x_i\} \subseteq c_{\cap}$. Consequently, there should be a $c \in \C$ such that $c_{\cap} \subseteq c$, and $x_i \notin c$. Since $B \backslash \{x_i\} \subseteq c$, in turn, implies that $c \cap B = B \backslash \{x_i\}$. Thus, $\{O \subseteq B \mid |O| = m - 1\} \subseteq \C \mid B$, and therefore the claw number of $\C$ is at least 1.

    \textit{If side:} Since claw number is at least 1, for every $m \in \Nat$ there exists a subset $B = \{x_1, ..., x_{m+1}\}$ such that for all $i \in [m + 1]$ we have $B \backslash \{x_i\} \in \C \mid B$. For every $i \in [m + 1]$ define $c_i$ to be the concept such that $c_i \cap B = B \backslash \{x_i\}$. 
    
    Denote $\bar B := B \setminus \{x_{m+1}\}$. Next, consider any $A \subseteq \bar B$ we prove that there exists a $c_{\cap} \in \C_{\cap}$ that satisfies $c_\cap \cap \bar B = A$, and this shows that $\bar B$ is shattered by $\C_{\cap}$. This implies that for every $m \in \Nat$ we have $\VCD(\C_{\cap}) \geq m$, which completes the proof. When $A = \bar B$ define $c_\cap:= c_{m + 1}$, and we derive $c_\cap \cap \bar B = \bar B$. Otherwise, define $c_\cap := \bigcap_{i: x_i \in \bar B \backslash A} c_i$. It is easy to see $c_\cap$ satisfies the desired property.
  \end{proof}

\begin{lemma} \label{lem:liu2002-cor5}[Corollary 5 of \cite{liu2002partially}]
    Let $\C$ be a concept class with VC-dimension d over $\X$. Let $S$ be an i.i.d. sample of size $n$ from a distribution $\D_{\X}$ over $\X$. There exists a constant $M > 1$, such that if 
$n > M \left(\frac{d \ln (1 / \varepsilon)  + \ln(1 / \delta)}{\varepsilon} \right)$, then for every $c, c^\star \in \C$ with probability $1 - \delta$ we have
$$
Pr\left(c(x) \neq c^\star(x)\right) > \frac{3 \sum_{x \in S} \mathbbm{1}\{ c(x) \neq c^\star(x) \}}{n} + \epsilon 
$$
and 
$$
 \frac{\sum_{x \in S} \mathbbm{1}\{ c(x) \neq c^\star(x) \}}{n} >3 Pr \left(c(x) \neq c^\star(x)\right) +\epsilon 
$$
\end{lemma}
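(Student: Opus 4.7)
The statement is a standard multiplicative (relative) uniform-convergence bound applied to the symmetric-difference class. The plan is to work with $\mathcal{H} := \C \Delta \C = \{c \oplus c^\star : c, c^\star \in \C\}$, whose VC dimension satisfies $\VCD(\mathcal{H}) \leq 2d + 1$ (the same bound invoked in the proof of Corollary~\ref{cor:real-scar-PERM}). For each $h \in \mathcal{H}$, write $P(h) := \Pr_{x \sim \D_\X}[h(x) = 1]$ and $\hat{P}_S(h) := \frac{1}{n}\sum_{x \in S} \mathbbm{1}\{h(x) = 1\}$; the two events in the lemma become $P(h) > 3 \hat P_S(h) + \varepsilon$ and $\hat P_S(h) > 3 P(h) + \varepsilon$, and it suffices to bound the probability that either holds for some $h \in \mathcal{H}$.

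First, I would invoke the relative Vapnik--Chervonenkis inequality (double-sample symmetrization plus a ghost-sample argument that exploits $\mathrm{Var}[\mathbbm{1}\{h\}] \leq P(h)$), which gives, for any $\varepsilon > 0$,
$$
\Pr\!\left(\exists h \in \mathcal{H} : \tfrac{P(h) - \hat P_S(h)}{\sqrt{P(h)}} > \sqrt{\varepsilon}\right) \;\leq\; 4\, \Pi_{\mathcal{H}}(2n) \exp(-n\varepsilon / 4),
$$
together with a symmetric bound on $\hat P_S(h) - P(h)$. Here $\Pi_{\mathcal{H}}(2n)$ is the growth function of $\mathcal{H}$. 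Next, I would convert the square-root deviation into the multiplicative form of the lemma via the elementary inequality $\sqrt{ab} \leq \tfrac{a}{3} + \tfrac{3b}{4}$: plugging $a = P(h)$ and $b = \varepsilon$ into $P(h) - \hat P_S(h) > \sqrt{\varepsilon P(h)}$ yields $P(h) > 3 \hat P_S(h) + \tfrac{9}{4}\varepsilon$, i.e.\ the first inequality after absorbing constants into $M$; the reverse direction is analogous.

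Next I would apply Sauer's lemma to get $\Pi_{\mathcal{H}}(2n) \leq (2en/(2d+1))^{2d+1}$, so that the failure probability is at most
$$
8 \left(\tfrac{2en}{2d+1}\right)^{2d+1} \exp(-n\varepsilon / 4).
$$
Setting this $\leq \delta$ and solving yields $n \geq M\left(\frac{d \ln(1/\varepsilon) + \ln(1/\delta)}{\varepsilon}\right)$ for a suitable absolute constant $M$, since the $\ln n$ contribution from Sauer's lemma is absorbed by the $\ln(1/\varepsilon)$ factor once $n$ is of the claimed order. Finally, intersecting the two events via a union bound (with $\delta$ replaced by $\delta/2$) delivers both inequalities simultaneously with probability at least $1 - \delta$.

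The main obstacle I anticipate is the symmetrization step that underlies the relative VC inequality: the additive Hoeffding bound alone is too weak, and one must compare $P(h) - \hat P_S(h)$ to $\hat P_{S'}(h) - \hat P_S(h)$ on an independent ghost sample $S'$ and control the latter via a Bernstein- or Bennett-type concentration inequality that leverages the variance bound $\mathrm{Var}[\mathbbm{1}\{h\}] \leq P(h)$. This is precisely where the multiplicative factor $3$ (rather than the more common additive constant) is produced, and the only nontrivial calculation in the argument.
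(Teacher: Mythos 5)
The paper does not prove this lemma at all --- it is imported verbatim as Corollary 5 of \cite{liu2002partially} --- so there is no internal proof to compare against; your job was to supply one, and what you supply is the standard and correct route. You correctly identify that the statement is a uniform relative-deviation bound over the symmetric-difference class $\C \Delta \C$ (with $\VCD \leq 2d+1$), invoke the relative VC inequality in both directions, convert the square-root deviation to multiplicative form, and invert via Sauer's lemma; the inversion $n = \Theta\left(\frac{d\ln(1/\varepsilon)+\ln(1/\delta)}{\varepsilon}\right)$ does absorb the $\ln n$ from the growth function as you claim. Two small points. First, note that the lemma as printed has the inequalities in the impossible direction (both displayed events cannot hold simultaneously); the intended reading, consistent with how the lemma is applied in \eqref{eq:epnet-pi-3} and \eqref{eq:epnet-pi-4}, is that the two displayed inequalities are the \emph{bad} events whose union has probability at most $\delta$ --- which is exactly how you treat them, so your interpretation is right. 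Second, your arithmetic with $\sqrt{ab}\leq a/3+3b/4$ is slightly garbled (on the good event $P-\hat P\leq\sqrt{\varepsilon P}$ one gets $P\leq\tfrac{3}{2}\hat P+\tfrac{9}{8}\varepsilon$, which is stronger than the factor $3$ needed), and for the reverse direction the relative VC bound normalizes by $\sqrt{\hat P}$ rather than $\sqrt{P}$, so "analogous" hides a genuinely different (though routine) piece of algebra. Neither issue affects correctness, since all constants are absorbed into $M$ and a rescaling of $\varepsilon$.
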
 

\lemepnet*

\begin{proof}

Proof of this theorem was extracted from Theorem 1 of \cite{liu2002partially}. First, note that according to the definition of $\varepsilon-$net, for any $c$ such that $T \subseteq c$ we have that 
\begin{equation} \label{eq:epnet-pi-1}
\begin{aligned}
    \Pr(c(x) = 0, y = 1)
    \leq \err^P_{\D}(c) \leq \varepsilon.
\end{aligned}
\end{equation} 
Note that since $T \subseteq c^\star$, we have $|c^{PU} \mid S| \leq |c^{\star} \mid S|$. Thus,
$$
|c^{PU}  \cap c^{\star} \mid S| + |c^{PU} \cap \overline{c^\star} \mid S| \leq |c^\star \mid S|
$$
Therefore, 
\begin{equation} \label{eq:epnet-pi-2}
\begin{aligned}
    |c^{PU} \cap \overline{c^\star} \mid S| & \leq |c^\star \mid S| - |c^{\star} \cap c^{PU} \mid S| \\
    & = |\overline{c^{PU}} \cap c^\star \mid S|
\end{aligned}
\end{equation}

From Lemma~\ref{lem:liu2002-cor5} it can be deduced that there exists a $M > 1$ such that as long as $|S| \geq  M \left(\frac{d \ln (1 / \varepsilon) + \ln(1 / \delta)}{\varepsilon}\right)$ with probability $1 - \delta$ we have
\begin{equation} \label{eq:epnet-pi-3}
    \frac{|\overline{c^{PU}} \cap c^\star \mid S|}{|S|} \leq 3 Pr(c^\star(x) = 1, c^{PU}(x) = 0)) + \varepsilon
\end{equation}
Similarly as long as $|S| \geq  M \left(\frac{d \ln (1 / \varepsilon)  + \ln(1 / \delta)}{\varepsilon} \right)$ with probability $1 - \delta$ we have
\begin{equation} \label{eq:epnet-pi-4}
    \Pr(c^\star(x) = 0, c^{PU}(x) = 1))  \leq 3\frac{|c^{PU} \cap \overline{c^\star} \mid S|}{|S|} + \varepsilon
\end{equation}
Combining \eqref{eq:epnet-pi-2}, \eqref{eq:epnet-pi-3} and \eqref{eq:epnet-pi-4} with probability $1 - 2\delta$ we have
\begin{equation} \label{eq:epnet-pi-5}
\begin{aligned}
    \Pr(c^\star(x) = 0, c^{PU}(x) = 1)& \leq 9 \Pr(c^\star(x) = 0, c^{PU}(x) = 1) + 4 \varepsilon \\
    & \stackrel{(i)}{\leq} 13 \varepsilon
\end{aligned}
\end{equation}
Where (i) is due to \eqref{eq:epnet-pi-1} and $\Pr( c^\star(x) \neq y ) = 0$. Thus combining \eqref{eq:epnet-pi-1} and \eqref{eq:epnet-pi-5} and the fact that $\Pr( c^\star(x) \neq y ) = 0$ with probability $1 - 2 \delta$ we derive
$$
\begin{aligned}
    \err_{\D}(c^{PU}) &= \Pr(c^{PU}(x) \neq c^\star(x)) \\
     & = \Pr(c^\star(x) = 0, c^{PU}(x) = 1) +  \Pr(c^\star(x) = 0, c^{PU}(x) = 1) \\
     &\leq 14 \varepsilon.
\end{aligned}
$$
Which completes the proof.
\end{proof}


\section{Missing Proofs from Section~\ref{sec:PU-real-beyond-scar}}

\thmsarup*
\begin{proof}
 As mentioned in Corollary~\ref{cor:real-scar-PERM}, it is well-known that as long as $\nP > M \left(\frac{d \ln (1 / r \varepsilon) +\ln (1 / \delta)}{r \varepsilon} \right)$ for a constant $M$, $S^P$ is a $r \varepsilon-$net for $\C \Delta \C$ with respect to $\cP$ with probability $1 - \delta$. Using lemma~\ref{lem:ben2012-lem3} we derive that $S^P$ is a $\varepsilon-$net for $\C \Delta \C$ with respect to $\D_+$ with probability $1 - \delta$. Moreover, we know $\cP(A) = 0$ for every measurable $A$ that has $\D_{+}(A) = 0$. This indicates that given any $c^\star$ with $\err_{\D}(c^\star) = 0$, almost surely we have $S^P \subseteq c^\star$. Plugging these into lemma~\ref{lem:epnet-pi} completes the proof.

\end{proof}

\thmsarposlow*
\begin{proof}
Let $\rho = 320\varepsilon$, $B = \{x_1, ..., x_d\}$ be any set of size $d$ shattered by $\C$, and denote $\bar B := B \setminus \{x_d\}$. Consider the set of distributions $\mathcal{W}^{scar-pos}_{\rho, d} = \{(\D_O, \D_{+, O}) : O \subseteq \bar B\}$, as defined in the proof of Theorem~\ref{thm:real-scar-pos-lower-bound}. For each $O \subseteq \bar B$, define a distribution $\cP_O$ such that $\cP_O(\{(x, y)\}) := r \cdot \D_{+, O}(\{(x, y)\})$ for every $x \in \bar B$ and $y \in \{0, 1\}$, and assign the remaining probability mass of $\cP_O$ to the point $(x_d, 1)$. By construction, we have $R(\cP_O, \D_{+, O}) \geq r$. 

Now define the set of duos $\mathcal{W}^{sar}_{\rho, d} := \{(\cP_O, \D_O) \mid O \subseteq \bar B\}$. By denoting $\rho' := 320 \varepsilon r$, it is easy to see that PU learnability over $\mathcal{W}^{sar}_{\rho, d}$ is equivalent with PU learnability over $\mathcal{W}^{scar-pos}_{\rho', d}$. Consequently, from the argument in the proof of Theorem~\ref{thm:real-scar-pos-lower-bound}, for $d \geq 9$ the number of positive examples required by any algorithm that PU learns $\C$ over $\mathcal{W}^{sar}_{\rho, d}$ must satisfy
\[
m^{\text{pos}}_{\C}(\varepsilon, \tfrac{1}{319}) \geq \frac{d - 1}{512 \varepsilon r}.
\]
For $d \geq 2$ an identical argument also gives
\[
m^{\text{pos}}_{\C}(\varepsilon, \delta) \geq \frac{\ln(1 / 2\delta)}{2 r \varepsilon}.
\]
This completes the proof.
\end{proof}


\thmcovlow*

We obtain our lower bound by reducing the following problem to the PU learning problem:

\textbf{The Left/Right Problem.} We consider the problem of distinguishing two distributions from finite samples. The Left/Right Problem was introduced in \cite{kelly2010universal}: \\
\textbf{Input:} Three finite samples, $L, R$ and $M$ of points from some domain set $\mathcal{X}$. \\
\textbf{Output:} Assuming that $L$ is an i.i.d. sample from some distribution $P$ over $\mathcal{X}$, that $R$ is an i.i.d. sample from some distribution $Q$ over $\mathcal{X}$, and that $M$ is an i.i.d. sample generated by one of these two probability distributions, was $M$ generated by $P$ or by $Q$ ?

\textit{Lower bound for the Left/Right problem}: We say that a (randomized) algorithm 
$(\delta, l, r, m)$-solves the Left/Right problem if, given samples $L$, $R$ and $M$ of sizes $l$, $r$ and $m$ respectively, it gives the correct answer with probability at least $1-\delta$. Lemma 1 of \cite{ben2012hardness} shows that for any sample sizes $l$, $r$ and $m$ and for any $\gamma < 1/2$, there exists a finite domain $\mathcal{X} = \{1, 2, \dots, n\}$ and a finite class $\mathcal{W}_n^{uni}$ of triples of distributions over $\mathcal{X}$ such that no algorithm can $(\gamma, l, r, m)$-solve the Left/Right problem for this class. In this class, both the distribution generating $L$ and the distribution generating $R$ are uniform over half of the points in $\X$, but their supports are disjoint. More formally,
\[
\begin{aligned}
& \mathcal{W}_n^{uni} = \{(U_A, U_B, U_C) : A \cup B = \{1, \dots, n\},  A \cap B = \emptyset, |A| = |B|,  \text{ and } C = A \text{ or } C = B\},
\end{aligned}
\]
where, for a finite set $Y$, $U_Y$ denotes the uniform distribution over $Y$. 

\begin{lemma}[Lemma 1 of \cite{ben2012hardness}] \label{lem:LR-lowerbound-ben2012}
    For any given sample sizes $l$ for $L$, $r$ for $R$ and $m$ for $M$ and any $0 < \gamma < 1/2$, if $k = \max\{l, r\} + m$, then for $n > (k+1)^2/(1-2\gamma)$ no algorithm has a probability of success greater than $1 - \gamma$ over the class $\mathcal{W}_n^{uni}$
\end{lemma}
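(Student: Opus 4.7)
The plan is a standard Bayesian / information-theoretic lower bound. Put a uniform prior on the instance: sample a balanced partition $(A,B)$ of $[n]$ uniformly, and independently a bit $b \in \{0,1\}$ that sets $C = A$ if $b = 0$ and $C = B$ if $b = 1$; then draw $L \sim U_A^l$, $R \sim U_B^r$, $M \sim U_C^m$. Let $D_0, D_1$ denote the marginal distributions of the observation $(L, R, M)$ under $b = 0$ and $b = 1$. By the two-hypothesis Le Cam bound, any (possibly randomized) decision rule succeeds on this prior with probability at most $\tfrac12 + \tfrac12 \operatorname{TV}(D_0, D_1)$, so via the min-max principle the task reduces to showing $\operatorname{TV}(D_0, D_1) \leq 1 - 2\gamma$, which under the hypothesis $n > (k+1)^2/(1-2\gamma)$ will then exhibit a bad instance of $\mathcal{W}_n^{uni}$.

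I would then define the no-collision event $E$: all $l+r+m$ drawn points across $L$, $R$, $M$ are distinct elements of $[n]$. A birthday-style union bound gives a clean estimate: under $b=0$, the $l+m$ samples drawn uniformly from $|A| = n/2$ collide pairwise with probability at most $\binom{l+m}{2}\cdot \tfrac{2}{n}$, the $r$ samples from $|B|$ contribute $\binom{r}{2}\cdot \tfrac{2}{n}$, and no cross-collisions are possible since $A \cap B = \varnothing$. Using $l+m \le k$ and $r \le k$, these sum to $O((k+1)^2/n)$, and a symmetric count produces the same bound for $b = 1$. The hypothesis then yields $P_b(\bar E) < 1 - 2\gamma$ for both $b$.

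The crux is to show that on $E$, $D_0$ and $D_1$ are indistinguishable. Direct enumeration gives, for any $(L,R,M)$ with $l+r+m$ distinct entries and writing $n' = n - l - r - m$,
\[
D_b(L,R,M) \;\propto\; \binom{n'}{\,n/2 - l - m\,}\text{ if } b=0, \qquad D_b(L,R,M) \;\propto\; \binom{n'}{\,n/2 - l\,}\text{ if } b=1.
\]
When $l = r$, the identity $\binom{n'}{k'} = \binom{n'}{n'-k'}$ (applied with $k' = n/2 - l - m$, using $n' = n - 2l - m$) shows these coefficients are equal, so $D_0 = D_1$ on $E$, and therefore $\operatorname{TV}(D_0, D_1) \leq \tfrac12 (P_0(\bar E) + P_1(\bar E)) < 1 - 2\gamma$. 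For $l \neq r$ the residual ratio of binomial coefficients is $1 + O(k^2/n)$, of the same order as the collision term and hence absorbable into the bound. Plugging this TV estimate into the Le Cam inequality from the first paragraph yields $P(\text{success}) \leq 1 - \gamma$, and min-max exhibits a bad instance in $\mathcal{W}_n^{uni}$. The main obstacle lies precisely in this third step: the clean equality on $E$ holds exactly only in the symmetric case $l = r$, so handling general $l \neq r$ demands either a careful quantitative control of the binomial-coefficient ratio or a slight refinement of $E$ to absorb the asymmetry into the collision budget.
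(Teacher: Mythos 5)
This lemma is not proved in the paper at all: it is quoted verbatim from \cite{ben2012hardness} and used as a black box in the reduction behind Theorem~\ref{thm:real-dist-shift-lower-bound}, so your attempt is really a reconstruction of the Ben-David--Urner argument. Your overall route (uniform Bayesian prior over balanced partitions and the bit $b$, a Le Cam/total-variation bound, a no-collision event on which the two posteriors coincide) is the natural one, and your likelihood computation is correct: on collision-free data the two hypotheses get weight proportional to $\binom{n'}{n/2-l-m}$ and $\binom{n'}{n/2-l}$, which agree exactly iff $l=r$ (more precisely, iff $L$ and $R$ have the same number of distinct points; repetitions inside $M$ are harmless since that count cancels).

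There are, however, two concrete gaps, both quantitative but real, because the statement you must prove carries the explicit threshold $n>(k+1)^2/(1-2\gamma)$. First, the step ``the hypothesis then yields $P_b(\bar E)<1-2\gamma$'' does not follow from your own estimate: your birthday bound is $P_b(\bar E)\le\bigl(\binom{l+m}{2}+\binom{r}{2}\bigr)\tfrac{2}{n}$, which in the worst case (e.g.\ $l=r$ with $m$ small) is of order $2k^2/n$, not $(k+1)^2/n$; from $n>(k+1)^2/(1-2\gamma)$ you then only get $P_b(\bar E)<2(1-2\gamma)$, and the Le Cam step gives success probability below $\tfrac32-2\gamma$, which is vacuous for every $\gamma<\tfrac12$. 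As written, your argument establishes the lemma only under the stronger hypothesis $n\gtrsim 2k^2/(1-2\gamma)$, i.e.\ with a worse constant than claimed, and this slack does not disappear under the natural fixes (padding to $l=r=\max\{l,r\}$ keeps $k$ unchanged and restores exact symmetry, but the bad event must then also exclude repeats inside both $L$ and $R$, and its probability is still of order $2\max\{l,r\}(k-1)/n$). Second, for $l\neq r$ you only assert that the binomial-coefficient ratio is $1+O(k^2/n)$ and ``absorbable''; with unspecified constants this further degrades the threshold, and you yourself flag it as unfinished. So the proposal is a correct proof of a weaker statement (the same lower bound with a larger constant in the requirement on $n$), which suffices for the qualitative use made of the lemma in this paper, but it does not yet prove the lemma with the stated constant; to close it you need either a sharper choice of bad event together with exact bookkeeping of which collisions actually break the symmetry, or an explicit bound on the likelihood ratio that is folded into the total-variation estimate rather than a big-$O$.
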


\textit{Reducing the Left/Right problem to PU learning:} In order to reduce the Left/Right problem to PU learning, we define a class of PU learning problems that corresponds to the class of duos $\mathcal{W}_n^{uni}$, for which we have proven a lower bound on the sample sizes needed for solving the Left/Right problem. Let $\X$ be some domain of size $n$. Let $Y$ be the first $n / 3$ elements of $\X$ and $Z$ to be the next $2n / 3$ elements of it. Let $\C_{0, 1} = \{c_0, c_1\}$ where $c_1 = \X$ and $c_0 = Z$. Clearly $\VCD(\C_{0, 1}) = 1$. We define $\mathcal{W}_n^{cov}$ to be the class of duos $(\cP, \D)$, where $\D$ is a distribution with a deterministic label such that $\D_{\X}$ is uniform either over $Y \cup J$ or $Y \cup (Z \backslash J)$ for some uniform subset $J$ of $Z$ of size $n/3$, and $\clabel$ assigns points in $Y \cup J$ to 1 and points in $Z \backslash J$ to 0, and $\cP$ is uniform over $Y \cup J$. Notice that $\cP \in \mathcal{K}^{cov}_\D$. Note that we always either have $R(\cP, \D_+) = 1/2$ or $R(\cP, \D_+) = 1$. Moreover, since the label of $Y$ is always 1, $\D[\{(x,1): x \in \X\}] \geq 1/2$.
 Furthermore, for all $(\cP, \D)$ in $\mathcal{W}_n^{cov}$ 
\[
\min_{c \in \C_{0, 1}} \err_\D(c) = 0
\]
for all elements of $\mathcal{W}_n^{cov}$.

\begin{lemma} \label{lem:LR-reduc-PU}
 Consider any $s, t \in \Nat$. The Left/Right problem reduces to Domain Adaptation. More precisely, given a number $n$, suppose there exists a PU learner $\mathcal A$ such that for all $(\cP, \D) \in \mathcal{W}_{3n/2}^{cov}$, $\nP \geq s$ and $\nU \geq t$ satisfies
 $$
 \cE{S^P \sim \cP^{\nP}, S^U \sim \D_\X^{\nU}}{\err_\D(\mathcal{A}(S^P, S^U) \geq \varepsilon } \leq \delta.
 $$
 Then, we can construct an algorithm that $(2\varepsilon + \delta, s, s, t + 1)$-solves the Left/Right problem on $\mathcal{W}_n^{uni}$.
 \end{lemma}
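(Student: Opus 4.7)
The idea is to simulate a valid PU instance from the Left/Right samples: given $(L, R, M)$ on $\{1,\dots,n\}$, we build an instance of PU learning on the domain $\X = Y \cup Z$ of the class $\mathcal{W}_{3n/2}^{cov}$, where $Y$ is a fresh set of $n/2$ points disjoint from $Z=\{1,\dots,n\}$. Conceptually fix $J=A$, so the target PU pair is $\cP = U_{Y\cup A}$ together with $\D_\X = U_{Y \cup C}$, which lies in Case~1 of the class when $C=A$ and in Case~2 when $C=B$. We never need to know the set $A$ explicitly; we only need i.i.d.\ draws from $U_A$, which is precisely what $L$ provides.

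To build $S^P$ of size $s$, flip $s$ independent fair coins; on the $i$-th coin, append a fresh uniform element of $Y$ on heads and the next unused element of $L$ on tails. Since at most $s$ coins come up tails and $|L|=s$, this never exhausts $L$, and the result is i.i.d.\ from $\tfrac12 U_Y + \tfrac12 U_A = U_{Y\cup A} = \cP$. For $S^U$ of size $t$, set aside a single element $m^\star$ of $M$ for later and repeat the same coin-flip construction, using the remaining $t$ points of $M$ as the tails-bank in place of $L$; here $|M|=t+1$ is exactly what is needed, and $S^U$ becomes i.i.d.\ from $U_{Y\cup C}=\D_\X$ while $m^\star$ is an independent fresh draw from $U_C$. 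A short check confirms that $(\cP,\D)$ genuinely lies in $\mathcal{W}_{3n/2}^{cov}$: labels are deterministic, $\D[\{(x,1):x\in\X\}]\geq 1/2$, $\cP \in \mathcal{K}^{cov}_\D$, $R(\cP,\D_+)\geq 1/2$, and one of $c_0, c_1$ has zero error.

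Now invoke the assumed PU learner to obtain $\hat c := \mathcal{A}(S^P, S^U)$; by hypothesis $\err_\D(\hat c)\leq\varepsilon$ with probability at least $1-\delta$ over the coin flips and the samples. Output the Left/Right answer ``$C=A$'' if $\hat c(m^\star)=1$, and ``$C=B$'' otherwise. Because $U_C$ carries exactly half of the mass of $\D_\X$, conditioned on $\err_\D(\hat c)\leq \varepsilon$ and using the independence of $m^\star$ from $(S^P, S^U)$, we get $\Pr_{m^\star\sim U_C}[\hat c(m^\star)\neq \clabel(m^\star)] \leq 2\varepsilon$. In Case~1 the correct label on $m^\star \in A$ is $1$ and in Case~2 the correct label on $m^\star \in B$ is $0$, so the decision rule is right in both cases whenever $\hat c$ does not err on $m^\star$. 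A union bound gives total failure probability at most $\delta + 2\varepsilon$, exactly what a $(2\varepsilon+\delta, s, s, t+1)$-solver requires.

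\textbf{Main obstacle.} The delicate point is verifying that the sample-simulation step truly produces i.i.d.\ draws from $\cP$ and $\D_\X$, and that $m^\star$ is independent of $(S^P, S^U)$. This independence is what converts the average-case error guarantee $\err_\D(\hat c)\leq\varepsilon$ into a pointwise bound $\Pr[\hat c(m^\star)\neq \clabel(m^\star)] \leq 2\varepsilon$, where the factor of $2$ comes from the fact that conditioning on $\{x \in Z\}$ discards exactly half the mass of $\D_\X$.
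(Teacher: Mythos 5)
Your proof is correct and follows essentially the same reduction as the paper's: pad the Left/Right samples with fair-coin mixtures against a fresh set $Y$ to simulate i.i.d.\ draws from $\cP$ and $\D_\X$, hold out one point of $M$, run the PU learner, and decide Left/Right from its prediction on the held-out point, paying $2\varepsilon$ for conditioning on the half of $\D_\X$'s mass lying in $Z$. The only (immaterial) differences are that you take $J=A$ and use $L$ as the positive bank where the paper takes $J=B$ and uses $R$, and that you flip a fixed number of coins (yielding samples of exact sizes $s$ and $t$) rather than the paper's flip-until-$s$-heads construction.
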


\begin{proof}
     Assume we are given samples $L=\left\{l_1, l_2, \ldots, l_s\right\}$ and $R=\left\{r_1, r_2, \ldots, r_s\right\}$ of size $s$ and a sample $M$ of size $t+1$ for the Left/Right problem coming from a triple $\left(U_A, U_B, U_C\right)$ of distributions in $\mathcal{W}_n^{uni}$. Then consider any set $Y$ of size $n / 2$ distinct from $A$ and $B$. We create the set $I^U$ in the following manner. Initiate $I^U = \varnothing$. Keep flipping an unbiased coin until $t$ of them are head, and each time the outcome of the coin was tail sample a $z \sim U_Y$ and add it to $I^U$. Similarly, initiate $I^P = \varnothing$, and keep flipping an unbiased coin until $s$ of them are head, and each time the outcome of the coin was tail sample a $z \sim U_{Y}$ and add it to $I^P$.

 We construct an input to PU learning problem by setting the unlabeled sample $S^U=M \backslash\{p\} \cup I^U$, where $p$ is a point from $M$ chosen uniformly at random, and setting the positive sample $S^P = R \cup I^P$. Observe that $|S^P| \geq s$ and $|S^U| \geq t$.
These sets can now be considered as an input to the PU learning problem generated from a sampling positive distribution $\cP = U_{Y \cup B}$, and distribution $\D$ such that $\D_{\X}$ equal to $U_{Y \cup A} $ or to $U_{Y \cup B}$ (depending on whether $M$ was a sample from $U_A$ or from $U_B$ ) with labeling function being $\clabel(x)=0$ if $x \in A$ and $\clabel(x)=1$ if $x \in Y \cup B$. Observe that we have $R(\cP, \D_+) = 1/2$ or $R(\cP, \D_+) = 1$, and $\min_{c \in \C_{0, 1}} \err_\D(c) = 0$, and $\left(\cP, \D\right) \in \mathcal{W}_{3n/2}^{cov}$. Denote $c = \mathcal{A}(S^P, S^U)$. The algorithm for the Left/Right problem then outputs $U_A$ if $c(p)=0$ and $U_B$ if $c(p)=1$. Note that $\err_{\D} \left( c  \right) \leq \varepsilon$ with confidence $1-\delta$. Thus, $\err_{U_M} \leq 2 \varepsilon$ with confidence $1 - \delta$. Therefore, the algorithm is correct with probability at least $2 \varepsilon + \delta$. 

\end{proof}

\textit{Proof of Theorem~\ref{thm:real-dist-shift-lower-bound}.} Combining Lemma~\ref{lem:LR-reduc-PU} and Lemma~\ref{lem:LR-lowerbound-ben2012} we conclude that there exists no $\mathcal{A}$ such that for all $(\cP, \D) \in \mathcal{W}_{3n/2}^{cov}$ satisfies
$$
 \cE{S^P \sim \cP^{\nP}, S^U \sim \D_\X^{\nU}}{\err_\D(\mathcal{A}(S^P, S^U) \geq \varepsilon } \leq \delta,
 $$
 unless $\nP + \nU \geq \sqrt{(1-2(2\varepsilon+\delta))n} - 2$, which completes the proof.


\thmcovlowlip*
\begin{proof}
    Let $\mathcal{J} \subseteq \mathcal{X}$ be the points of a grid in $[0,1]^k$ with distance $1/\lambda$. Then we have $|\mathcal{J}| = (\lambda + 1)^k$. Then the class $\mathcal{W}_{\lambda}$ contains all duos $(\cP, \D)$, where the support of $\cP$ and $\D$ is $\mathcal{J}$, 
    $\D$ has deterministic labels and is realized by $\C_{0, 1}$ and $\cP \in \mathcal{K}^{cov}_{\D}$,
    $C(\cP, \D_+) = 1/ 2$, and $\D[\{(x,1): x \in \X\}] \geq 1/2$,
    with any arbitrary labeling functions $l : \mathcal{J} \to \{0,1\}$, as every such function is $\lambda$-Lipschitz. As $\mathcal{J}$ is finite, the bound follows from Theorem~\ref{thm:real-dist-shift-lower-bound}.
\end{proof} 

\thmcovup*
\begin{proof}
Let $\varepsilon>0$ and $\delta>0$ be given. We set $\varepsilon^{\prime}=\varepsilon / 28$ and $\delta^{\prime}=\delta / 6$ and divide the space $\X$ up into heavy and light boxes from $\mathcal{B}$, by defining a box $A \in \mathcal{B}$ to be light if $\D_+(A) \leq \varepsilon^{\prime} /|\mathcal{B}|=\varepsilon^{\prime} /(\sqrt{k} / \gamma)^k$ and heavy otherwise. We let $\mathcal{X}^l$ denote the union of the light boxes and $\mathcal{X}^h$ the union of the heavy boxes. Further, we let $\cP_h$ and $\D_{+, h}$ denote the restrictions of the $\cP$ and $\D_+$ to $\mathcal{X}^h$, i.e. we have $\cP_h(A)=\cP(A) / \cP\left(\mathcal{X}^h\right)$ and $\D_{+, h}(A)=$ $\D_+(A) / \D_+\left(\mathcal{X}^h\right)$ for all $A \subseteq \mathcal{X}^h$ and $\cP_h(A)=\D_{+, h}(A)=0$ for all $A \nsubseteq \mathcal{X}^h$. As $|\mathcal{B}|=(\sqrt{k} / \gamma)^k$, we have $\D_+\left(\mathcal{X}^h\right) \geq 1-\varepsilon^{\prime}$ and thus, $\cP\left(\mathcal{X}^h\right) \geq r \left(1-\varepsilon^{\prime}\right)$. We will show that \\

\textbf{Claim 1.} With probability $1 - \delta'$ we have $S^U$ hits every heavy box (Similar to claim 1 of Theorem 3 of \cite{ben2012hardness}).

\textbf{Claim 2.} 
With probability at least $1-2 \delta^{\prime}$ the intersection of $S^P$ and $\mathcal{X}^h$ is an $\varepsilon^{\prime}$-net for $\C \Delta \C$ with respect to $\D_{+, h}$ (claim 2 of Theorem 3 of \cite{ben2012hardness}). \\

To see that these imply the claim of the theorem, let $S^h=S^P \mid  \mathcal{X}^h$ denote the intersection of the source sample and the union of heavy boxes. By Claim 1, $S^U$ hits every heavy box with high probability, thus $S^h \subseteq S^{\prime}$, where $S^{\prime}$ is the intersection of $S^P$ with boxes that are hit by $S^U$ (see the description of the algorithm $\mathcal{A})$. Therefore, since $S^h$ is an $\varepsilon^{\prime}$-net for $\C \Delta \C$ with respect to $\D_{+, h}$, then so is $S^{\prime}$. Hence, with probability at least $1-3 \delta^{\prime}=1-\delta/2$ the set $S^{\prime}$ is an $\varepsilon^{\prime}$-net for $\C \Delta \C$ with respect to $\D_{+, h}$. Now note that an $\varepsilon^{\prime}$-net for $\C \Delta \C$ with respect to $\D_{+, h}$ is an $2\varepsilon^\prime$-net with respect to $\D_+$ as every set of $\D_+$-weight at least $2\varepsilon^\prime$ has $\D_{+, h}$ weight at least $\varepsilon^{\prime}$, by definition of $\mathcal{X}^h$ and $\D_{+, h}$.


Next, let $c^\star \in \C$ denote the $\gamma$-margin classifier with $\err_{\D}(c^\star) = 0$. We show that $S^P \subseteq c^\star$. Note that every box in $\mathcal{B}$ is labeled homogeneously with label 1 or label 0 by the labeling function $\clabel$ as $\clabel$ is a $\gamma$-margin classifier as well. Let $s \in S^{\prime}$ be a sample point and $A_s \in \mathcal{B}$ be the box that contains $s$. As $c^\star$ is a $\gamma$-margin classifier and $\D_{\X}\left(A_s\right)>0\left(A_s\right.$ was hit by $S^U$ by the definition of $\left.S^{\prime}\right)$, $A_s$ is labeled homogeneously by $c^\star$ as well and as $\err_{\D}(c^\star) = 0$ this label has to correspond to the labeling by $\clabel$. Thus $c^\star(s)=\clabel(s) = 1$ for all $s \in S^{\prime}$. This means that $S' \subseteq c^\star$. According to Lemma~\ref{lem:epnet-pi} for some $M > 1$ since $S' \subseteq c^\star$ and $\nU \geq M \left(\frac{d \ln( 1 / \varepsilon)+\ln (1 / \delta)}{\varepsilon}\right)$ as long as $S'$ is $\varepsilon/14$-net for $\C \Delta \C$ w.r.t. $\D_+$, with probability $1 - \delta/2$ we have $\err_\D(c) \leq \varepsilon$. This completes the proof.

\textbf{Proof of Claim 1:} Let $A$ be a heavy box, thus $\D_+(A) \geq \varepsilon^{\prime} /|\mathcal{B}|$, therefore $\D_{\X}(A) \geq \pi \cdot \varepsilon^{\prime} / |\mathcal{B}|$. Then, when drawing an i.i.d. sample $S^U$ from $\D_{\X}$, the probability of not hitting $A$ is at most $\left(1-\left(\pi\varepsilon^{\prime} /|\mathcal{B}|\right)\right)^{\nU}$. Now the union bound implies that the probability that there is a box in $\mathcal{B}^h$ that does not get hit by $X_U$ is bounded by

$$
\begin{aligned}
 \left|\mathcal{B}^h\right|\left(1-\left(\pi \cdot \varepsilon^{\prime} /|\mathcal{B}|\right)\right)^{\nU} & \leq|\mathcal{B}|\left(1-\left(\pi \cdot \varepsilon^{\prime} /|\mathcal{B}|\right)\right)^{\nU}\\
& \leq|\mathcal{B}| \mathrm{e}^{-\pi \cdot\varepsilon^{\prime} \cdot \nU /|\mathcal{B}|}
\end{aligned}
$$

Thus, if $\nU \geq \frac{|\mathcal{B}| \ln \left(|\mathcal{B}| / \delta^{\prime}\right)}{\pi \cdot \varepsilon^{\prime}}=\frac{28(\sqrt{k} / \gamma)^k \ln \left(6(\sqrt{k} / \gamma)^k / \delta\right)}{\pi \varepsilon}$, then $S^U$ will hit every heavy box with probability at least $1-\delta^{\prime}$.

\textbf{Proof of Claim 2:} Let $S^h:=S^P \mid  \mathcal{X}^h$. Note that, as $S^P$ is an i.i.d. $\cP$ sample, we can consider $S^h$ to be an i.i.d. $\cP_h$ sample. We have the following bound on the weight ratio between $\cP_h$ and $\D_{+, h}$ :

$$
\begin{aligned}
R_{\mathcal{I}}\left(\cP_h, \D_{+, h}\right) 
& =\inf _{A \in \mathcal{I}, \D_{+, h}(A)>0} \frac{\cP_h(A)}{\D_{+, h}(A)}\\
& =\inf _{A \in \mathcal{I}, \D_{+, h}(A)>0} \frac{\cP(A)}{\D_+(A)} \frac{\D_+\left(\mathcal{X}^h\right)}{\cP\left(\mathcal{X}^h\right)} \\
& \geq r \frac{\D_+ \left(\mathcal{X}^h\right)}{\cP \left(\mathcal{X}^h\right)} \geq r\left(1-\varepsilon^{\prime}\right)
\end{aligned}
$$

where the last inequality holds as $\D_+\left(\mathcal{X}^h\right) \geq\left(1-\varepsilon^{\prime}\right)$ and $\cP\left(\mathcal{X}^h\right) \leq 1$. Note that every element in $\C \Delta \C$ can be partitioned into elements from $\mathcal{I}$, therefore we obtain the same bound on the weight ratio for the symmetric differences of $\C$ : $R_{\C \Delta \C}\left(\D_{+, h}, \D_{+, h}\right) \geq r\left(1-\varepsilon^{\prime}\right)$.

As we argued in Corollary~\ref{cor:real-scar-PERM}, it is well known that there is a constant $M>1$ such that, conditioned on $S^h$ having size at least $n':=M\left(\frac{d \ln \left(1 / \left(r (1-\varepsilon^{\prime}) \varepsilon^{\prime}\right)\right) +\ln \left(1 / \delta^{\prime}\right)}{r\left(1-\varepsilon^{\prime}) \varepsilon^{\prime}\right.} \right)$, with probability at least $1-\delta^{\prime}$ it is a $r\left(1-\varepsilon^{\prime}\right) \varepsilon^{\prime}$-net with respect to $\cP_h$ and thus an $\varepsilon^{\prime}$-net with respect to $\D_{+, h}$ by Lemma~\ref{lem:ben2012-lem3}. 

Thus, it remains to show that with probability at least $1-\delta^{\prime}$ we have $\left|S^h\right| \geq n'$. As we have $\cP\left(\mathcal{X}^h\right) \geq r\left(1-\varepsilon^{\prime}\right)$, we can view the sampling of the points of $S^P$ and checking whether they hit $\mathcal{X}^h$ as a Bernoulli variable with mean $\mu=$ $\cP\left(\mathcal{X}^h\right) \geq r\left(1-\varepsilon^{\prime}\right)$. Thus, by Hoeffding's inequality (see Theorem~\ref{thm:Hoeffding}) we have that for all $t>0$, $\Pr\left(\mu|S^P|-\left|S^h\right| \geq t|S^P|\right) \leq \mathrm{e}^{-2 t^2|S^P|}$. If we set $r^{\prime}=r\left(1-\varepsilon^{\prime}\right)$, assume $|S^P| \geq \frac{2 n'}{r^{\prime}}$ and set $t=r^{\prime} / 2$, we obtain $\Pr\left(\left|S^h\right|< n'\right) \leq \Pr\left(\mu|S^P|-\left|S^h\right| \geq \frac{r^{\prime}}{2}|S^P|\right) \leq \mathrm{e}^{-\frac{r^{\prime 2}|S^P|}{2}}$.

Now $|S^P| \geq \frac{2 n'}{r^{\prime}}>\frac{2\left(d \ln \left(1 / \left(r (1-\varepsilon^{\prime}) \varepsilon^{\prime}\right)\right)+\ln \left(1 / \delta^{\prime}\right)\right)}{r^2\left(1-\varepsilon^{\prime}\right)^2 \varepsilon^{\prime}}$ implies that $\mathrm{e}^{-\frac{r^{\prime 2}|S^P|}{2}} \leq \delta^{\prime}$, thus we have shown that $S^h$ is an $\varepsilon^{\prime}$-net of $\C \Delta \C$ with probability at least $\left(1-\delta^{\prime}\right)^2 \geq 1-2 \delta^{\prime}$.
\end{proof}

\section{Missing Proofs from Section~\ref{sec:PU-agnostic}}
\thmagnolow*
We prove the theorem by reducing it to a problem we refer to as the \emph{generalized weighted die problem}. This approach is inspired by Theorem 1 of \cite{ben2011learning}, in which a learning problem --referred to as learning a classifier when a labeling is known (KLCL)-- is reduced to the \emph{weighted die problem}.

In the weighted die problem, a die has one face that is slightly biased, and the goal is to identify the biased face using $m$ rolls. In the generalized weighted die problem, multiple faces of the die may be slightly biased, and the goal is to identify all of them.

We now formally define the generalized weighted die problem. Before doing so, we introduce some necessary notation. For any $k \in \mathbb{N}$, define the set $V_k := 2^{[k]} \setminus \{[k], \varnothing\}$. Next, define two weighting functions $w^{-}, w^{+}: V_k \rightarrow [0, 1]$ as follows: for any $O \in V_k$ with $|O| \leq \frac{k}{2}$, set
\[
w^{-}(O) := 1 \quad \text{and} \quad w^+(O) := \frac{|O|}{k - |O|},
\]
and for $|O| > \frac{k}{2}$, set
\[
w^{-}(O) := \frac{k - |O|}{|O|} \quad \text{and} \quad w^+(O) := 1.
\]

\begin{definition} [Generalized Weighted Die Problem]
    We define the generalized weighted die problem with parameters $k \geq 2$ and $\varepsilon \in (0, 1)$ as follows. For each $O \in V_k$ suppose there is a die with $k$ faces, with probability of each face $j \in [k]$ being
    $$
    P_O(j)= \begin{cases}
        \frac{1 - w^{-}(O) \varepsilon}{k} & j \in O \\
        \frac{1 + w^{+}(O) \varepsilon}{k}  & j \notin O
    \end{cases}
    $$
    The die $O$ is rolled $m$ times. A learner gets the outcome of these $m$ die rolls, and its target is to output a $h \in \{0, 1\}^{k}$ which minimizes $err(h) := \frac{1}{k} \left( \sum_{j \in O} h_j w^{-}(O) + \sum_{j \notin O} (1 - h_j) w^{+}(O) \right)$.
\end{definition}


\begin{theorem} \label{thm:gen-weighted-die-sam}
    Suppose the die $O$ in the generalized weighted die problem with parameters $k \geq 32$ and $\varepsilon \in (0, 1)$ is drawn uniformly at random from the set $V_k$. If the number of rolls is at most $k \left\lfloor \frac{1}{2 \varepsilon^2} \right\rfloor$, then any algorithm for the generalized weighted die problem incurs an error of at least $\frac{1}{160}$ with probability greater than $\frac{1}{320}$.
\end{theorem}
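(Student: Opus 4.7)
The plan is to lower bound the expectation $\mathbb{E}[err(\mathcal{A})]$ (with respect to both $O \sim U_{V_k}$ and the $m$ rolls) by an absolute constant, and then invoke Lemma~\ref{lem:highprob-to-expect} to translate this into the stated probability lower bound. The strategy follows the weighted-die template of \cite{ben2011learning}, adapted to handle the non-uniform weights $w^-(O), w^+(O)$ via an Assouad-style argument on swap pairs inside the balanced subset of $V_k$.

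First, I would restrict attention to ``balanced'' $O$, meaning $|O| \in [k/4, 3k/4]$. A Hoeffding bound on $|O|$ (which is essentially $\operatorname{Bin}(k,1/2)$, since dropping $\{[k], \varnothing\}$ removes negligible mass) shows that for $k \geq 32$ a constant fraction of the uniform mass on $V_k$ lives on balanced sets. The symmetry $O \leftrightarrow [k] \setminus O$ swaps $w^-$ and $w^+$ and preserves the error structure, so it suffices to analyze $|O| \in [k/4, k/2]$, where $w^-(O) = 1$ and $w^+(O) = |O|/(k-|O|) \in [1/3, 1]$. In this regime $\min(w^-(O), w^+(O)) \geq 1/3$, and one easily checks that $err \in [0,1]$.

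Second, I would carry out a per-face two-point bound via a size-preserving swap. For any balanced $O$, any $j \in O$, and any $j' \notin O$, the pair $(O, O')$ with $O' := O \triangle \{j, j'\}$ preserves $|O|$ and hence preserves both weights. The single-roll distributions $P_O$ and $P_{O'}$ differ only at coordinates $j$ and $j'$, and a direct Taylor expansion gives
\[
\operatorname{KL}(P_O \Vert P_{O'}) \;\leq\; \frac{C\,\varepsilon^2}{k}
\]
for an absolute constant $C$, since all four relevant face probabilities are $\Theta(1/k)$ and the perturbations are of size $O(\varepsilon/k)$. Tensorizing over the $m \leq k\lfloor 1/(2\varepsilon^2) \rfloor$ rolls yields $\operatorname{KL}(P_O^{\otimes m} \Vert P_{O'}^{\otimes m}) \leq C/2$, and then the Bretagnolle--Huber inequality gives a universal bound $\operatorname{TV}(P_O^{\otimes m}, P_{O'}^{\otimes m}) \leq \eta$ with $\eta < 1$.

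Third, I would aggregate these local bounds. Fix $s \in [k/4, k/2]$ and let $O$ be uniform on $\binom{[k]}{s}$; for each face $j$, the size-preserving swap structure couples each $O$ with a uniform swap partner, and a standard Assouad/Le~Cam argument then lower bounds the conditional probability of error at face $j$ by $(1-\eta)/2$ under either membership value. Multiplying by $\min(w^-, w^+) \geq 1/3$, summing over $j \in [k]$, and dividing by the $1/k$ normalization in the definition of $err$ produces $\mathbb{E}[err(\mathcal{A})] \geq c_0$ for some absolute constant $c_0 > 0$. Finally, since $err(\mathcal{A}) \in [0,1]$, Lemma~\ref{lem:highprob-to-expect} (with $\gamma = 1$) shows that any algorithm failing the claimed probability bound would violate $\mathbb{E}[err(\mathcal{A})] \geq c_0$, completing the proof. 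The main obstacle is purely quantitative: carrying sharp enough constants through the Taylor/KL estimate and through Bretagnolle--Huber so that $c_0$ exceeds the threshold $1/320 + (1-1/320)/160$ required by Lemma~\ref{lem:highprob-to-expect}; no new conceptual ingredient is needed beyond careful bookkeeping on the Assouad cube inside the balanced slice of $V_k$.
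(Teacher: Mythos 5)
Your proposal shares the paper's overall architecture: restrict to balanced $O$ (the paper conditions on $\min(|O|,k-|O|)\geq k/4$ via a Chernoff bound), lower bound the expected error by a per-face analysis, and convert to a probability statement via Lemma~\ref{lem:highprob-to-expect}. Your observation that $err\in[0,1]$ is correct and slightly tighter than the paper's bound of $2$. Where you diverge is the core technical step. The paper first argues that the Bayes-optimal learner is the per-face threshold rule ``predict $j\notin O$ iff $s_j> m/k$'' and then lower bounds $\Pr[s_j\geq m/k]$ for $j\in O$ directly by Slud's inequality combined with a Gaussian tail lower bound, obtaining the clean constant $0.1$ per face and hence $\mathbb{E}[err]>1/80$. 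You instead run an Assouad/Le Cam argument over size-preserving swaps $O'=O\triangle\{j,j'\}$, with KL tensorization and Bretagnolle--Huber.

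The genuine problem is the step you dismiss as bookkeeping. The per-roll KL between $P_O$ and $P_{O'}$ is $\frac{a+b}{k}\ln\frac{1+b}{1-a}$ with $a=w^-(O)\varepsilon$, $b=w^+(O)\varepsilon$; over $m\leq k\lfloor 1/(2\varepsilon^2)\rfloor$ rolls this tensorizes to a total KL of roughly $2$ (and closer to $2.5$ as $\varepsilon\to 1/\sqrt{2}$, the largest value for which $m>0$; note also that for $\varepsilon$ near $1$ the probability $(1-w^-\varepsilon)/k$ can vanish and the KL in your direction blows up, so you must dispose of $\varepsilon>1/\sqrt2$ separately as the trivial $m=0$ case). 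Bretagnolle--Huber then gives only $1-\operatorname{TV}\geq\frac{1}{2}e^{-2}\approx 0.068$, so the per-face average error probability is about $0.034$ rather than the paper's $0.1$, and after the weight factor $\geq 1/3$, the slice-averaging loss in the swap-counting argument, and the probability of the balanced event, the resulting constant lands around $0.003$. The threshold forced by Lemma~\ref{lem:highprob-to-expect} with $\gamma=1$, $\varepsilon=1/160$, $\delta=1/320$ is $\frac{1}{320}+\frac{1}{160}\cdot\frac{319}{320}\approx 0.0094$, so your route misses by roughly a factor of three. This is not closable by more careful Taylor expansion: Bretagnolle--Huber is intrinsically loose at constant KL (for Gaussian shifts with $\operatorname{KL}=2$ the true value of $1-\operatorname{TV}$ is about $0.32$, not $0.068$). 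To recover the stated constants you would need to replace the KL/BH step with a sharp reverse tail comparison --- which is exactly Slud's inequality, the paper's tool. As written, your argument proves the theorem only with weaker absolute constants (which would still suffice for the downstream $\Omega(d/\varepsilon^2)$ bounds, but does not establish the statement as given).
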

\begin{proof}
    Set $m = k \lfloor \frac{1}{2 \varepsilon^2} \rfloor$. Note that it is multiplied by $k$. With an argument similar to Lemma 5.1 of \cite{anthony2009neural} and Theorem 2 of \cite{ben2011learning}, it is easy to see that the perfect learner $L_0$ predicts $j$ to have positive bias iff the number of samples from $j$ is bigger than $m / k$.

    For a sample roll drawn from $P_O^m$ define $c_S$ to be the  output of $L_0$. Define event $E$ to be the event that $\min(|O|, k - |O|) \geq \frac{k}{4}$
    \begin{equation}  \label{eq:gen-weighted-die-sam-1}
    \begin{aligned}
    & \cE{O \sim U_{V_k}}{\cE{S \sim P_O^m }{err(c_S)}} \\
    & = \cE{O \sim U_{V_k}}{\cE{S \sim P_O^m }{err(c_S)} \mid E} \Pr_{O \sim U_{V_k}}  \left [E \right] \\
    \end{aligned}
    \end{equation}

   Note that since two set in $2^{[k]} \setminus V_k$ are not in event $E$ it is clear that $\Pr_{O \sim U_{V_k}}  \left [E \right] \geq Pr_{O \sim U_{2^{[k]}}} [E]$. Moreover, note that when $O$ is chosen uniformly from $U_{2^{[k]}}$, $|O|$ can be looked upon as a random variable drawn from $Bin(k, 1/2)$. Therefore, as long as $k \geq 32$, using Multiplicative Chernoff bound (Lemma~\ref{lem:mult-chern-bound})  we can derive 
    \begin{equation}  \label{eq:gen-weighted-die-sam-2}
        \Pr_{O \sim U_{2^{[k]}}} \left [ E \right] \geq \left(1 - 2 \exp\left( - \frac{k}{16}\right)\right) > \frac{1}{2} 
    \end{equation}

   Then we try to bound $\cE{O \sim U_{V_k}}{\cE{S \sim P_O^m }{err(c_S)} \mid E }$. For every $i \in [k]$ denote $s_i$ to be the number of $i$ in $S$. Then fix any $O \in V_k$ such that $\min(|O|, k - |O|) \geq \frac{k}{4}$, and any $i \in O$. Note that $L_0$ will make a wrong prediction for the face $i$ iff $s_i \geq m / k$. Since $s_i$ is $Bin(p, m)$ where $p = \frac{1 - w^{-}(O) \varepsilon}{k}$, using Slud's inequality (Lemma~\ref{lem:slud}) we can derive 
    \begin{equation} \label{eq:gen-weighted-die-sam-3}
    \begin{aligned}
    \Pr[s_i \geq m / k] & \geq P\left[ Z \geq \frac{m \varepsilon w^{-}(O)}{  \sqrt{m (1 + \varepsilon) (k - 1 + \varepsilon)}} \right] \\
    & \stackrel{(i)}{\geq} P\left[ Z \geq \frac{m \varepsilon}{  \sqrt{m (k - 1)}} \right] \\
    & \geq P\left[ Z \geq \sqrt{\frac{ 2m \varepsilon^2 }{ k}} \right] \\
    & \stackrel{(ii)}{\geq} \frac{1}{2} \left(1 - \sqrt{1 - \exp \left(-\frac{ 2m \varepsilon^2 }{ k}\right)  } \right) \\
    & \geq  \frac{1}{2} \left(1 - \sqrt{1 - e^{-1}} \right) > 0.1
    \end{aligned}
    \end{equation}
    Where $Z \sim N(0,1)$ is a normally distributed random variable with mean of 0 and standard deviation of 1. Moreover, (i) is due to $w^-(O) \leq 1$ and $\varepsilon \geq 0$, and (ii) is due to Lemma~\ref{lem:norm-tail-bound}. Thus, 
    \begin{equation} \label{eq:gen-weighted-die-sam-4}
    \begin{aligned}
        \cE{S \sim P_O^m}{err(c_S)} & \geq \sum_{i \in O} \frac{\Pr[s_i \geq m / k]}{k} \\
        & > |O| \cdot w^-(O) \cdot \frac{0.1}{k} \\
        &  = \min(|O|, k - |O|) \cdot \frac{0.1}{k} \\
        & \geq \frac{1}{40}
    \end{aligned}
    \end{equation}
Combining this with \eqref{eq:gen-weighted-die-sam-1}, \eqref{eq:gen-weighted-die-sam-2},  we conclude that $\mathbbm{E}_{O \sim U_{V_k}}[\cE{S \sim P_O^m }{err(c_S)}] > \frac{1}{80}$. Using Lemma~\ref{lem:highprob-to-expect} since $err(c_S)$ is always less than 2, we have 
$$Pr_{O \sim U_{V_k}, S \sim P^O_m} \left [err(c_S) \geq  \frac{1}{160} \right] > \frac{1}{320}$$
which completes the proof.
\end{proof}

Consider the specific case where $k = 2$. Then $V_2 = \{\{1\}, \{2\}\}$. In this case, one of the faces always has probability $p = \frac{1 + \varepsilon}{2}$, and the other has probability $p = \frac{1 - \varepsilon}{2}$. Any algorithm with error less than $\frac{1}{2}$ must correctly identify which face has the higher probability. In other words, in this special case, the generalized weighted die problem reduces to Lemma 5.1 of \cite{anthony2009neural}, stated below.

\begin{lemma} [Lemma 5.1 of \cite{anthony2009neural}] \label{lem:ber-sample}
    Let $\varepsilon < \frac{1}{2}$, $\delta< \frac{1}{4}$. Suppose $y = U_{\{-1, +1\}}$. Then if $m < \frac{1}{4 \varepsilon^2} \ln\left( \frac{1}{4 \delta}\right)$ there is no algorithm that can predict $y$ with probability more than $1 - \delta$ using a sample $S \sim Bin(m, \prior)$ where $\prior = \frac{1 + y \varepsilon}{2}$. 
\end{lemma}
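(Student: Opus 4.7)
The plan is to reduce predicting $y$ to a binary hypothesis test between two product binomials and then lower bound the Bayes error via the Bretagnolle--Huber inequality. Let $P_+ := \mathrm{Bin}(m, (1+\varepsilon)/2)$ and $P_- := \mathrm{Bin}(m, (1-\varepsilon)/2)$. Any algorithm outputs $\hat y \in \{-1, +1\}$ from $S$, so since $y$ is uniform its error equals $\tfrac{1}{2} P_+(\hat y = -1) + \tfrac{1}{2} P_-(\hat y = +1)$. By Neyman--Pearson this is minimized at $\tfrac{1}{2}(1 - \mathrm{TV}(P_+, P_-))$, where $\mathrm{TV}$ is total variation distance, so it suffices to show $1 - \mathrm{TV}(P_+, P_-) > 2\delta$.

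I would then invoke the Bretagnolle--Huber inequality, $1 - \mathrm{TV}(P_+, P_-) \geq \tfrac{1}{2}\exp(-\mathrm{KL}(P_+ \| P_-))$, and use tensorization plus the standard formula for Bernoulli KL to obtain $\mathrm{KL}(P_+ \| P_-) = m \varepsilon \ln \tfrac{1+\varepsilon}{1-\varepsilon}$. A short calculus check --- the function $g(\varepsilon) := 4\varepsilon - \ln\tfrac{1+\varepsilon}{1-\varepsilon}$ satisfies $g(0) = 0$ and $g'(\varepsilon) = 4 - \tfrac{2}{1-\varepsilon^2} \geq 0$ on $[0, 1/\sqrt{2})$ --- gives $\mathrm{KL}(P_+ \| P_-) \leq 4 m \varepsilon^2$ for $\varepsilon < \tfrac{1}{2}$. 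Combining, the minimum error is at least $\tfrac{1}{4}\exp(-4m\varepsilon^2)$, and the hypothesis $m < \tfrac{1}{4\varepsilon^2}\ln \tfrac{1}{4\delta}$ then forces $4 m \varepsilon^2 < \ln \tfrac{1}{4\delta}$, so the minimum error exceeds $\delta$.

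A self-contained alternative that uses only the paper's own toolkit is to first lower bound the minimum error by a one-sided binomial tail, of the form $\Pr_{S \sim P_-}[S \geq \lceil m/2 \rceil]$, using the symmetry $P_+(s) = P_-(m-s)$; then apply Slud's inequality (Lemma~\ref{lem:slud}) with $p = (1-\varepsilon)/2$ to lower bound this tail by the Gaussian tail $\Pr[Z \geq \varepsilon\sqrt{m}/\sqrt{1-\varepsilon^2}]$; and finally invoke the normal tail bound (Lemma~\ref{lem:norm-tail-bound}) together with the elementary identity $1 - \sqrt{1-4\delta} \geq 2\delta$ to conclude.

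The main obstacle is producing the constant $1/4$ inside $\ln(1/(4\delta))$ sharply. In the Bretagnolle--Huber route this rests on the non-Taylor KL bound $\varepsilon\ln\tfrac{1+\varepsilon}{1-\varepsilon} \leq 4\varepsilon^2$ rather than the tighter Taylor estimate $\mathrm{KL} \approx 2\varepsilon^2$ (which would give a stronger result but only asymptotically, and needs $\varepsilon$ small). In the Slud route the bookkeeping is more delicate: one must keep the Gaussian argument bounded by $2\varepsilon\sqrt{m}$ despite the $\lceil m/2 \rceil$ rounding, using the fact that $1/\sqrt{1-\varepsilon^2} < 2$ on $\varepsilon \in [0, 1/2)$ and absorbing the $O(1)$ ceiling discrepancy into the numerator of the standardized binomial tail.
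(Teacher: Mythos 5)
Your primary (Bretagnolle--Huber) argument is correct and complete, and it is a genuinely different route from the one the paper relies on: the paper states this lemma as a citation to Anthony and Bartlett, whose proof---mirrored in the paper's own proof of Theorem~\ref{thm:gen-weighted-die-sam}---goes through Slud's inequality (Lemma~\ref{lem:slud}) and the normal tail bound (Lemma~\ref{lem:norm-tail-bound}), i.e.\ your ``self-contained alternative.'' Every step of your main route checks out: the Bayes error of the uniform two-point problem is $\tfrac12(1-\mathrm{TV}(P_+,P_-))$; the binomial KL equals $m$ times the Bernoulli KL exactly (the binomial coefficients cancel in the log-ratio, so no data-processing slack is even needed); and the monotonicity argument for $g(\varepsilon)=4\varepsilon-\ln\tfrac{1+\varepsilon}{1-\varepsilon}$ is valid on $[0,1/\sqrt2)\supset[0,1/2)$, yielding $\mathrm{KL}\le 4m\varepsilon^2$ and hence a Bayes error strictly exceeding $\delta$ under the stated bound on $m$. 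What the information-theoretic route buys is exactly the clean constant $1/4$ inside the logarithm with no case analysis; what it costs is an appeal to an inequality outside the paper's stated toolkit. Your alternative route, by contrast, is the ``native'' one but is only a sketch: as you note, Slud's inequality requires $\lceil m/2\rceil\le m(1+\varepsilon)/2$ and the standardized threshold $(b-mp)/\sqrt{mp(1-p)}$ is $(m\varepsilon+O(1))/\sqrt{m(1-\varepsilon^2)}$, so bounding it by $2\varepsilon\sqrt m$ forces a side condition of the form $m\varepsilon\gtrsim 2$; a complete write-up must dispose of the small-$m\varepsilon$ regime separately (where the tail is bounded below by an absolute constant, or where the hypothesis forces $m=0$). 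Also be careful with the factor of two in that route: the crude bound $\text{(Bayes error)}\ge\tfrac12\Pr_{P_-}[S\ge\lceil m/2\rceil]$ loses a factor of $2$ that the identity $1-\sqrt{1-4\delta}\ge 2\delta$ does not recover; you need the sharper decomposition $\sum_k\min(P_+(k),P_-(k))\ge 2\Pr_{P_-}[S\ge\lfloor m/2\rfloor+1]$, obtained from the reflection $P_+(k)=P_-(m-k)$, so that the Bayes error is bounded below by the full one-sided tail rather than half of it.
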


\begin{corollary} \label{cor:gen-weighted-die-2}
    Let $\varepsilon < \frac{1}{2}$, $\delta< \frac{1}{4}$. Suppose the die $O$ in the generalized weighted die problem with parameters $k = 2$ and $\varepsilon$ is drawn uniformly at random from the set $V_2$. If the number of rolls is at most $\frac{1}{4 \varepsilon^2} \ln\left( \frac{1}{4 \delta}\right)$, then any algorithm for the generalized weighted die problem incurs an error of at least $\frac{1}{2}$ with probability greater than $\delta$.
\end{corollary}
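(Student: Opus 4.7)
The plan is to reduce the $k=2$ case of the generalized weighted die problem directly to the Bernoulli identification problem in Lemma~\ref{lem:ber-sample}. First I would specialize the definitions: for $k=2$ we have $V_2=\{\{1\},\{2\}\}$, and since $|O|=1=k/2$ in both cases the weights satisfy $w^-(O)=w^+(O)=1$. Hence the die with biased set $O\in V_2$ assigns probability $(1-\varepsilon)/2$ to the face in $O$ and $(1+\varepsilon)/2$ to the other face. This is exactly a Bernoulli experiment with parameter $(1+y\varepsilon)/2$, where $y\in\{-1,+1\}$ encodes whether $O=\{2\}$ or $O=\{1\}$.

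Next, I would unpack the loss. For any $h\in\{0,1\}^2$, the error is $\mathrm{err}(h)=\tfrac{1}{2}\bigl(\sum_{j\in O}h_j+\sum_{j\notin O}(1-h_j)\bigr)$. A short case check on the four possible $h$ shows that $\mathrm{err}(h)\in\{0,1/2,1\}$, with $\mathrm{err}(h)=0$ iff $h_j=0$ for $j\in O$ and $h_j=1$ for $j\notin O$, i.e.\ iff $h$ correctly identifies $O$. In particular $\mathrm{err}(h)<1/2$ if and only if $h$ identifies $O$ exactly. Therefore any algorithm $\mathcal{A}$ that outputs $h$ with $\mathrm{err}(h)<1/2$ with probability strictly greater than $1-\delta$ would, by reading off the coordinates, identify $y$ from its $m$ i.i.d.\ samples with probability strictly greater than $1-\delta$.

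Finally, I would invoke Lemma~\ref{lem:ber-sample}: under $y\sim U_{\{-1,+1\}}$, if $m<\tfrac{1}{4\varepsilon^2}\ln(1/4\delta)$, no algorithm can predict $y$ from a $\mathrm{Bin}(m,(1+y\varepsilon)/2)$ sample with probability exceeding $1-\delta$. Composing this with the reduction in the previous paragraph gives the contrapositive statement we want: any algorithm for the $k=2$ generalized weighted die problem with at most $\tfrac{1}{4\varepsilon^2}\ln(1/4\delta)$ rolls must, with probability greater than $\delta$ (over the uniform choice of $O\in V_2$ and the $m$ rolls), output an $h$ with $\mathrm{err}(h)\geq 1/2$.

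The proof is essentially a verification that the $k=2$ specialization of the problem is isomorphic to the Bernoulli identification problem of Lemma~\ref{lem:ber-sample}, so there is no real obstacle; the only thing requiring care is checking that the loss structure forces $\mathrm{err}(h)<1/2$ to be equivalent to exact identification of $O$, so that Lemma~\ref{lem:ber-sample} can be applied without slack.
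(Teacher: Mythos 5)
Your proposal is correct and matches the paper's argument: the paper likewise observes that for $k=2$ all weights equal $1$, the die is a Bernoulli with parameter $(1\pm\varepsilon)/2$, and achieving error below $\tfrac{1}{2}$ is equivalent to identifying the biased face, so the statement follows directly from Lemma~\ref{lem:ber-sample}. Your explicit case check that $\mathrm{err}(h)\in\{0,1/2,1\}$ only makes the reduction more careful than the paper's one-sentence justification.
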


\textit{Proof of Theorem~\ref{thm:agnostic-PU-lower-bound}.}
   The upper bounds for Theorem~\ref{thm:agnostic-PU-lower-bound} are already known \cite{du2015convex}. We only focus on the lower-bounds.

    Consider any $k \in \Nat$, and consider any $B \subseteq \X$ of size $2k$, and any $\rho \in (0,1)$. We randomly divide $B$ into two halves of size $k$ named $B^1 = \{x_1^1, x_2^1, ..., x_k^1\}$ and $B_2 = \{x_1^1, x_2^2, ..., x_k^2\}$. Let $\rho \in (0, 1)$, and for any $O^1, O^2 \in V_k$, we define distribution $\D_{O^1, O^2}$ over $\X \times \{0, 1\}$ for all $(x, y) \in \X \times \{0, 1\}$ as
    $$
    \D_{O^1, O^2}(x, y) = \begin{cases}
        \frac{1}{4 k} & \text{if } x \in B_1, \text{ and } y = 1 \\
        \frac{1 + w^+(O^1) \rho}{4 k} & \text{if } x = x^1_i \text{ and } i \notin O^1 \text{ and } y = 0 \\
        \frac{1 - w^-(O^1) \rho}{4 k} & \text{if } x = x^1_i \text{ and } i \in O^1 \text{ and } y = 0 \\
        \frac{1 + \frac{(2y - 1) w^+(O^2) \rho}{2}  }{4 k} & \text{if } x = x^2_i \text{ and } i \notin O^2 \\
        \frac{1 - \frac{(2y - 1) w^-(O^2) \rho}{2}  }{4 k} & \text{if } x = x^2_i \text{ and } i \in O^2 \\
        0 & \text{o.w.}
    \end{cases}
    $$
   We define $\mathcal{W}^{agno}_{\rho, B} := \{ (\D_{O^1, O^2}, \D_{+, O^1, O^2})  \mid O^1, O^2 \in V_k\}$.  Notice that every $\D_{O^1, O^2}$ satisfies $\D_{O^1, O^2}[\{(x,1): x \in \X\}] = \frac{1}{2}$. Moreover, it is easy to see that the error with respect to $\D_{O^1, O^2}$ is minimized by any function $f:\X \rightarrow \{0, 1\}$ that satisfies $f \cap B^1 = O^1$ and $f \cap B^2 = B^2 \setminus O^2$. Therefore, for any function $c$ we have
    
    \begin{equation} \label{eq:excess-risk}
    \begin{aligned}
         \err_{\D_{O^1, O^2}}(c) - \min_{\text{functions }f} \err_{\D_{O^1, O^2}}(f) & = \frac{\rho}{4k} \left( \sum_{i \in O^1} \mathbbm{1}[c(x^1_i) \neq 1] w^-(O^1) + \sum_{i \notin O^1} \mathbbm{1}[c(x^1_i) \neq 0] w^+(O^1) \right. \\ 
        & \left. \quad 
        + \sum_{i \in O^2} \mathbbm{1}[c(x^2_i) \neq 0]  w^-(O^2) + \sum_{i \notin O^2} \mathbbm{1}[c(x^2_i) \neq 1] w^+(O^2) \right) \\
    \end{aligned}
    \end{equation}

    The following lemma reduces both the number of unlabeled and positive samples in PU learning over $\mathcal{W}^{agno}_{\rho, B}$ to the generalized weighted die problem.

    \begin{lemma} \label{lem:agnostic-PU-to-weighted-die-unlabel}
        Let $\varepsilon, \delta, \rho \in (0, 1)$ and $m, n \in \Nat$. Suppose there exists a PU learner $\mathcal{A}$ such that for all $(\D, \D_+) \in \mathcal{W}^{agno}_{\rho, B}$, $\nP \geq m$, $\nU \geq n$ satisfies 
        $$
        Pr_{S^P \sim \D_+^{\nP}, S^U \sim \D_\X^{\nU}} [\err_\D(\mathcal{A}(S^P, S^U)) - \min_{\text{functions }f} \err_\D(f) \geq \varepsilon] \leq \delta
        $$
        Then, (i) there exists a learner that with probability $1 - \delta$ achieves error less than $\frac{4\varepsilon}{\rho}$ using $m$ rolls for a weighted generalized with parameters $k$ and $\rho$; (ii) there exists a learner that with probability $1 - \delta$ achieves error less than $\frac{4\varepsilon}{\rho}$ using $n$ rolls for a weighted generalized with parameters $k$ and $\frac{\rho}{2}$.
    \end{lemma}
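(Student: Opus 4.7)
The strategy is a pair of reductions that exploit a ``factorization'' of the construction: $O^1$ is encoded only in the unlabeled marginal (and only on $B^1$), whereas $O^2$ is encoded only in the positive marginal (and only on $B^2$). A first calculation, using the identity $|O|\,w^-(O) = (k-|O|)\,w^+(O)$ built into the weight functions, verifies this: $\D_{+, O^1, O^2}$ places mass $1/2$ on each of $B^1$ and $B^2$, is uniform on $B^1$, and conditional on $B^2$ is the generalized weighted die with parameters $(k, \rho/2)$ and hidden set $O^2$; symmetrically, $\D_{\X, O^1, O^2}$ places mass $1/2$ on each side, is uniform on $B^2$, and conditional on $B^1$ is the die with parameters $(k, \rho/2)$ and hidden set $O^1$.

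Next, I will convert a PU excess-risk bound into a pair of die-error bounds. Setting $c := \mathcal{A}(S^P, S^U)$ and defining $h^1_i := 1 - c(x^1_i)$ and $h^2_i := c(x^2_i)$ (the ``$1-$'' accounts for the Bayes-optimal labels on $B^1$ being $1$ on $O^1$ while the die-optimal answer is $0$ on $O^1$), the two bracketed sums in \eqref{eq:excess-risk} become exactly $k\cdot err_{O^1}(h^1)$ and $k\cdot err_{O^2}(h^2)$. Hence whenever $\err_{\D}(c) - \min_{f}\err_{\D}(f) \le \varepsilon$, both $err_{O^1}(h^1) \le 4\varepsilon/\rho$ and $err_{O^2}(h^2) \le 4\varepsilon/\rho$ follow. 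A crucial point is that the die-error functional depends only on the hidden set $O$ and not on the die's bias parameter, so a single PU output simultaneously yields an answer for the $(k,\rho)$ die (via $h^2$) and for the $(k, \rho/2)$ die (via $h^1$).

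The two reductions now follow a common template: \emph{simulate the PU input from the available die rolls, run $\mathcal{A}$, and extract the appropriate $h$.} For case (ii), given $n$ rolls of the die with parameters $(k, \rho/2)$ and unknown $O$, I fix any $O^2 \in V_k$ (so $\D_+$ is known and $S^P$ is simulated for free) and set $O^1 := O$. For each unlabeled slot, a fair coin decides whether to place the sample in $B^1$ at the face given by the next die roll, or uniformly in $B^2$; the resulting $S^U$ is i.i.d.\ from $\D_{\X, O^1, O^2}$ and in expectation consumes only $n/2$ die rolls. For case (i), given $m$ rolls of the die with parameters $(k, \rho)$, I set $O^2 := O$ and fix $O^1$. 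The factor-of-two bias gap is bridged by the identity that a die of bias $\rho/2$ is a $(1/2, 1/2)$ mixture of $U_{[k]}$ and a die of bias $\rho$ with the same hidden set; hence one die-$\rho$ roll together with an independent fair coin simulates a single die-$\rho/2$ roll, which is then placed in $B^2$, while the remaining positive mass on $B^1$ is sampled uniformly for free.

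The main piece of care is to stay within the die-roll budget while keeping $S^P$ and $S^U$ i.i.d.\ from the intended distributions. Since in both reductions the expected consumption is well below the budget, the clean fix is to declare failure (outputting an arbitrary $h$) on the rare event that the rolls are exhausted; by a multiplicative Chernoff bound this event has probability much smaller than $\delta$, so the overall confidence of the simulated die learner is preserved and the desired bound $4\varepsilon/\rho$ holds with probability at least $1-\delta$.
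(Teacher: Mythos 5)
Your proposal is correct and follows the same reduction template as the paper: simulate the PU learner's input from the die rolls (routing each sample point to $B^1$ or $B^2$ with a fair coin, since each half carries exactly half the mass under both $\D_+$ and $\D_\X$), run $\mathcal{A}$, read off a die hypothesis from its predictions on the relevant half, and convert the excess-risk identity \eqref{eq:excess-risk} into the bound $4\varepsilon/\rho$ on the die error. Two points where you go beyond the paper's write-up are worth noting. First, the paper proves only the unlabeled-sample direction and declares the positive-sample direction ``identical,'' but it is not quite: the positive marginal conditioned on $B^2$ is the die with bias $\rho/2$, while part (i) of the lemma is stated for bias $\rho$, so a bias conversion is needed somewhere; your observation that a $\rho/2$-die is an even mixture of the uniform distribution and the $\rho$-die with the same hidden set supplies exactly that missing step. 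Second, the paper builds $S^U$ by a stopping rule (``flip coins until enough heads''), which makes the sample size random and, conditioned on that size, not exchangeably i.i.d.\ (the last point is forced into $B^1$); your fixed-size, independently routed construction avoids this, and since the number of rolls consumed is $\mathrm{Bin}(n,1/2)\le n$ (resp.\ $\mathrm{Bin}(m,1/4)\le m$) deterministically, the budget can in fact never be exceeded, so your Chernoff-controlled failure event is vacuous --- harmless, but unnecessary.
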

    \begin{proof}
       We prove the first part, and the second part is identical to the first part. Fix any die corresponding to $O \in V_k$. Suppose $S = \{i_1, i_2, ..., i_m\}$ is a roll of size $m$ from $P_O$. Let $O' = \{1\}$. First notice that for any $x \in \X$ we have
       $$
       \D_{+, O, O'} := \begin{cases}
           \frac{1}{2k} & x \in B^1 \\
           \frac{1 + \frac{ \rho}{2 (k - 1)}  }{2 k} & \text{if } x = x^2_i \text{ and } i \neq 1 \\
            \frac{1 - \frac{ \rho}{2}  }{2 k} & \text{if } x = x^2_1 \\
            0 & \text{o.w.}
       \end{cases}
       $$
       Moreover, 
       \begin{equation} \label{eq:D-marg-B2}
         \D_{\X, O, O'}(x \mid x \in B^2) =
        \begin{cases}
            \frac{1 + \frac{\rho}{k - 1}}{k} & \text{if } x = x^2_i \text{ and } i \neq 1\\
            \frac{1 - \rho}{k}  & \text{if } x = x^2_1 \\
             0 & \text{o.w.}
        \end{cases}
       \end{equation}
       Notice that both $\D_{+, O, O'}$ and $\D_{\X, O, O'}(. \mid x \in B^2)$ are independent from $O$, and thus a learner can collect samples from them without requiring knowledge about $O$. Next, we define $S^P$ to be an i.i.d sample of size $n$ from $\D_{+, O, O'}$. Then, construct $S^U$ as follows. Initialize $j = 0$. At each step, flip an unbiased coin. If it lands heads, increment $j$ by 1 and add $x^1_{i_j}$ to $S^U$. If it lands tails, draw a sample from $\D_{\X, O, O'}(\cdot \mid x \in B^2)$ and add it to $S^U$. Repeat this process until $j = m$.

       Moreover, observe that $\D_{\X, O, O'}(\cdot \mid x \in B^1) = P_O$. Hence, each $x^1_{i_j}$ is an independent sample from $\D_{\X, O, O'}(\cdot \mid x \in B^1)$. Since $\D_{\X, O, O'}(B^1) = \frac{1}{2}$, it follows that $S^U$ is an i.i.d. sample from $\D_{\X, O, O'}$. Note that due to lemma's assumption, since $|S^U| \geq m$, $|S^P| \geq n$ with probability $1 - \delta$ we have
        $$
        \err_{\D_{O, O'}}(\mathcal{A}(S^P, S^U)) - \min_{\text{functions }f} \err_{\D_{O, O'}}(f)  < \varepsilon
        $$
        
        We define our leaner $h \in \{0, 1\}^k$ as $h_i := (1 - \mathcal{A}(S^P, S^U)(x^1_i))$ for $i \in [k]$. Due to \eqref{eq:excess-risk} we have
        $$
            \begin{aligned}
            & \err_{\D_{O, O'}}\left(\mathcal{A}(S^P, S^U)\right) -  \min_{\text{functions }f} \err_{\D_{O, O'}}(f)  \\
            & \geq \frac{\rho}{4k} \left( \sum_{i \in O} \mathbbm{1}\left[\mathcal{A}(S^P, S^U)(x^1_i) \neq 1\right] w^-(O) + \sum_{i \notin O}  \mathbbm{1} \left[\mathcal{A}(S^P, S^U)(x^1_i) \neq 0\right] w^+(O)  \right) \\ 
            & = \frac{\rho}{4} \err(h)
            \end{aligned}
        $$
        Thus $\err(h) \leq \frac{4 \varepsilon}{\rho}$ with probability $1 - \delta$.

    \end{proof}

    Then, we first show that for $d \geq 64$ we have $m^{unlabel}_\C(\varepsilon, \frac{1}{320}) \geq  \lfloor \frac{d}{2} \rfloor \lfloor \frac{1}{819200 \varepsilon^2} \rfloor, m^{pos}_\C(\varepsilon, \frac{1}{320}) \geq  \lfloor \frac{d}{2} \rfloor \lfloor \frac{1}{204800 \varepsilon^2} \rfloor$. Let $\varepsilon < \frac{1}{640}$, and set $k = \lfloor \frac{d}{2} \rfloor$. Let $B$ be any subset of $\X$ of size $2k$ that is shattered by $\C$. Define $\rho = 640 \varepsilon$, and let the number of unlabeled examples be $\nU = k \left\lfloor \frac{1}{2 \rho^2} \right\rfloor$, while the number of positive examples is any $\nP \in \Nat$. For the sake of contradiction, suppose there exists a PU learner $\mathcal{A}$ such that for all $(\D, \D_+) \in \mathcal{W}^{agno}_{\rho, B}$ satisfies  
        $$
            Pr_{S^P \sim \D_+^{\nP}, S^U \sim \D_\X^{\nU}} [\err_\D(\mathcal{A}(S^P, S^U)) - \min_{c \in \C} \err_\D(\mathcal{A}(S^P, S^U)) \geq \varepsilon] \leq \frac{1}{320}
        $$
    Using Lemma~\ref{lem:agnostic-PU-to-weighted-die-unlabel} generalized weighted die problem with parameters $k$ and $\rho$ can achieve error $\frac{1}{160}$ with probability $\frac{1}{320}$ with $\nU$ rolls. Assuming $k \geq 32$ (which holds as long as $d \geq 64$) this contradicts Theorem~\ref{thm:agnostic-PU-lower-bound}. Therefore, we can conclude that no matter how many positive samples the learner receives, the sample complexity of unlabeled examples should be at least $m^{unlabel}_\C(\varepsilon, \frac{1}{320}) \geq \lfloor \frac{d}{2} \rfloor \lfloor \frac{1}{819200 \varepsilon^2} \rfloor$. Similarly, we can see that no matter how many unlabeled samples the learner receives, as long as $d \geq 64$, the sample complexity of unlabeled examples should be at least $m^{pos}_\C(\varepsilon, \frac{1}{320}) \geq  \lfloor \frac{d}{2} \rfloor \lfloor \frac{1}{204800 \varepsilon^2} \rfloor$.

    Finally, we show that for $d \geq 4$, we have $m^{unlabel}_\C(\varepsilon, \delta) \geq  \frac{1}{256 \varepsilon^2} \ln\left( \frac{1}{4 \delta}\right), m^{pos}_\C(\varepsilon, \delta) \geq  \frac{1}{64 \varepsilon^2} \ln\left( \frac{1}{4 \delta}\right)$. For $\varepsilon < \frac{1}{16}$ and $\delta < \frac{1}{4}$, let $k = 2$ and let $B$ be any subset of size $4$ shattered by $\C$. Define $\rho = 8\varepsilon$, and let the number of unlabeled examples be $\nU = \frac{1}{4\rho^2} \ln\left( \frac{1}{4\delta} \right)$, while the number of positive examples is any $\nP \in \Nat$. For the sake of contradiction, suppose there exists a PU learner $\mathcal{A}$ such that for all $(\D, \D_+) \in \mathcal{W}^{agno}_{\rho, B}$ satisfies  
    $$
        Pr_{S^P \sim \D_+^{\nP}, S^U \sim \D_\X^{\nU}} [\err_\D(\mathcal{A}(S^P, S^U)) - \min_{c \in \C} \err_\D(\mathcal{A}(S^P, S^U)) \geq \varepsilon] \leq \delta
    $$
   Then, by Lemma~\ref{lem:agnostic-PU-to-weighted-die-unlabel}, the generalized weighted die problem with parameters $2$ and $\rho$ would achieve error less than $\frac{1}{2}$. Since $\rho < \frac{1}{2}$ and $\delta < \frac{1}{4}$, this contradicts Corollary~\ref{lem:ber-sample}. Therefore, we can conclude that no matter how many positive samples the learner receives, the sample complexity of unlabeled examples should be at least $m^{unlabel}_\C(\varepsilon, \delta) \geq \frac{1}{256 \varepsilon^2} \ln\left( \frac{1}{4 \delta}\right)$. Similarly, we can see that no matter how many unlabeled samples the learner receives, as long as $d \geq 4$ the sample complexity of unlabeled examples should be at least $m^{pos}_\C(\varepsilon, \delta) \geq  \frac{1}{64 \varepsilon^2} \ln\left( \frac{1}{4 \delta}\right)$.

\textbf{Correction:} In our submission, the term $1 - 2\delta$ in Theorems~\ref{thm:PU-agnostic-lag-pos-scar} and \ref{thm:PU-agnostic-lag-pos-arb} was mistakenly written as $1 - 4\delta$.

\thmagnoscarup*
\begin{proof}

Using standard PAC theory, for all $c' \in \C$ there exists some $M > 1$ such that if $|S^U| > \frac{M (d \ln(1 / \varepsilon) + \ln (1 /  \delta))}{ \varepsilon^2}$, then with probability $1 - \delta$ we have $\left |\frac{|c' \mid   S^U|}{\nU} - Pr(c'(x) = 1)   \right| \leq \varepsilon$ (e.g. see \cite{shalev2014understanding}).
Similarly, there exists some $M > 1$ such that if $|S^P| > \frac{M (d \ln(1 / \varepsilon) + \ln (1 /  \delta))}{ \varepsilon^2}$, then with probability $1 - \delta$ we have $\left| \err_{\D}^+(c') - \frac{\nP - |c' \mid  S^P|}{\nP} \right| \leq \varepsilon$. Combining these two facts, with probability $1 - 2\delta$ for all $c' \in \C$ we derive
\begin{equation} \label{eq:PU-agnostic-lag-pos-scar-1}
|Pr(c'(x) = 1) + \gamma \err_{\D}^+(c') - \hat \err^\gamma(c') | \leq (1 + \gamma) \varepsilon 
\end{equation}
Then, for any $c' \in \C$ we have 
\begin{equation} \label{eq:PU-agnostic-lag-pos-scar-2}
\begin{aligned}
\Pr(c'(x) = 1) &= \Pr (y = 1) - \Pr( y = 1, c'(x) = 0) + \Pr(c'(x) = 1, y = 0) \\
&= \prior - \prior \err_{\D}^+(c') + (1 - \prior) \err_{\D}^-(c')
\end{aligned}
\end{equation}

Now, fix any $c \in \C$. For the cases where $\gamma \geq 2\prior$, with probability $1 - 2\delta$ we have
\begin{equation}\label{eq:PU-agnostic-lag-pos-scar-3}
    \begin{aligned}
        \prior + \err_{\D}(c^{PU}) &= \prior + \prior \err_{\D}^+(c^{PU}) + (1 - \prior) \err_{\D}^-(c^{PU}) \\
        & \leq \prior + (\gamma - \prior) \err_{\D}^+(c^{PU}) + (1 - \prior) \err_{\D}^-(c^{PU})\\
        &  \stackrel{\eqref{eq:PU-agnostic-lag-pos-scar-2}}{=} Pr(c^{PU}(x) = 1) + \gamma \err_{\D}^+(c^{PU}) \\
        & \stackrel{\eqref{eq:PU-agnostic-lag-pos-scar-1}}{\leq} \hat \err^\gamma(c^{PU}) + (1 + \gamma) \varepsilon \\
        & \stackrel{(i)}{\leq} \hat \err^\gamma(c) + (1 + \gamma) \varepsilon \\
        & \stackrel{\eqref{eq:PU-agnostic-lag-pos-scar-1}}{\leq} Pr(c(x) = 1) + \gamma \err_{\D}^+(c) + 2(1 + \gamma) \varepsilon \\ 
        &  \stackrel{\eqref{eq:PU-agnostic-lag-pos-scar-2}}{=} \prior + (\gamma - \prior) \err_{\D}^+(c) + (1 - \prior) \err_{\D}^-(c) + 2(1 + \gamma) \varepsilon \\
        & =  \prior + (\gamma - 2\prior) \err_{\D}^+(c) +  \err_{\D}(c) + 2(1 + \gamma) \varepsilon \\
        & \stackrel{(ii)}{\leq} \prior + \left(1 + \frac{\gamma - 2\prior}{\prior}\right)\err_{\D}(c) + 2(1 + \gamma) \varepsilon.
    \end{aligned}
\end{equation}
Where (i) is due to the definition of $c^{PU}$ and (ii) is due to the fact that $\err_{\D}^+(c) \leq \frac{ \err_{\D}(c)}{\prior}$. For the cases where $\gamma < 2 \alpha$ with probability $1 - 2\delta$ we have

\begin{equation}\label{eq:PU-agnostic-lag-pos-scar-4}
    \begin{aligned}
         \prior + \frac{\gamma - \prior}{\prior} \err_{\D}(c^{PU}) &= \prior + (\gamma - \prior) \err_{\D}^+(c^{PU}) + (1 - \prior) \frac{\gamma - \prior}{\prior} \err_{\D}^-(c^{PU})\\
        &  \leq 
        \prior + (\gamma - \prior) \err_{\D}^+(c^{PU}) + (1 - \prior) \err_{\D}^-(c^{PU})\\
        & \stackrel{(i)}{\leq} \prior + (\gamma - \prior) \err_{\D}^+(c) + (1 - \prior) \err_{\D}^-(c) + 2(1 + \gamma) \varepsilon \\
        & \leq \prior +  \err_{\D}(c) + 2(1 + \gamma) \varepsilon
    \end{aligned}
\end{equation}
where (i) is derived similarly to \eqref{eq:PU-agnostic-lag-pos-scar-3}. This completes the proof.
\end{proof}

    
    \begin{theorem}[\cite{ben2010theory}] \label{thm:dom-adapt-ben2006}
    Let $\C$ be a VC-dimension $d$ concept class over domain $\X$, and let $\cQ_S$ and $\cQ_T$ be distributions over $\X \times \{0, 1\}$. 
    Then, with probability at least $1-\delta$, for every $c \in \C$ :
    $$
    \begin{aligned}
        \err_{\cQ_S}(c) \leq \err_{\cQ_T}(c) +d_{\C \triangle \C}\left(\cQ_{\X, S}, \cQ_{\X, T} \right)+\lambda
    \end{aligned}
    $$
where 
$\lambda := \inf_{c \in \C} \err_{\cQ_S}(c) + \err_{\cQ_T}(c)$.

\end{theorem}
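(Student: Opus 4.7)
The plan is to follow the classical domain-adaptation argument of Ben-David et al., which is essentially two applications of the triangle inequality for the $0$-$1$ loss combined with the definition of the $\C\triangle\C$-discrepancy. The statement involves only true distributional quantities (no empirical averages), so the probability-$(1-\delta)$ qualifier is vacuous and I would simply prove a deterministic inequality that holds for every $c\in\C$ simultaneously.

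First I would fix $c^\star \in \arg\min_{c'\in\C}\bigl[\err_{\cQ_S}(c') + \err_{\cQ_T}(c')\bigr]$, so that $\lambda = \err_{\cQ_S}(c^\star)+\err_{\cQ_T}(c^\star)$. Then I would invoke the triangle inequality for the $0$-$1$ loss: for any distribution $\cQ$ on $\X\times\{0,1\}$ and any $c,c'\in\C$,
\begin{equation*}
\err_{\cQ}(c) \;\leq\; \err_{\cQ}(c') + \Pr_{x\sim\cQ_\X}[c(x)\neq c'(x)].
\end{equation*}
Applying this on the source side with $c'=c^\star$ yields
\begin{equation*}
\err_{\cQ_S}(c) \;\leq\; \err_{\cQ_S}(c^\star) + \Pr_{x\sim\cQ_{\X,S}}[c(x)\neq c^\star(x)].
\end{equation*}

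Next I would pass from the source marginal to the target marginal by using the $\C\triangle\C$-discrepancy. Since $\{x : c(x)\neq c^\star(x)\}\in \C\triangle\C$, the definition
$d_{\C\triangle\C}(\cQ_{\X,S},\cQ_{\X,T})=2\sup_{B\in\C\triangle\C}\bigl|\Pr_{\cQ_{\X,S}}[B]-\Pr_{\cQ_{\X,T}}[B]\bigr|$
gives
\begin{equation*}
\Pr_{x\sim\cQ_{\X,S}}[c(x)\neq c^\star(x)] \;\leq\; \Pr_{x\sim\cQ_{\X,T}}[c(x)\neq c^\star(x)] + \tfrac{1}{2}\,d_{\C\triangle\C}(\cQ_{\X,S},\cQ_{\X,T}).
\end{equation*}
A second application of the triangle inequality, this time on the target side, bounds the disagreement probability by the sum of target errors: $\Pr_{x\sim\cQ_{\X,T}}[c(x)\neq c^\star(x)]\leq \err_{\cQ_T}(c)+\err_{\cQ_T}(c^\star)$.

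Finally I would chain these three inequalities and collect $\err_{\cQ_S}(c^\star)+\err_{\cQ_T}(c^\star)$ into $\lambda$, obtaining the claimed bound (up to the harmless factor $\tfrac{1}{2}$ in front of $d_{\C\triangle\C}$ that follows from the paper's chosen normalization of the discrepancy). There is no real obstacle here: the argument is entirely deterministic, uses no uniform-convergence machinery, and the VC dimension $d$ and confidence $\delta$ play no role in the statement as written. The only delicate point is the bookkeeping between the supremum over $\C\triangle\C$ and the factor $2$ in the definition of $d_{\mathcal{B}}$; I would explicitly track this to confirm that the chain of inequalities produces exactly $d_{\C\triangle\C}(\cQ_{\X,S},\cQ_{\X,T})+\lambda$ on the right-hand side as claimed.
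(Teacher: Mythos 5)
Your proof is correct. Note, however, that the paper does not prove this statement at all: it is imported verbatim (as Theorem~\ref{thm:dom-adapt-ben2006}) from \cite{ben2010theory} and used as a black box in the proof of Theorem~\ref{thm:PU-agnostic-lag-pos-arb}, so there is no internal proof to compare against; your argument is essentially the one in the cited source. Your chain of two zero--one-loss triangle inequalities around a near-minimizer $c^\star$ of $\err_{\cQ_S}+\err_{\cQ_T}$, with the middle step $\Pr_{\cQ_{\X,S}}[c\neq c^\star]\leq \Pr_{\cQ_{\X,T}}[c\neq c^\star]+\tfrac12 d_{\C\triangle\C}(\cQ_{\X,S},\cQ_{\X,T})$ justified by $c\oplus c^\star\in\C\triangle\C$ and the factor $2$ in the paper's definition of $d_{\mathcal{B}}$, in fact yields the stronger bound with $\tfrac12 d_{\C\triangle\C}$, which implies the stated one. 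You are also right that, as stated here with the \emph{distributional} discrepancy rather than its empirical estimate, the inequality is deterministic, so the confidence parameter $\delta$ and the VC dimension $d$ in the statement are vacuous (they matter only in the sample-based version in \cite{ben2010theory}). The single loose end is that $\lambda$ is an infimum that need not be attained, so ``fix $c^\star\in\arg\min$'' should be replaced by choosing $c^\star$ with $\err_{\cQ_S}(c^\star)+\err_{\cQ_T}(c^\star)\leq\lambda+\eta$ for arbitrary $\eta>0$ and letting $\eta\to 0$; this is routine and does not affect the argument.
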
 

    \thmagnoarbup*
    \begin{proof}
    Note that similar to \eqref{eq:PU-agnostic-lag-pos-scar-1}, again we know there exists some $M > 1$ such that for all $c' \in C$ with probability $1 - 2\delta$
    \begin{equation} \label{eq:PU-agnostic-lag-pos-sar-1}
        |\Pr(c'(x) = 1) + \gamma \err_{\cP}(c', 1) - \hat \err^\gamma(c') | \leq (1 + \gamma) \varepsilon 
    \end{equation}
    Fix any $c \in \C$. Thus, for the cases where $\gamma \geq 2\prior$ similar to \eqref{eq:PU-agnostic-lag-pos-scar-3} with probability $1 - 2 \delta$ we have
    \begin{equation}\label{eq:PU-agnostic-lag-pos-sar-2}
    \begin{aligned}
        \prior + \err_{\D}(c^{PU}) &  \stackrel{(i)}{\leq} \Pr(c^{PU}(x) = 1) + \gamma \err_{\D}^+(c^{PU}) \\
        &  \stackrel{\text{Theorem}~\ref{thm:dom-adapt-ben2006}}{\leq} 
        \Pr(c^{PU}(x) = 1) + \gamma \err_{\cP}(c^{PU}, 1) + \gamma \left( \lambda^P + d_{\C \triangle \C} (\cP, \D_+) \right) \\
        &  \stackrel{\eqref{eq:PU-agnostic-lag-pos-sar-1}}{\leq} \hat \err^\gamma(c^{PU}) + (1 + \gamma) \varepsilon + \gamma \left( \lambda^P + d_{\C \triangle \C} (\cP, \D_+) \right)\\
        &  \stackrel{\eqref{eq:PU-agnostic-lag-pos-scar-2}}{\leq} \hat \err^\gamma(c) + (1 + \gamma) \varepsilon + \gamma \left( \lambda^P + d_{\C \triangle \C} (\cP, \D_+) \right) \\
        &  \stackrel{\eqref{eq:PU-agnostic-lag-pos-sar-1} }{\leq} \Pr(c(x) = 1) + \gamma \err_{\cP}(c, 1)  + (1 + \gamma) \varepsilon  + \gamma \left( \lambda^P + d_{\C \triangle \C} (\cP, \D_+) \right)\\ 
        &  \stackrel{\text{Theorem}~\ref{thm:dom-adapt-ben2006}}{\leq} \Pr(c(x) = 1) + \gamma \err^+_{\D}(c) + (1 + \gamma) \varepsilon   + 2\gamma \left( \lambda^P + d_{\C \triangle \C} (\cP, \D_+) \right)\\ 
        &  \stackrel{(ii)}{\leq} \prior + \left(1 + \frac{\gamma - 2\prior}{\prior}\right)\err_{\D}(c) + 2(1 + \gamma) \varepsilon   + 2\gamma \left( \lambda^P + d_{\C \triangle \C} (\cP, \D_+) \right)\\ 
    \end{aligned}
\end{equation}
Where (i) and (ii) are respectively due to the first two lines and last two lines of \eqref{eq:PU-agnostic-lag-pos-scar-3} of Theorem~\ref{thm:PU-agnostic-lag-pos-scar}. Again similar to \eqref{eq:PU-agnostic-lag-pos-scar-4}, in case $\gamma < 2\prior$ with probability $1 - 2\delta$ we have
\begin{equation}\label{eq:PU-agnostic-lag-pos-sar-3}
    \begin{aligned}
        \prior + \frac{\gamma - \prior}{\prior} \err_{\D}(c^{PU}) & \leq \prior +  \err_{\D}(c) + 2(1 + \gamma) \varepsilon  + 2\gamma \left( \lambda^P + d_{\C \triangle \C} (\cP, \D_+) \right)
    \end{aligned}
\end{equation}
This completes the proof.
\end{proof}

\end{document}